\newcommand{\ffrac}[2]{\ensuremath{\frac{\displaystyle #1}{\displaystyle #2}}}
\newtheorem{theorem}{Theorem}
\newtheorem{lemma}{Lemma}
\newtheorem{definition}{Definition}
\newtheorem{corollary}{Corollary}
\newtheorem{proposition}{Proposition}
\newtheorem{example}{Example}
\newcommand{\STAB}[1]{\begin{tabular}{@{}c@{}}#1\end{tabular}}
\theoremstyle{remark}
\newtheorem*{remark}{Remark}
\DeclarePairedDelimiter\floor{\lfloor}{\rfloor}
\icmltitlerunning{Hyperbolic Diffusion Embedding and Distance for Hierarchical Representation Learning}
\begin{document}

\twocolumn[
\icmltitle{Hyperbolic Diffusion Embedding and Distance for \\ Hierarchical Representation Learning}




\begin{icmlauthorlist}
\icmlauthor{Ya-Wei Eileen Lin}{comp}
\icmlauthor{Ronald R. Coifman}{sch}
\icmlauthor{Gal Mishne}{yyy}
\icmlauthor{Ronen Talmon}{comp}
\end{icmlauthorlist}

\icmlaffiliation{comp}{Viterbi Faculty of Electrical and Computer Engineering, Technion, Haifa, Israel}
\icmlaffiliation{yyy}{Halicio$\check{\text{g}}$lu Data Science Institute, University of California San Diego, La Jolla, CA, USA}
\icmlaffiliation{sch}{Department of Mathematics, Yale University, New Haven, CT, USA}

\icmlcorrespondingauthor{Ya-Wei Eileen Lin}{lin.ya-wei@campus.technion.ac.il}

\icmlkeywords{Hyperbolic geometry, Diffusion geometry, Riemannian geometry, Hierarchical graph representation, Hierarchical data representation, Heat kernel}
\vskip 0.3in
]



\printAffiliationsAndNotice{} 

\begin{abstract}
Finding meaningful representations and distances of hierarchical data is important in many fields. 
This paper presents a new method for hierarchical data embedding and distance. 
Our method relies on combining diffusion geometry, a central approach to manifold learning, and hyperbolic geometry. 
Specifically, using diffusion geometry, we build multi-scale densities on the data, aimed to reveal their hierarchical structure, and then embed them into a product of hyperbolic spaces. 
We show theoretically that our embedding and distance recover the underlying hierarchical structure. 
In addition, we demonstrate the efficacy of the proposed method and its advantages compared to existing methods on graph embedding benchmarks and hierarchical datasets.  
\end{abstract}


\section{Introduction}\label{sec:intro}

Hierarchical data is prevalent in many fields of applied science and engineering. Therefore, finding meaningful representations and distances of hierarchical data is an important scientific task.
Hyperbolic geometry provides a powerful tool for this purpose; due to their exponential growth, hyperbolic spaces can naturally represent data with tree-like structures \cite{sarkar2011low}. Indeed, an abundance of methods that involve hyperbolic geometry have been developed in recent years. Notable examples include optimization-based methods \cite{chamberlain2017neural, nickel2017poincare, nickel2018learning, chami2020trees} and combinatorial methods \cite{sala2018representation, sonthalia2020tree}, to mention just a few.
Such hyperbolic geometry-based methods have been successfully applied to central scientific tasks involving hierarchical data in a broad range of fields, e.g., natural language processing \cite{tifrea2018poincar}, social networks \cite{verbeek2014metric}, computer vision \cite{khrulkov2020hyperbolic}, information retrieval \cite{tay2018hyperbolic}, bioinformatics \cite{ding2021deep, lin2021hyperbolic}, and reinforcement learning \cite{cetin2022hyperbolic}.


Despite their growing utility, existing methods for hyperbolic representation learning suffer from several notable shortcomings. For example, methods 
based on optimization of some objective function do not necessarily guarantee optimal representation in terms of standard definitive quality metrics of tree-like geometries \cite{sonthalia2020tree} and could suffer from numerical instabilities \cite{mishne2022numerical}. In the context of this work, perhaps the most restrictive disadvantage of many methods for hyperbolic representation learning is that they require the tree graph or tree distance to be known in advance. However, often in practice, we are only given observational data without any prior information, and the underlying hierarchical structures need to be recovered from the ground up.

In this paper, we present a new method for hierarchical data embedding and distance recovery that provably reveals the underlying tree-like structure. 
In contrast to existing methods, our approach can also be applied to observational data without any prior knowledge of the underlying hierarchical structure.  
Our method builds on diffusion geometry \cite{coifman2006diffusion}, which is a mathematical framework that facilitates the analysis of high-dimensional data points by capturing their underlying geometric structures.
The basic idea behind diffusion geometry is to analyze the similarity between the data points through diffusion propagation. 
It is based on the construction of a diffusion operator from observations, which is tightly related to the heat kernel and the Laplacian of the underlying manifold. Diffusion geometry has been mainly used for manifold learning, giving rise to multi-scale low-dimensional representations and informative distances. In the past two decades, it has been shown useful in a large number of applications from a broad range of fields, for example, spectral clustering \cite{nadler2005diffusion}, signal processing \cite{talmon2013diffusion}, multi-view dimensionality reduction \cite{lindenbaum2020multi}, dynamical systems \cite{talmon2013empirical}, anomaly detection \cite{mishne2012multiscale}, datasets alignment \cite{shnitzer2022log}, variational autoencoders \cite{li2020variational}, graph analysis \cite{cheng2019diffusion}, hyperspectral image clustering \cite{murphy2019spectral}, and non-rigid shape recognition \cite{bronstein2010gromov}, to name but a few.

We propose an embedding of high-dimensional observations and the corresponding distance between observations that extract their underlying hierarchical structure. 
Our method conceptually consists of four steps.
First, we build a collection of diffusion operators at multiple scales, aiming to reveal the underlying structure of the data.
Second, we embed each data point using the diffusion operator at each scale into the Poincar\'{e} half-space, which is a particular hyperbolic space model. 
Third, for each data point, we consider the collection of its embedded points at all scales as a point in the product manifold of hyperbolic spaces, whose mutual relationships are designed to recover the hierarchical structure. We term this representation in the product manifold \emph{hyperbolic diffusion embedding} (HDE).
Last, we present the \emph{hyperbolic diffusion distance} (HDD), which naturally stems from the $\ell_1$ distance in the product manifold and the Riemannian distance of the hyperbolic space.
While existing methods of hyperbolic representation learning, e.g., \cite{nickel2017poincare, nickel2018learning}, embed each high-dimensional data point to a new point in hyperbolic space, our method embeds each data point to a new point in a product of hyperbolic spaces (i.e., a collection of points in hyperbolic spaces), thereby further using the hyperbolic geometry for recovering the hierarchical structure.

We posit that our approach is fundamentally different from existing work.
Specifically, our approach combines several components, e.g., diffusion and hyperbolic geometries, for the first time, to the best of our knowledge. 
At first glance, the combination might seem arbitrary, yet, we show theoretically that HDD is equivalent to the underlying hidden hierarchical distance and that each component has a critical contribution to this result. In addition, we demonstrate the applicability of HDD to hierarchical graph benchmarks, single-cell RNA-sequencing data, and unsupervised hierarchical metric learning tasks.
We show that HDD, compared to existing baselines and deep learning-based methods, achieves improved empirical results, both in terms of two standard quality measures of hierarchical representations and in terms of the accuracy of downstream classification. 
Furthermore, we demonstrate that HDD requires shorter run times than most of the competing methods.

Our main contributions are as follows. 
First, we present a new method for hierarchical data embedding and distance recovery. Our method can receive only observational data as input without prior knowledge.
It is purely data-driven, efficient, and theoretically grounded, and it does not rely on deep learning, optimization, or combinatorial considerations. 
Second, we propose to combine, for the first time to the best of our knowledge, hyperbolic and diffusion geometries. 
We exploit this combination and propose to embed data points through their diffusion operators to hyperbolic spaces in order to build a meaningful multi-scale distance metric.  
Multi-scale distance metrics have been explored in the past using functions defined on Haar wavelet bases \cite{gavish2010multiscale}, partition trees \cite{mishne2016hierarchical,mishne2017data}, and dual manifolds \cite{mishne2019co}.
Here, we show that using hyperbolic spaces, our multi-scale metrics are capable of taking into account multiple scales of the observational data, from the finest to the coarsest, which plays a key role in the recovery of the hierarchical structure.
Third, we showcase improved performance compared to leading recent baselines on several benchmarks, demonstrating accurate and efficient hierarchical structure extraction.

\section{Background}\label{sec:background} 


\paragraph{Diffusion Geometry.}
Diffusion geometry  \cite{coifman2006diffusion} is a framework for high-dimensional data analysis. 
It is based on revealing similarities between the data points by constructing multi-scale ``diffusion'' processes. 
Under the manifold assumption \cite{fefferman2016testing}, i.e., assuming that the high-dimensional data lie on a low-dimensional manifold, diffusion geometry facilitates the recovery of the underlying manifold. 
Below, we outline the main steps of the construction of diffusion geometry that we utilize in our methods.

Let $\mathcal{X} = \{\mathbf{x}_i\}_{i=1}^n$ be a set of data points in an ambient space $\mathbb{R}^m$ that lie on a hidden manifold. 
Let $\mathbf{W}$ be a pairwise affinity matrix, given by 
\begin{equation}
    \mathbf{W}(i,i') = \exp\left( -d^2 (i, i')/\epsilon\right),
    \label{eq:gaussian_kernel}
\end{equation}
where $d(\cdot, \cdot)$ represents a suitable distance between the data points $\mathbf{x}_i$ and $\mathbf{x}_{i'}$, and $\epsilon$ is a tunable kernel scale parameter, which in practice is often set as the median of distances multiplied by a constant or adjusted according to the nearest neighbors \cite{zelnik2004self, keller2009audio, ding2020impact}.
The set $\mathcal{X}$ and the affinity matrix $\mathbf{W}$ form an undirected weighted graph, where $\mathcal{X}$ is the node set and $\mathbf{W}$ is the edges' weight matrix.
By normalizing the affinity matrix twice as follows 
\begingroup
\allowdisplaybreaks
\begin{align}
    \widetilde{\mathbf{W}} &= \mathbf{S}^{-1}\mathbf{W}\mathbf{S}^{-1}, &\mathbf{S}(i,i) &=\textstyle  \sum_j \mathbf{W}(i,j), \label{eq:nor_aff_mat}\\
    \mathbf{P} &= \mathbf{D}^{-1}\widetilde{\mathbf{W}}, &\mathbf{D}(i,i) &= \textstyle \sum_j \widetilde{\mathbf{W}}(i,j), \label{eq:diff_mat}
\end{align}
    \endgroup  
the resulting matrix $\mathbf{P}$ is viewed as a transition probability matrix of a Markov chain defined on the graph. 

The matrix $\mathbf{P}$ is termed \textit{diffusion operator} since it can be used to propagate mass between nodes on the graph.
Let $\mathbf{p}^t_i=\mathbf{P}^t e_i \in \mathbb{R}^n$ be the density on the graph after $t$ diffusion propagation steps, where $e_i \in \mathbb{R}^n$ is the indicator vector of the $i$-th node, and $t\in(0,1]$ is the diffusion time.
Note that by definition, $\mathbf{p}^t_i$ is a well-defined discrete distribution on the graph because $\mathbf{p}^t_i(j) \ge 0$  for any $j=1,\ldots,n$, and $\sum_{j=1}^n \mathbf{p}^t_i(j) = 1$. 
Note that considering multiple diffusion times $t$ gives rise to a collection of multi-scale densities for each data point. 
This construction provides a family of multi-scale distances and embeddings, called diffusion distance and diffusion maps, respectively (see Appendix \ref{app:more_background} for more details).



The diffusion operator recovers the underlying manifold in the following sense  \cite{coifman2006diffusion}. 
In the limit $n\rightarrow \infty$ and $\epsilon \rightarrow 0$, the operator $\mathbf{P}^{t/\epsilon}$ converges to the Neumann heat kernel of the underlying manifold, given by $\mathbf{H}_t = \exp(-t \Delta)$, where $\Delta$ is the Laplace–Beltrami operator on the manifold.  
In other words, the diffusion operator (matrix) is a discrete approximation of the heat kernel on the manifold.



\paragraph{Poincar\'{e} Half-Space Model of Hyperbolic Space.}
Hyperbolic geometry is a non-Euclidean geometry with constant negative curvature. 
In this work, we consider the $n$-dimensional Poincar\'{e} half-space model of hyperbolic space with curvature $-1$ \cite{beardon2012geometry}. 
It is defined by $\mathbb{H}^n = \{ \mathbf{x} \in\mathbb{R}^n \big| \mathbf{x}(n)>0 \}$ with the Riemannian metric tensor $ds^2 =  \frac{d\mathbf{x}^2(1) + d\mathbf{x}^2(2) + \ldots +  d\mathbf{x}^2(n)}{\mathbf{x}^2(n)}$.
Given two points $\bm{x}, \bm{y}\in\mathbb{H}^n$, the Riemannian distance is computed by 
\begin{equation}
    d_{\mathbb{H}^n} (\mathbf{x}, \mathbf{y}) = 2 \sinh^{-1} \left(\frac{\left\lVert\mathbf{x} - \mathbf{y}\right\rVert_2}{2 \sqrt{\mathbf{x}(n)\mathbf{y}(n)}}\right), 
    \label{eq:hyperbolic_geodesic}
\end{equation}
where $\left\lVert\cdot\right\rVert_2$ is the Euclidean norm. 

\paragraph{Product Manifolds and Distances.}
Product manifolds \cite{turaga2016riemannian} provide a product space for mixed curvature representation learning \cite{gu2018learning, Skopek2020Mixedcurvature}. 
Consider a set of Riemannian manifolds denoted by $\{(\mathcal{M}_l,g_l)\}_{l=1}^L$. 
The product manifold is defined by the Cartesian product
$
    \mathcal{M} = \mathcal{M}_1 \times \mathcal{M}_2 \times  \ldots \times \mathcal{M}_L
$, whose dimension is the sum of the dimensions of the factor manifolds $\mathcal{M}_l$.
The product manifold $\mathcal{M}$ is equipped with the Riemannian metric tensor $g = \sum_l g_l$ \cite{ficken1939riemannian}. 
Different distances can be considered in $\mathcal{M}$. In this work, we use the $\ell_1$ distance and show that this choice enables the recovery of the underlying hierarchical structure. The $\ell_1$ is defined by 
\begin{equation}
    d_\mathcal{M}^{\ell_1}(x,y) = \sum_{l=1}^L d_{\mathcal{M}_i}(x^l,y^l), 
    \label{eq:product_distance}
\end{equation}
where $x=(x^1,\ldots,x^L), y=(y^1,\ldots,y^L) \in \mathcal{M}$ such that $x^{l},y^{l} \in \mathcal{M}_i$, and $d_{\mathcal{M}_l}$ is the geodesic distance on $\mathcal{M}_l$. 






\section{Problem Formulation}\label{sec:HDD_formulation}

\begin{figure*}[t]
	\centering
        \includegraphics[width=0.98\textwidth]{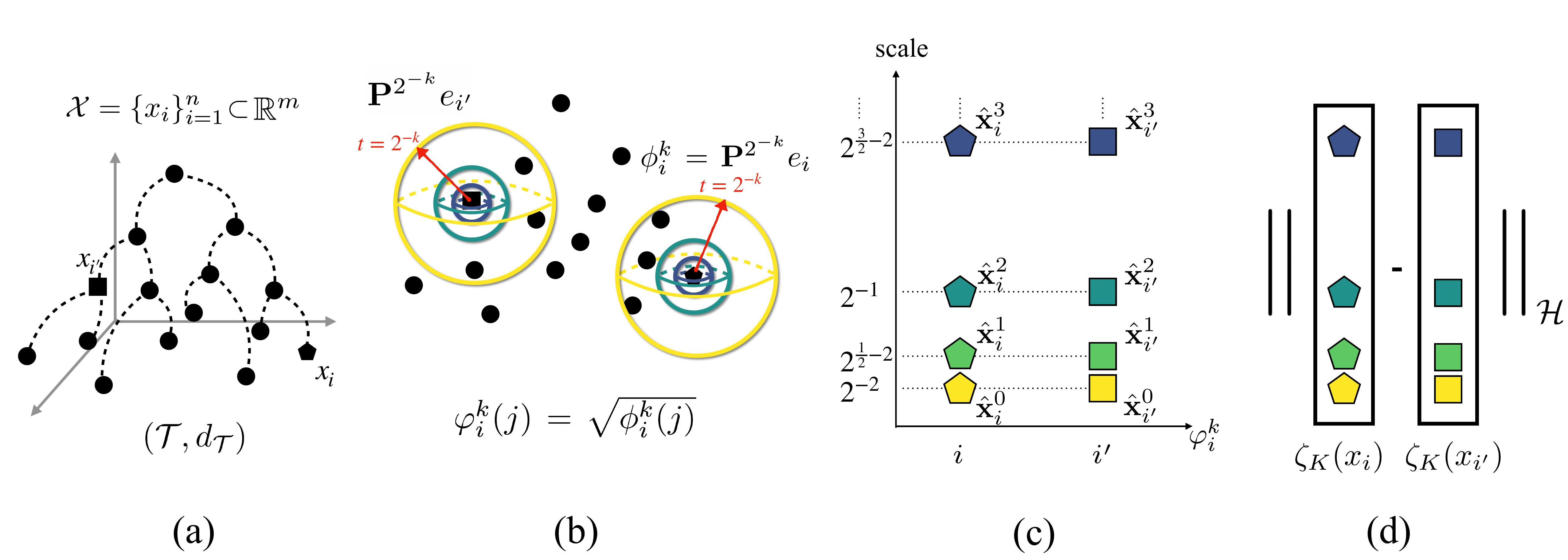}
        \caption{An illustration of HDE and HDD.
        (a) Given a dataset $\mathcal{X}=\{x_i\}_{i=1}^n\subset\mathbb{R}^m$ with an underlying tree-like structure.
        (b) Build a diffusion operator $\mathbf{P}$ by connecting neighboring points. Consider multiple scales of the operator on a dyadic grid $\{\mathbf{P}^{ 2^{-k}}\}_{k=0}^K$ for $K\in\mathbb{Z}_0^+$. For each point $x_i$, define a collection of propagated densities $\phi_i^k=\mathbf{P}^{2^{-k}}e_i$. 
        Here, two densities that are  propagated from two points $x_i$ and $x_{i'}$ are shown at various diffusion times on a dyadic grid.  
        The different scales are represented by using different colors. 
        (c) HDE is given by $\zeta_K:\mathcal{X}\rightarrow\mathcal{H}$, which maps a point $x_i$ into a product of hyperbolic spaces $\mathcal{H} = \mathbb{H}^{n+1} \times \ldots \times \mathbb{H}^{n+1}$ using $\{\varphi_i^k(j) = \sqrt{\phi_i^k(j)}\}_{k=0}^K$. 
        The resulting multi-scale HDE of the two points $x_i$ and $x_{i'}$ are presented, where the colors correspond to the colors in (b). Here, the square root of the propagated densities is shown on the x-axis, and the scales $2^{k\alpha-2}$ of the HDE are shown on the y-axis.
        (d) HDD is defined by the $\ell_1$ distance on the product of hyperbolic spaces} $\mathcal{H}$.
        \label{fig:diagram}
\end{figure*}

   The problem of learning hierarchical representations can be formulated in both graph embedding context, e.g., as in \cite{nickel2017poincare, sala2018representation, sonthalia2020tree}, and in hierarchical distance recovery context, e.g., as in \cite{dasgupta2016cost, klimovskaia2020poincare, chami2020trees, sonthalia2020tree, fang2021kernel}.
Their settings and goals are presented below. 

In the \textit{graph embedding} context, a tree graph $G=(\mathcal{X}, \mathcal{E}, \mathbf{W})$ is given, where $\mathcal{X}$ is the vertex set with $n$ nodes, $\mathcal{E}$ is the edge set with the weight matrix $\mathbf{W}$. 
In this case, the vertex set $\mathcal{X}$ can be viewed as a discrete subset of a hierarchical metric space $(\mathcal{T},d_{\mathcal{T}})$, where the hierarchical distance  $d_\mathcal{T}$ on the tree nodes $\mathcal{X}$ coincides with the shortest path distance on the graph.
Here, the objective is to find a node embedding into a metric space whose distance approximates the distance $d_{\mathcal{T}}$. 

In practice, we are often given solely observational data in a high-dimensional ambient space that is assumed to have a latent hierarchical structure that is not explicitly given. Therefore, our primary focus in this work is on the \textit{hierarchical distance recovery} context, where, given a set of $n$ data points $\mathcal{X}=\{\mathbf{x}_i\}_{i=1}^n$ , we view these as a node set of a hidden (tree) graph.
Typically, the points are embedded in some high-dimensional ambient Euclidean space, i.e., $\mathbf{x}_i \in \mathbb{R}^m$, and they have an underlying hierarchical structure. 
This is often formulated by assuming that the points lie in a hidden hierarchical metric space $(\mathcal{T}, d_\mathcal{T})$, where both the space $\mathcal{T}$ and the hierarchical distance $d_\mathcal{T}$ are inaccessible. 
Given the observations $\mathcal{X}$, the goal in hierarchical distance recovery is to find a distance that approximates the hidden hierarchical distance $d_\mathcal{T}$.


\section{Hyperbolic Diffusion Distance}\label{sec:proposed_method}

We begin by considering our primary problem setting of hierarchical distance recovery.
Let $\mathcal{X}=\{\mathbf{x}_i\}_{i=1}^n$ be the data set described above. We follow the construction of the diffusion geometry described in Section \ref{sec:background}. First, we build the matrix $\mathbf{W}$ based on a Gaussian kernel as in Eq.~\eqref{eq:gaussian_kernel}, where $d$ is a suitable distance in the ambient space $\mathbb{R}^m$. Then, the diffusion operator $\mathbf{P}$ is constructed according to Eqs. \eqref{eq:nor_aff_mat} and \eqref{eq:diff_mat}.
Note that this procedure implicitly constructs a graph $G=(\mathcal{X},\mathcal{E},\mathbf{W})$, where $\mathcal{X}$ is viewed as the vertex set and $\mathbf{W}$ is the weight matrix of the edges. Consequently, $\mathbf{P}$ is a transition probability matrix of a random walk on this graph \cite{coifman2006diffusion}.
We present an illustration of the high-dimensional data set $\mathcal{X}$ and the underlying hierarchical structure in Fig.~\ref{fig:diagram}(a). 

This graph viewpoint associates the hierarchical distance recovery and graph embedding contexts. Namely, in the case of graph embedding, the tree graph $G=(\mathcal{X}, \mathcal{E}, \mathbf{W})$ is given, and the diffusion operator $\mathbf{P}$ is defined as an approximation of the heat kernel $\mathbf{H} = \exp(-\mathbf{L})$, as described in Section \ref{sec:background}.
Therefore, from this point on, the two contexts coincide, and the proposed method is applicable to both settings. 


We consider a dyadic grid of diffusion times $t = 2^{-k}$ for $k\in\mathbb{Z}_0^+$. Fig.~\ref{fig:diagram}(b) displays the propagated densities at several diffusion times  $t = 2^{-k}$. For convenience, let $\phi_i^k,\varphi_i^k \in \mathbb{R}^n$ denote $\phi_i^k=\mathbf{P}^{2^{-k}}e_{i}$ and $\varphi_i^k(j) = \sqrt{\phi_i^k (j)}$ for $j=1,\ldots,n$.

First, we propose to embed the point $\mathbf{x}_i$ using the diffusion operator $\mathbf{P}^{2^{-k}}$, denoted by the pair $(i,k)$, into the Poincar\'{e} half-space model $\mathbb{H}^{n+1}$ by concatenating $\varphi_i^k$ with a function of the diffusion time $2^{-k}$ as follows: 
\begin{equation}
    (i,k) \mapsto \hat{\mathbf{x}}_i^k = [( \varphi_i^k)^\top, 2^{k\alpha-2}]^\top  \in     \mathbb{H}^{n+1}, 
    \label{eq:hyperbolic_emd}
\end{equation}
where $0<\alpha <1$ is a parameter that scales the diffusion propagation. 
We remark that the factor $2^{-2}$ added to the diffusion time in Eq.~\eqref{eq:hyperbolic_emd} is an important weight term in the heat kernel approximation, allowing to appropriately capture the local intrinsic association between the hierarchies at each diffusion time scale $t=2^{-k}$.
%
%
%
 
Next, we extend the embedding by considering multiple diffusion times $\{2^{-k}\}_{k=0}^K$  simultaneously, where $K\in\mathbb{Z}^+_0$ denotes the maximal scale. 
The embedding space is the product manifold $\mathcal{H} = \mathbb{H}^{n+1} \times \mathbb{H}^{n+1} \times \ldots \times \mathbb{H}^{n+1}$ of $(K+1)$ elements (i.e., $\mathcal{H}\subset\mathbb{R}^{(n+1)(K+1)}$), and the multi-scale HDE is a function $\zeta_K:\mathcal{X}\rightarrow\mathcal{H}$ defined by
\begin{equation}
    \zeta_K(x_i) = \left[(\hat{\mathbf{x}}_i^0)^\top, (\hat{\mathbf{x}}_i^1)^\top , (\hat{\mathbf{x}}_i^2)^\top  ,\ldots, ( \hat{\mathbf{x}}_i^K)^\top\right]^\top.
\end{equation}
Fig.~\ref{fig:diagram}(c) illustrates the multi-scale HDE of two points, $\zeta_K(x_i)$ and $\zeta_K(x_{i'})$, where $K=4$.
We used the Poincaré half-space model because of its natural representation of the diffusion time on a dyadic grid (i.e., $t=2^{-k}$ for $k\in\mathbb{Z}_0^+$) as well as its capability to represent multiple diffusion time scales simultaneously.

Note that considering such a multi-scale embedding significantly departs from the common practice. Existing methods of hierarchical representation learning in the Poincar\'{e} model, e.g., \cite{nickel2017poincare, chami2020trees}, learns the hyperbolic embedding by an optimization method that pushes points toward the boundary of the Poincar\'{e} model or views the data points as tree leaves and attempts to place their embedding close to the boundary of the Poincar\'{e} model.
In contrast, our method embeds each data point as a collection of points in hyperbolic spaces, thereby further exploiting the negative curvature of the space.
Specifically, by construction, when the diffusion time goes to zero, the diffused densities concentrate at single points, i.e., $\mathbf{P}^t e_i \overset{t\rightarrow 0}{\longrightarrow} e_i$. 
This implies that when the scale $k$ is large (blue in Fig.~\ref{fig:diagram}), the propagated densities $\phi_i^k$ provide a local view of the data, and by  Eq.~\eqref{eq:hyperbolic_emd}, the embedded points at scale $k$ are pushed toward the upper part of the Poinca\'{r}e half space, giving rise to a fine-scale distance. 
Conversely, when $k$ is small (yellow in Fig.~\ref{fig:diagram}), we have a coarse-grained view of the data, and in this case, the proposed embedding gives rise to exponentially scaled distances, which are consistent with the scaling of a tree distance. This argument is made formal in the following statement.
\begin{proposition}\label{prop:exp_growth}
{There is a constant $0<C<1$ such that for any $x_i, x_{i'}\in \mathcal{X}$ and $k_1 \leq k_2$,  $k_1, k_2 \in \mathbb{Z}^+_0$, we have
    \begin{equation}
        C\cdot 2^{-(k_2 - k_1)\alpha}
        \leq
        \frac{d_{\mathbb{H}^{n+1}}(\hat{\mathbf{x}}_i^{k_2},\hat{\mathbf{x}}_{i'}^{k_2} )}{d_{\mathbb{H}^{n+1}}(\hat{\mathbf{x}}_i^{k_1},\hat{\mathbf{x}}_{i'}^{k_1} )} 
        \leq \frac{1}{C} \cdot 2^{-(k_2 - k_1)\alpha}.
    \end{equation} 
    }
\end{proposition}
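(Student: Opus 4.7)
The plan is to reduce the ratio of hyperbolic distances to a ratio of $\sinh^{-1}$ expressions and then control the nonlinearity and a Hellinger-type factor separately. First I would exploit the fact that the two embeddings $\hat{\mathbf{x}}_i^k$ and $\hat{\mathbf{x}}_{i'}^k$ share the same last coordinate $2^{k\alpha-2}$. Plugging this into Eq.~\eqref{eq:hyperbolic_geodesic} collapses the denominator and gives
\begin{equation*}
    d_{\mathbb{H}^{n+1}}\!\bigl(\hat{\mathbf{x}}_i^{k},\hat{\mathbf{x}}_{i'}^{k}\bigr) \;=\; 2\,\sinh^{-1}\!\bigl(2^{1-k\alpha}\,D_k\bigr), \qquad D_k:=\|\varphi_i^k-\varphi_{i'}^k\|_2,
\end{equation*}
so that the target ratio becomes $R=\sinh^{-1}(2^{1-k_2\alpha}D_{k_2})/\sinh^{-1}(2^{1-k_1\alpha}D_{k_1})$, where the factor $2^{-(k_2-k_1)\alpha}$ already appears inside the arguments. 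The task then reduces to absorbing the distortion coming from the nonlinearity of $\sinh^{-1}$ and from the ratio $D_{k_2}/D_{k_1}$ into a single data-dependent constant.

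Next I would bound both ingredients. Since $\phi_i^k$ is a probability distribution, $\varphi_i^k$ is a nonnegative unit vector, so $D_k\in[0,\sqrt{2}]$ and every argument $2^{1-k\alpha}D_k$ lies in the compact interval $[0,2\sqrt{2}]$ for $k\ge 0$. On this interval the map $u\mapsto \sinh^{-1}(u)/u$ is continuous, strictly decreasing, and equals $1$ at $0$, which yields
\begin{equation*}
    C_1\, u \;\le\; \sinh^{-1}(u) \;\le\; u, \qquad C_1:=\frac{\sinh^{-1}(2\sqrt{2})}{2\sqrt{2}} \in (0,1).
\end{equation*}
For the ratio $D_{k_2}/D_{k_1}$, I would use the semigroup identity $\phi_i^{k_1}=\mathbf{P}^{2^{-k_1}-2^{-k_2}}\phi_i^{k_2}$, together with the contraction of the Hellinger distance under any Markov kernel, to obtain the monotonicity $D_{k_1}\le D_{k_2}$. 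Combined with $D_k\le\sqrt{2}$ and with $D_{\min}:=\min_{i\neq i'}D_0^{(i,i')}>0$, which is positive because $\mathcal{X}$ is finite, this gives $1\le D_{k_2}/D_{k_1}\le \sqrt{2}/D_{\min}=:M$.

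Assembling the two bounds, $C_1\,(D_{k_2}/D_{k_1})\,2^{-(k_2-k_1)\alpha}\;\le\; R\;\le\; C_1^{-1}\,(D_{k_2}/D_{k_1})\,2^{-(k_2-k_1)\alpha}$, and setting $C:=C_1/M\in(0,1)$ delivers the claimed two-sided inequality. The step I expect to be the main obstacle is the uniform upper bound on $D_{k_2}/D_{k_1}$: while the lower bound $D_{k_1}\le D_{k_2}$ is a clean consequence of Hellinger data-processing, the reciprocal bound genuinely requires finiteness of $\mathcal{X}$ so that $D_{\min}>0$, and as a result the constant $C$ is unavoidably data-dependent through $D_{\min}$. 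The remaining pieces, namely the simplification of Eq.~\eqref{eq:hyperbolic_geodesic} under a common last coordinate and the linear two-sided bound on $\sinh^{-1}$ over a compact interval, are routine.
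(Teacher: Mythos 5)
Your proposal is correct and follows essentially the same route as the paper: collapse the hyperbolic distance to $2\sinh^{-1}(2^{1-k\alpha}\lVert\varphi_i^k-\varphi_{i'}^k\rVert_2)$ using the shared last coordinate, sandwich $\sinh^{-1}$ between linear functions on the compact range $[0,2\sqrt{2}]$, and use uniform bounds $c\le D_k\le\sqrt{2}$ on the Hellinger factor. If anything, you supply more justification than the paper does, which simply asserts the existence of the lower bound $c$ ``by the bounds of the Hellinger distance,'' whereas you derive it from the data-processing monotonicity $D_{k_1}\le D_{k_2}$ plus finiteness of $\mathcal{X}$.
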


The proof of Proposition~\ref{prop:exp_growth} is in Appendix \ref{app:proof_exp_growth}.



\begin{algorithm}[t]
   \caption{Hyperbolic Diffusion Embedding and Distance}
   \label{alg:HDD}
\begin{algorithmic}
    \STATE {\bfseries Input:} Diffusion operator $\mathbf{P}\in\mathbb{R}^{n \times n}$, parameter $\alpha$, and  maximum scale $K$ 
    \STATE {\bfseries Output:} Hyperbolic diffusion distance $d_{\text{HDD}}(i,i')$ for all $i, i'\in[1, n]$ and embedding $\hat{\mathbf{X}}\in\mathbb{R}^{n \times \left( (n+1)(K+1) \right)}$ \vspace{5 mm}
    \STATE $ \mathbf{U \Lambda V}^\top = \texttt{eig}\left( \mathbf{P}\right)$
    \STATE $k\leftarrow 0$ 
    \WHILE{$k\leq K$}
        \STATE $ \hat{\mathbf{\Lambda}}_k\leftarrow\mathbf{\Lambda}^{2^{-k}} $
        \STATE $\hat{\mathbf{X}}_k \leftarrow [\sqrt{(\mathbf{U} \hat{\mathbf{\Lambda}}_k \mathbf{V}^\top )^\top}, 2^{k\alpha-2} \mathbf{1}_n ]^\top$ 
        \STATE $k\leftarrow k+1$
    \ENDWHILE
    \STATE $\hat{\mathbf{X}}\leftarrow [\hat{\mathbf{X}}_0^\top, \hat{\mathbf{X}}_1^\top, \ldots, \hat{\mathbf{X}}_K^\top]^\top$
        \FOR{ $i,i' \in \{1, 2, \ldots, n\}$}
            \STATE  $d_{\text{HDD}}(i,i') \leftarrow \mathlarger{\sum_{k=0}^K } ~  2\sinh^{-1} \left(2^{-k \alpha + 1}\left\lVert \hat{\mathbf{x}}^k_i - \hat{\mathbf{x}}^k_{i'} \right\rVert_2\right)$
        \ENDFOR
    \end{algorithmic}
\end{algorithm}

In terms of dimensionality, each diffusion operator embeds a point into an $n$-dimensional density, which in turn is mapped to $\mathbb{H}^{n+1}$. Then, considering the product space of $(K+1)$ scales overall results in an $(n+1)(K+1)$-dimensional HDE. This potential dimensionality increase also departs from common practice that typically aims at dimension reduction. However, as we show in Section \ref{sec:theoretical_ana}, the gain is the ability to recover the underlying hierarchical structure.

Finally, we propose a new distance called the Hyperbolic Diffusion Distance (HDD), using the $\ell_1$ distance on the product manifold $\mathcal{H}$, which is defined in Eqs.~\eqref{eq:hyperbolic_geodesic} and \eqref{eq:product_distance}.
Formally, the HDD between two points $x_i, x_{i'} \in\mathcal{X}$ is defined by 
\begin{align}
    \begin{split}
   d_{\text{HDD}}(i,i') &\coloneqq d_{\mathcal{H}}^{\ell_1}\left( \zeta_K(x_i),  \zeta_K(x_{i'})\right) \\
    &= \sum_{k=0}^K   ~2\sinh^{-1} \left(2^{-k \alpha + 1}\left\lVert \varphi_i^k - \varphi_{i'}^k \right\rVert_2\right),
    \label{eq:HDD}
    \end{split}
\end{align}
where $\alpha$ is the same parameter as in Eq.~\eqref{eq:hyperbolic_emd}. 
Note that the function $\sinh^{-1}$ in Eq.~\eqref{eq:HDD} arises from the Riemannian distance in the hyperbolic space $\mathbb{H}^{n+1}$ at each scale. In practice, it attenuates large distance values.
Fig.~\ref{fig:diagram}(d) illustrates this multi-scale HDD. 
We summarize the construction of the proposed HDE and HDD in Algorithm~\ref{alg:HDD}.

\section{Theoretical Justification}\label{sec:theoretical_ana}

Here we show that the proposed HDE and HDD are theoretically grounded.
The intuition underlying the constructions of HDE and HDD in Algorithm~\ref{alg:HDD} is as follows.
The diffusion operator $\mathbf{P}^t$ is designed to reveal the local connectivity at diffusion time scale $t$. Considering multiple scales in a dyadic grid associates different diffusion timescales, namely, neighborhoods of different sizes. The multi-scale embedding of the corresponding diffusion operators in hyperbolic space naturally endows a hierarchical relationship between the diffusion timescales, enabling the recovery of the underlying hierarchical structure.
Next, we make this intuitive explanation formal.
\begin{theorem}\label{thm:hdd_approx_tree_metric_alpha}
   For $0<\alpha < \frac{1}{2}$ and sufficiently large $K$ and $n$, the hyperbolic diffusion distance $d_{\text{HDD}}$ is equivalent to $ d_\mathcal{T}^{2\alpha}$.
\end{theorem}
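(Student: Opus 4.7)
The goal is to prove the bi-Lipschitz equivalence $d_{\text{HDD}}(i,i') \asymp d_{\mathcal{T}}^{2\alpha}(i,i')$ by a scale-by-scale analysis of the sum in Eq.~\eqref{eq:HDD}. For any pair $(i,i')$ with $\ell \coloneqq d_{\mathcal{T}}(i,i')$, introduce a critical scale $k^{*} = k^{*}(i,i')$ defined by $2^{-k^{*}} \asymp \ell^{2}$. Heuristically, $k^{*}$ is the scale at which the two heat fronts starting from $i$ and $i'$ (each spreading at rate $\sqrt{t}$) first meet, so it separates a ``fine'' regime $k > k^{*}$ (nearly disjoint supports) from a ``coarse'' regime $k \leq k^{*}$ (overlapping supports). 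Since the argument $2^{-k\alpha+1}\|\varphi_i^k-\varphi_{i'}^k\|_2$ of $\sinh^{-1}$ in Eq.~\eqref{eq:HDD} is bounded (by $2\sqrt{2}$) for all $k \geq 0$ and $\alpha > 0$, the map $\sinh^{-1}$ behaves linearly up to multiplicative constants over the relevant range, so
\begin{equation*}
d_{\text{HDD}}(i,i') \;\asymp\; \sum_{k=0}^{K} 2^{2-k\alpha}\,\|\varphi_i^{k} - \varphi_{i'}^{k}\|_{2}.
\end{equation*}

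Next, I would establish Hellinger-distance estimates in the two regimes, using the heat-kernel asymptotics for the diffusion operator on the underlying tree. In the fine regime, $\phi_i^{k}, \phi_{i'}^{k}$ concentrate within balls of radius $\sqrt{t}=2^{-k/2} < \ell$, so their supports are essentially disjoint and $\|\varphi_i^{k} - \varphi_{i'}^{k}\|_{2} \asymp \sqrt{2}$. In the coarse regime, the Gaussian-like heat-kernel profile yields $\|\varphi_i^{k} - \varphi_{i'}^{k}\|_{2} \asymp \ell/\sqrt{2^{-k}} = \ell \cdot 2^{k/2}$, using the classical identity $H^{2}(p, q) \approx |\mu_p - \mu_q|^{2}/(8\sigma^{2})$ for shifted Gaussians. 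Splitting the sum accordingly,
\begin{equation*}
d_{\text{HDD}}(i,i') \;\asymp\; 4\ell\sum_{k=0}^{k^{*}} 2^{k(1/2 - \alpha)} \;+\; 4\sqrt{2}\sum_{k = k^{*}+1}^{K} 2^{-k\alpha}.
\end{equation*}
The fine-scale geometric series sums to $\asymp 2^{-k^{*}\alpha} \asymp \ell^{2\alpha}$ for any $\alpha > 0$. For the coarse-scale series, the hypothesis $\alpha < 1/2$ makes the common ratio $2^{1/2-\alpha}$ exceed $1$, so the sum is dominated by its top term: $\asymp \ell \cdot 2^{k^{*}(1/2-\alpha)} = \ell \cdot \ell^{2\alpha-1} = \ell^{2\alpha}$. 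Combining, both regimes contribute $\asymp \ell^{2\alpha}$, giving the desired equivalence.

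The main obstacle is the rigorous verification of the Hellinger estimate $\|\varphi_i^{k} - \varphi_{i'}^{k}\|_{2} \asymp \min(1, \ell \cdot 2^{k/2})$ for diffusion densities on a graph whose underlying geometry is a hidden tree. This reduces to sharp two-sided heat-kernel bounds of Nash--Poincar\'{e} or Delmotte type on $(\mathcal{T}, d_{\mathcal{T}})$, classical in the continuous limit but requiring care in the finite-$n$ setting; this is precisely why ``$n$ sufficiently large'' enters the hypothesis. The role of $\alpha < 1/2$ is dual: analytically, it makes the coarse-scale series concentrate at the crossover $k^{*}$ (rather than at $k = 0$), yielding the correct exponent $2\alpha$; geometrically, it ensures $d_{\mathcal{T}}^{2\alpha}$ is a valid snowflake metric. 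Finally, ``$K$ sufficiently large'' ensures $K > k^{*}$ for all relevant pairs, so that the fine regime is genuinely captured by the sum in Eq.~\eqref{eq:HDD}.
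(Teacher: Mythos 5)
Your proposal is correct and follows essentially the same route as the paper's proof: linearize $\sinh^{-1}$ on its bounded range, split the dyadic sum at the critical scale $2^{-k^{*}}\asymp d_{\mathcal{T}}^{2}$, control the Hellinger terms via two-sided Gaussian heat-kernel bounds and a H\"{o}lder-continuity estimate (the paper's three regularity conditions with $\beta=2$, $\Theta=1$), and sum the two geometric series, with $\alpha<\tfrac12$ forcing the coarse-scale series to concentrate at the crossover. The only cosmetic difference is that the paper routes the lower bound through the total-variation distance (its Lemma relating $\sinh^{-1}$ of the Hellinger distance to the $\ell_1$ distance, plus Leeb's Lemmas 3--4) and needs only one-sided estimates in each regime, rather than the full two-sided Hellinger asymptotics you posit.
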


Theorem \ref{thm:hdd_approx_tree_metric_alpha} implies that the proposed HDD recovers the underlying hierarchical distance even when $d_\mathcal{T}$ is not given or when we do not have access to the explicit tree structure. In addition, Theorem \ref{thm:hdd_approx_tree_metric_alpha} suggests that in practice, the parameter $\alpha$ should be set to be close to $\frac{1}{2}$ so that HDD approximates the hierarchical distance. 
That is, for $\alpha \rightarrow \frac{1}{2}$, the obtained $d_{\text{HDD}}$ is approximately $0$-hyperbolic \cite{gromov1987hyperbolic}. 

Theorem \ref{thm:hdd_approx_tree_metric_alpha} is stated under the assumption that $(\mathbf{P}^t)_{t\in(0,1]}$ is a point-wise approximation of the heat kernel, namely, $\mathbf{P}^t(i,i') \approx a_t(x_i,x_{i'})$, where $a_t(\cdot,\cdot)$ is a heat kernel \cite{grigoryan2009heat}. Such an approximation, mainly in the limit $n \rightarrow \infty$ and $\epsilon \rightarrow 0$, was shown and studied in \cite{coifman2006diffusion,singer2006graph,belkin2008towards}. In addition, three strong regularity conditions are required for the above metric recovery result. Importantly, the heat kernel admits these conditions (see Appendix \ref{app:theoretical_analysis}).

The first condition is an \textit{upper bound} on the operator elements. There is a non-negative and monotonic decreasing function $f_1: \mathbb{R}_+ \rightarrow \mathbb{R}$ and a number $\beta > 0$ such that for any $\gamma <\beta$, we have $\int_{\mathbb{R}_+} \tau^{3n + \gamma }f_1(\tau)d\tau/\tau < \infty$.
The square root of the operator elements for all $t\in(0, 1]$ is then upper-bounded by 
\begin{equation}
    \sqrt{\mathbf{P}^t(i,i')} \leq \frac{1}{t^{\frac{3n}{2\beta}}}f_1\left(\frac{d_\mathcal{T}(x_i, x_{i'})}{t^{\frac{1}{\beta}}}\right).
\end{equation}

The second condition is a \textit{lower bound} on the operator elements. There is a monotonic decreasing function $g_1:\mathbb{R}_+ \rightarrow \mathbb{R}$ and $R>0$ such that for all $t\in (0,1]$ and all $d_\mathcal{T}(x_i, x_{i'}) < R$, the square root of the operator elements is lower-bounded by 
\begin{equation}
    \sqrt{\mathbf{P}^t(i,i')} \geq \frac{1}{t^{\frac{n}{2\beta}}}g_1\left(\frac{d_\mathcal{T}(x_i, x_{i'})}{t^{\frac{1}{\beta}}}\right).    
\end{equation}

The third condition is a \textit{H\"{o}lder continuity condition}. There is a constant $\Theta >0$ sufficiently small such that for all $t\in (0,1]$, all $x_i, x_{i'} \in \mathcal{X}$ with $d_\mathcal{T}(x_i, x_{i'}) \leq t^{\frac{1}{\beta}}$ and all $x_j\in \mathcal{X}$, the element value of the Hellinger measure  \cite{hellinger1909neue} is upper-bounded by 
\begin{align}
    \begin{split}
        \lvert&\sqrt{\mathbf{P}^t(i,j)} -\sqrt{\mathbf{P}^t(i', j)}\rvert^2 \leq \\
        &\left(\frac{d_\mathcal{T}(x_i, x_{i'})}{t^{\frac{1}{\beta}}}\right)^{2\Theta}\frac{1}{t^{\frac{n}{\beta}}}f_1\left(\frac{d_\mathcal{T}(x_i, x_{i'})}{t^{\frac{1}{\beta}}}\right).     
    \end{split}
\end{align}
Here, we presented only the main results. The proof of Theorem \ref{thm:hdd_approx_tree_metric_alpha} and more details appear in Appendix \ref{app:theoretical_analysis}.

We conclude this subsection with a couple of remarks.
First,  our results and derivations rely on and extend the work of \cite{leeb2016holder}, who proposed a multi-scale distance based on diffusion geometry and showed that it approximates the geodesic distance on a closed Riemannian manifold with non-negative curvature \cite{goldberg2012efficient, leeb2015topics}, under the conditions of geometric and semi-groups regularities. 
However, their approximation does not apply to hierarchical structures, such as trees or tree-like structures, that are negatively curved manifolds, as we empirically demonstrate in Appendix \ref{app:additional_exp}.
Here, as a remedy, motivated by the geometric insights presented in \cite{leeb2016holder}, we follow the work of \cite{mckean1970upper, grigor1998heat, frank2013heat, zelditch2017eigenfunctions} for heat kernels on negative curvature spaces. 
Second, unlike Euclidean spaces where the product of Euclidean spaces is Euclidean, i.e.,  $(\mathbb{R}^{k_1})^{k_2} = \mathbb{R}^{k_1\cdot k_2}$, considering the product of (curved) hyperbolic spaces gives $(\mathbb{H}^{k_1})^{k_2} 
\neq \mathbb{H}^{k_1\cdot k_2}$, for $k_1, k_2 \in\mathbb{Z}^+$. 
Therefore, embedding into the space $\mathbb{H}^{n(K+1)}$ does not generate the same metric as HDD, which admits a canonical Riemannian metric in $\mathcal{H}$. 
The specific construction of HDD is unique and essential to recover the hierarchy. 


\section{Experimental Results}\label{sec:exp_result}
We investigate the proposed HDE and HDD in hierarchical graph embedding and distance recovery contexts.
Specifically, we apply it to (i) several graphs serving as benchmarks for hierarchical graph embedding, (ii) single-cell gene expression data for the recovery of the hidden hierarchical structure, and (iii) unsupervised hierarchical metric learning tasks.
We refer to Appendix \ref{app:more_exp} for more details of these experiments and to Appendix \ref{app:additional_exp} for additional experiments including a toy example and ablation study.
The code is available at the link \url{https://github.com/Ya-Wei0/HyperbolicDiffusionDistance}.

\subsection{Hierarchical Graph Embedding}\label{sec:hierarchial_graph_embedding}

We demonstrate our method in the context of hierarchical graph embedding on five benchmark graphs, which were considered in \cite{sala2018representation}. 
These benchmarks include (i) a small and fully-balanced tree consisting of $40$ nodes, (ii) a phylogenetic tree consisting of $344$ nodes \cite{sanderson1994treebase, hofbauer2016preliminary}, (iii) a graph of disease relations consisting of $516$ nodes, (iv) a CS-PhD graph of the relations between advisors and PhD students consisting of $1025$ nodes \cite{de2018exploratory}, and (v) a general relativity and quantum cosmology arXiv collaboration network with $4185$ nodes \cite{leskovec2007graph}. 
These five graphs contain trees, tree-like graphs, and dense graph.

Given each of these graphs, consisting of nodes and edges, our goal here is to find a node embedding  $\xi(\cdot)$ in a metric space, where the metric $d_\xi$ represents the hierarchical distance $d_\mathcal{T}$ defined by the shortest path on the given graph. 
For this purpose, we apply Algorithm \ref{alg:HDD} with $\alpha = \frac{1}{2}$ and $K\in\{0, 1, \ldots, 19\}$. Implementation details are described in Appendix \ref{app:more_exp}.

\begin{figure}[t]
	\centering
        \includegraphics[width=0.47\textwidth]{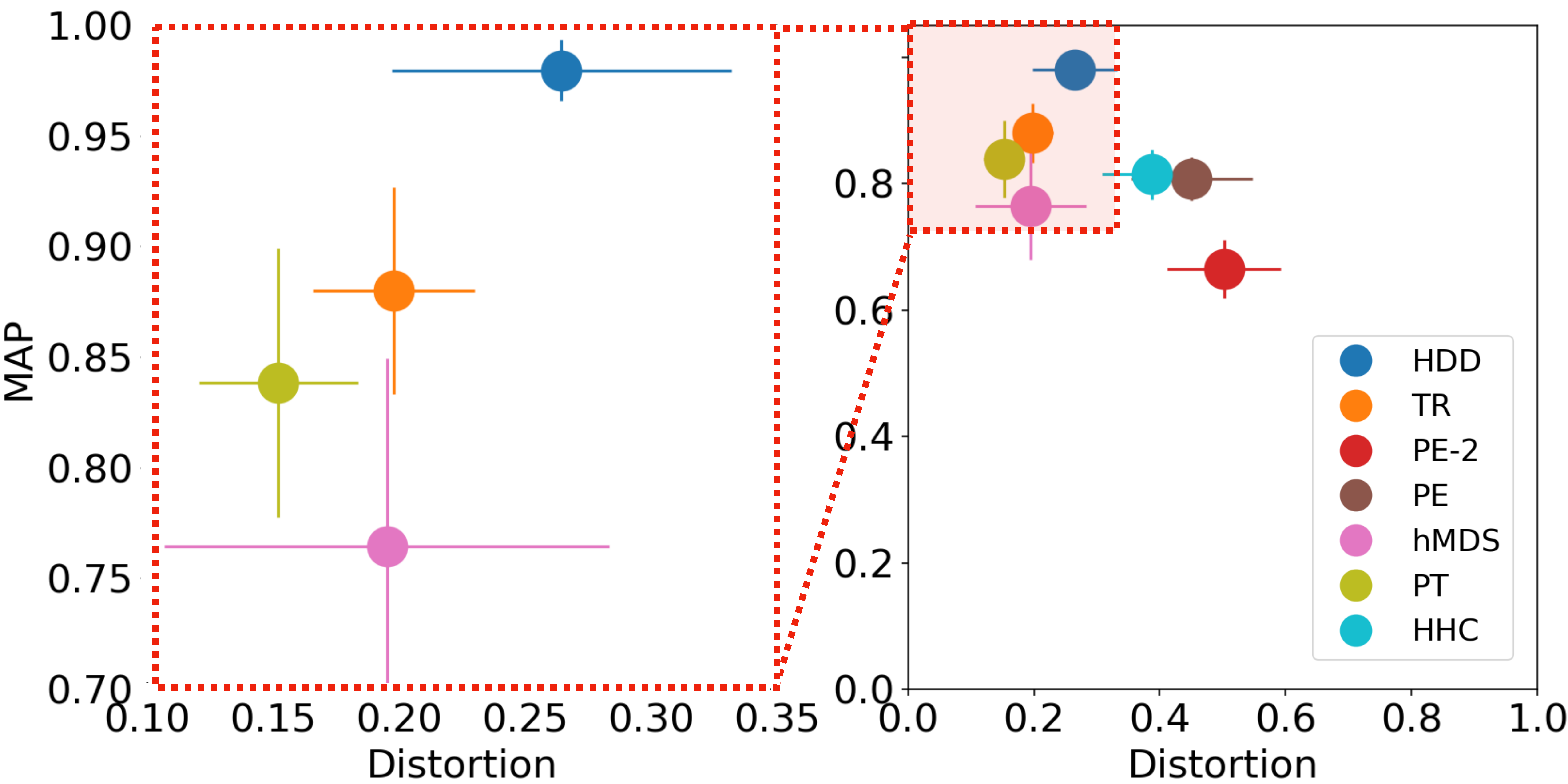}
        \caption{
        Distortion-MAP results of hierarchical graph embedding. Each point represents the mean over the five benchmark graphs, and the whiskers represent the standard deviation. 
        The larger the MAP and the smaller the average distortion, the better the embedding quality.
        }\label{fig:graph_hierarchical_graph}
\end{figure}

We compare our method to five hierarchical embedding methods. The first is the tree representation (TR) obtained by a divide-and-conquer tree construction \cite{sonthalia2020tree}.
The second is the Poincar\'{e} embedding (PE) \cite{nickel2017poincare}, which is a 
neural network for graph embedding that learns the hyperbolic representation in the Poincar\'{e} model.
Two additional methods are taken from \cite{sala2018representation}: a PyTorch (PT) implementation of an SGD-based algorithm optimized over a principal geodesic analysis loss function, and hyperbolic multi-dimensional scaling (hMDS), which takes a pairwise distance matrix as input and returns an embedding in hyperbolic space that best represents the input distances. 
The fifth is the hyperbolic embedding obtained by the hyperbolic hierarchical clustering  (HHC) \cite{chami2020trees} using a continuous relaxation of Dasgupta’s cost \cite{dasgupta2016cost}.
Following the experiment protocol in \cite{sala2018representation}, we test embedding spaces of dimensions 2, 5, 10, 50, 100, and 200 for the PE, hMDS, and PT methods, and the best results are reported. In addition, following the common practice, we also present the results of the PE into the 2-dimensional Poincar\'{e} disk and denote it by PE-2.

To quantitatively evaluate the obtained embeddings, we use two commonly-used fidelity measures. The first is the mean average precision (MAP), which is given by 
\begin{equation}
    \frac{1}{|\mathcal{X}|} \sum_{x_i\in \mathcal{X}}\frac{1}{d(x_i)} \sum_{x_{i_j}\in N_{x_i}} \frac{|N_{x_i} \cap B(\xi(x_i), \xi(x_{i_{j}}))|}{B(\xi(x_i), \xi(x_{i_j}))},
    \label{eq:map}
\end{equation}
where $N_{x_i}= \{x_{i_1}, \ldots, x_{i_{d(x_i)}}\}$ is the set of the neighborhood of $x_i$ in the given graph, $d(x_i)$ is the degree of $x_i$ in the given graph, and $B(x, x_{i_j}) = \{y| d_\xi(\xi(x_i), \xi(y)) \leq d_\xi(\xi(x_i), \xi(x_{i_{j}}))  \} $ is the set of points within the smallest ball that is centered at $\xi(x)$ and contains $x_{i_j}$ in the embedded space.
The second measure is the average distortion given by 
\begin{equation}
     \frac{1}{\begin{psmallmatrix} \lvert \mathcal{X}\rvert \\ 2 \end{psmallmatrix}}  \sum_{x_i\neq x_{i'} \in \mathcal{X}} \frac{\lvert d_\xi(\xi(x_i), \xi(x_{i'})) - d_{\mathcal{T}}(x_i, x_{i'})\rvert }{
    d_{\mathcal{T}}(x_i, x_{i'})}.
    \label{eq:distortion}
\end{equation}
Note that the closer the MAP is to 1, the better the embedding distance $d_\xi$ locally preserves the desired hierarchical distance $d_\mathcal{T}$. 
In addition, the smaller the average distortion is, the larger the (global) similarity between $d_\xi$ and  $d_\mathcal{T}$ is.

Fig.~\ref{fig:graph_hierarchical_graph} presents the MAP and distortion obtained by Algorithm \ref{alg:HDD} and the competing methods. For brevity, we present here the mean and standard deviation over the five benchmarks, and the results for each benchmark separately appear in Appendix \ref{app:additional_exp}.
We see that compared to the baselines, HDD presents a trade-off. It yields the best MAP with a small standard deviation, yet, its obtained distortion is larger than TR, hMDS, and PT.
Note that HDD is strictly better in terms of MAP and distortion than the popular PE in two or more dimensions. 
We report the run time and stability in Appendix \ref{app:additional_exp}, showing that HDD takes a remarkably shorter computational time than PE, PT, and HHC. 
While HDD requires a longer run time than TR, the advantage of HDD over TR in terms of MAP is significant, as depicted in Fig.~\ref{fig:graph_hierarchical_graph}.


\subsection{Single-Cell Gene Expression Data}\label{sec:scRNA}
We examine HDD in the context of hierarchical distance recovery. In contrast to the graph embedding task, this context fully exploits the use of the proposed multi-scale diffusion geometry that is designed to reveal the hidden hierarchical (tree) structure. For this purpose, we consider single-cell RNA sequencing (scRNA-seq) data \cite{tanay2017scaling}.
It is argued that single-cell development can be well modeled using hierarchical representations \cite{duverle2016celltree}, providing biological insights into cell developmental trajectories and disease progression \cite{van2019single}. 
Revealing the latent hierarchies underlying the cell types is a key task in differentiating genetic treatments and immune responses, which are useful for further biological tasks. 

We test two scRNA-seq datasets taken from \cite{dumitrascu2021optimal}: (i) the mouse cortex and hippocampus dataset (Zeisel) consisting of 3005 single-cells with seven cell types and 4000 gene markers \cite{zeisel2015cell}, and (ii) the cord blood mononuclear cell study (CBMC) comprising 8617 single-cells with 13 cell types and 500 gene markers \cite{stoeckius2017simultaneous}. 

\begin{figure}[t]
	\centering
        \includegraphics[width=0.49\textwidth]{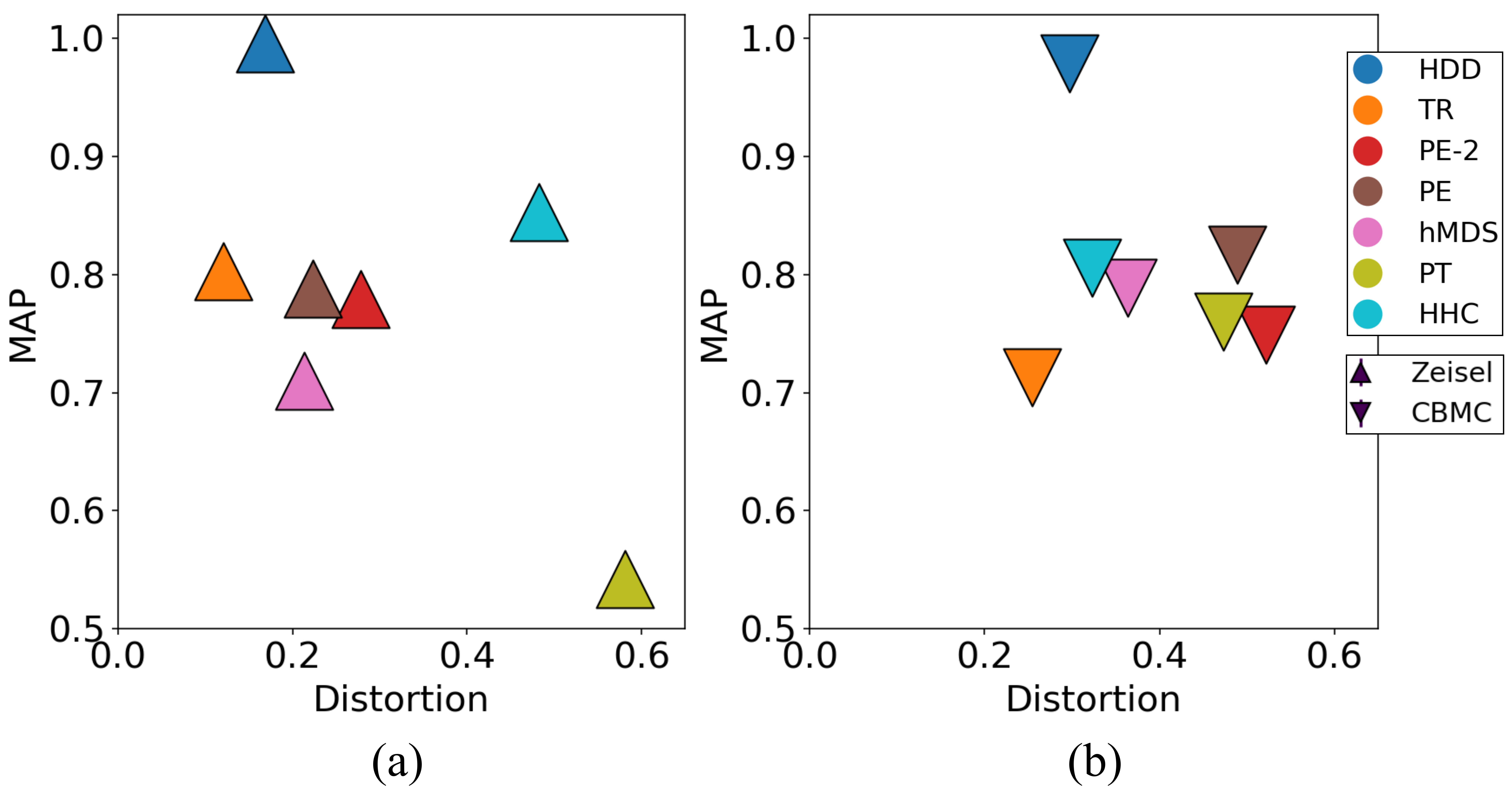}
        \caption{Distortion-MAP results of scRNA-seq datasets: (a) Zeisel and (b) CBMC. The larger the MAP and the smaller the average distortion, the better the hierarchy distance recovery.} 
        \label{fig:graph_gene}
\end{figure}
\begin{figure}[t]
	\centering
        \includegraphics[width=0.49\textwidth]{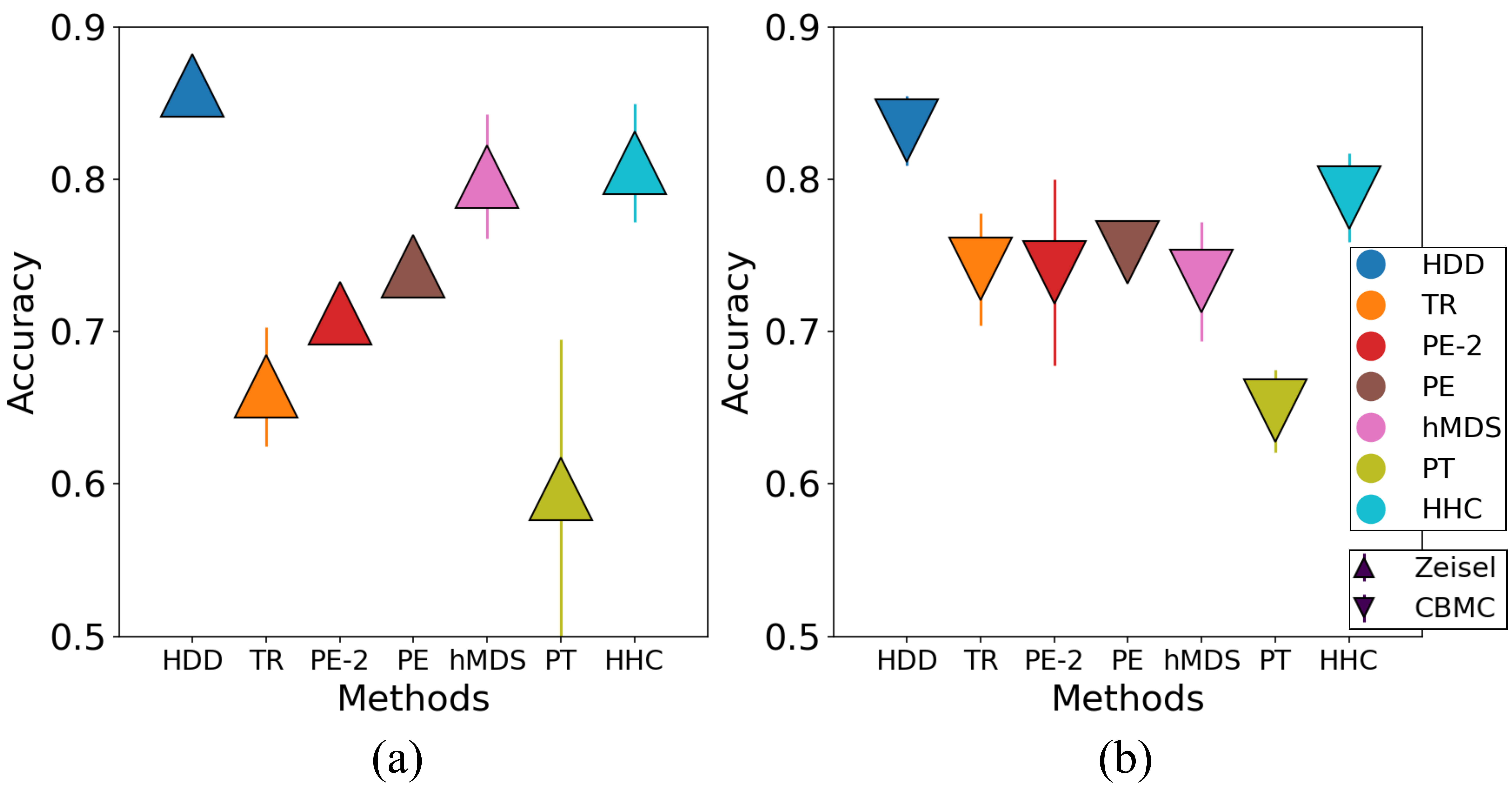}
        \caption{Classification accuracy of scRNA-seq datasets: (a) Zeisel and (b) CBMC. } 
        \label{fig:graph_gene_classification_acc}
\end{figure}
\begin{table*}[t]
\caption{Classification accuracy of Zoo, Iris, Glass, and Segmentation datasets (highest accuracy in bold and second highest underlined). } 
\begin{center}
\resizebox{0.96\textwidth}{!}{%
\begin{tabular}{lllccccccccccccr}
\toprule
&  \multicolumn{2}{c}{Dataset \scriptsize{($\#$Points, $\#$Classes)}} & HDD & TR  & PE-2 & PE &  hMDS   & PT & HHC\\
\cmidrule{4-10}
& Zoo &(101, 7)& \textbf{0.898\scriptsize{$\pm$0.012}} & 0.854\scriptsize{$\pm$0.049} &  0.779\scriptsize{$\pm$0.038} &  0.824\scriptsize{$\pm$0.018} &  0.822\scriptsize{$\pm$0.014} &   0.842\scriptsize{$\pm$0.015} & \underline{0.861\scriptsize{$\pm$0.044}} \\
& Iris &(150, 3) & \underline{0.883\scriptsize{$\pm$0.007}} & 0.859\scriptsize{$\pm$0.021} &  0.782\scriptsize{$\pm$0.030} &  0.846\scriptsize{$\pm$0.024} &  0.851\scriptsize{$\pm$0.010} &   \textbf{0.895\scriptsize{$\pm$0.009}} & 0.852\scriptsize{$\pm$0.019} \\
& Glass &(214, 6) &  \textbf{0.654\scriptsize{$\pm$0.011} }& 0.607\scriptsize{$\pm$0.013} & 0.503\scriptsize{$\pm$0.036} &  0.553\scriptsize{$\pm$0.027} &  0.609\scriptsize{$\pm$0.013} &  0.556\scriptsize{$\pm$0.046} &   \underline{0.610\scriptsize{$\pm$0.007}}  \\
& ImaSeg  &(2310, 7) & \textbf{0.701\scriptsize{$\pm$0.024}} & 0.654 \scriptsize{$\pm$0.022} &  0.599\scriptsize{$\pm$0.017} &  \underline{0.679\scriptsize{$\pm$0.038}} &  0.658\scriptsize{$\pm$0.016} &   0.641\scriptsize{$\pm$0.027} & 0.661\scriptsize{$\pm$0.017} \\
\bottomrule
\end{tabular}}
\end{center}
\label{tab:classification_4_datasets}
\end{table*}

The single cells are viewed as samples in a high-dimensional ambient space, where the genes are viewed as features. Given these data, we apply Algorithm \ref{alg:HDD}, obtaining an embedding of the single cells into a hierarchical metric space.
We compare our method to the same baselines as in Section \ref{sec:hierarchial_graph_embedding}. 
A distance based on the cosine similarity computed in the ambient space of the scRNA-seq data expression levels is used as the input distance for Algorithm \ref{alg:HDD} and the competing methods.
We note that the choice of distance in the original space is critical. Here, we employed a distance based on the standard and commonly-used cosine similarity calculated in the ambient space based on prior research \cite{jaskowiak2014selection}.

To evaluate the embedding and distance, we use the same quantitative measures as in Section \ref{sec:hierarchial_graph_embedding}. To compute these measures, we exploit the fact that a tree graph of the cell types is provided with each dataset. Importantly, this information is used only for evaluation but kept hidden from the distance recovery methods. 

Fig.~\ref{fig:graph_gene} presents the obtained MAP and average distortion of Zeisel and CBMC.
We see that HDD obtains the highest MAP values by a large margin and the second-best result in terms of the average distortion, which is very close to the best result. 
The evaluation of the methods' run times (see Appendix \ref{app:additional_exp}) shows that HDD is more efficient than PE, PT, and HHC. 
This suggests that HDD is applicable to large scRNA-seq datasets as well.


To further evaluate the results, we make use of the availability of the cell labels in this dataset to also examine HDD through classification. 
Specifically, we apply the nearest centroid classifier with the recovered distance metric of each method as input.
The reported classification accuracy is obtained by averaging over ten different runs; in each run, the dataset is randomly split into 80$\%$ training set and 20$\%$ testing set. 
Fig.~\ref{fig:graph_gene_classification_acc} displays the mean and the standard deviation of the classification accuracy of Zeisel and CBMC.
We see that HDD outperforms the other methods by a large margin for both Zeisel and CBMC datasets. 
This suggests that HDD extracts well the tree-like structure underlying biological data. 
Conversely, we see that TR and hMDS,  which obtain small distortion in the graph embedding context, do not perform well in this downstream task. 

\subsection{Unsupervised Hierarchical Metric Learning}
We further test the proposed HDD in the context of unsupervised hierarchical metric learning.
We consider four datasets from the UCI Machine Learning Repository \cite{Dua2019}: (i) the Zoo dataset consisting of 101 data points of seven types of animals with 17 features, (ii) the Iris dataset comprising 150 samples from three kinds of Iris plants with four features, (iii) the Glass dataset containing 214 instances of six classes with 10 features, and (iv) the image segmentation (ImaSeg) dataset consisting of 2310 instances from seven outdoor images with 19 features.
These datasets were used in \cite{chami2020trees} for evaluating embedding and clustering in hyperbolic space under the working assumption that they have some degree of underlying hierarchical structures.
Here, the evaluation of HDE and HDD is done through downstream classification based on the (dis)similarity of the learned embedding and distance \cite{jordan2015machine}. Such evaluation is affected by the hyperbolicity of the datasets, i.e., it depends on the extent the data adhere to hyperbolic geometry.
Since there is no such ground truth hierarchical information (in contrast to the datasets considered in Section \ref{sec:scRNA}), the $\delta$-hyperbolicity cannot be naively computed. Still, following \cite{chami2020trees}, we posit that comparing different methods for hyperbolic embedding and distance on these datasets gives useful information on the ability of HDD to reveal hierarchical structures given only observational data, compared to the competing methods.


The downstream classification is carried out in the same way as in Section \ref{sec:scRNA}.
Algorithm \ref{alg:HDD} and the baseline methods are applied to these datasets with a distance based on cosine similarity as an input without using any label information.
%
Then, a dissimilarity classification based on the learned hierarchical distance using the nearest centroid classifier is applied.
We use cross-validation with ten repetitions, in which the dataset is randomly divided into 80$\%$ training set and 20$\%$ testing set.


The classification accuracy is presented in Table~\ref{tab:classification_4_datasets}, showing the mean and the standard deviation of the results averaged over ten trials.  
We see that HDD outperforms the competing methods in three out of the four datasets and obtains the second-best classification accuracy in the remaining dataset.
These results further demonstrate that HDD gives rise to a useful distance.

\section{Conclusion}\label{sec:conclusion}
We presented a new method for hierarchical data embedding and corresponding distance, termed HDE and HDD, respectively, which can receive as input either a graph or observational data with a hidden hierarchical structure. 
Our method is primarily based on diffusion geometry that enables us to construct multi-scale propagated densities, which are, in turn, embedded in a product of hyperbolic spaces. 
We theoretically show that the $\ell_1$ distance between the embedded points in this space is equivalent to the tree-like distance in the hierarchical space of the input graph or data. In contrast to the common practice in hierarchical data representation using hyperbolic geometry, our method represents each point as a collection of embedded propagated densities rather than a single point in hyperbolic space.
We test HDE and HDD on benchmark graph embedding tasks and on single-cell gene expression data sets, demonstrating significant advantages compared to existing methods in terms of standard quantitative metrics and run time. In addition, we demonstrate that HDD can lead to improved downstream classification accuracy on several benchmarks.
Because our method is computationally efficient, not based on optimization or deep learning, and differentiable, we posit that it can potentially be incorporated into various loss functions of a broad range of downstream tasks.
For example, combining diffusion and hyperbolic geometry can be extended to deep-based networks \cite{ganea2018hyperbolic, chami2019hyperbolic}, which have been shown useful for hierarchical data learning both theoretically and empirically. 

\section*{Acknowledgements}
We express our gratitude to the anonymous reviewers for their valuable feedback.
The work of YEL and RT was supported by the European Union’s Horizon 2020 research and innovation programme under grant agreement No. 802735-ERC-DIFFOP.
The work of GM was supported by NSF award CCF-2217058.

\bibliography{library}
\bibliographystyle{icml2023}

\newpage
\onecolumn
\appendix
\section{Additional Background}\label{app:more_background}
\subsection{Diffusion Geometry}
Broadly, the construction of diffusion geometry starts by defining a probability transition matrix $\mathbf{P}$ that describes how likely it is to transition from one data point to another. This matrix is then used to construct a Markov process, which defines the diffusion distance between data points, conveying a notion of distance between data points based on how easily one can transition or ``diffuse'' from one point to another.
Formally, the diffusion distance with time diffusion $t$ between two points $\mathbf{x}$ and $\mathbf{x}'$ is given by
$\left\lVert \mathbf{P}^t e_i - \mathbf{P}^t e_{i'}\right\rVert$ with an appropriate norm (see \cite{coifman2006diffusion}). 
This diffusion distance is then used to construct a family of multi-scale low-dimensional maps of the data set, termed diffusion maps. 
The diffusion maps in $\ell<n$ dimensions with diffusion time $t$ of a point $\mathbf{x}$ is given by
$\Psi_t(\mathbf{x}_i) = [\lambda_1^t \nu_1(i),\ldots,\lambda_{\ell}^t \nu_{\ell}(i)]^\top$, where $\{(\lambda_i, \nu_i)\}_{i=1}^n$ are the eigen-pairs of the transition matrix $\mathbf{P}$.
It is shown that the Euclidean distances between the diffusion maps approximate the diffusion distances \cite{coifman2006diffusion}.
In \cite{coifman2006diffusionwave}, diffusion operators with multiple scales $t$ on a dyadic grid were considered for multi-scale data representation, called diffusion wavelets.

\subsection{Hyperbolic Geometry}
The $n$-dimensional Poincar\'{e} half-space \cite{beardon2012geometry} is a Riemannian manifold with constant negative curvature, defined by $\mathbb{H}^n = \{ \mathbf{x} \in\mathbb{R}^n \big| \mathbf{x}(n)>0 \}$ with the Riemannian metric tensor $ds^2 =  \frac{d\mathbf{x}^2(1) + d\mathbf{x}^2(2) + \ldots +  d\mathbf{x}^2(n)}{a^2\mathbf{x}^2(n)}$, where $a>0$ and $\kappa = -a^2$ represents the Gaussian curvature of the hyperbolic manifold. 
In this work, we study the $n$-dimensional Poincar\'{e} half-space with constant negative curvature $-1$ by setting $a=1$.
%

The hyperbolic geometry can be characterized by Gromov’s $\delta$-hyperbolicity \cite{gromov1987hyperbolic, ollivier2011visual}. 
\begin{definition}
     A metric space $(X, d)$ is $\delta$-hyperbolic \cite{gromov1987hyperbolic} if there exists $\delta \geq 0$ such that  for all four points $x, y, z, w \in X$ 
    \begin{align}
         d(w, x) + d(y, z) \leq  \max\{d(x, y) + d(w, z),  d(x, z) + d(y, w)\} + 2 \delta. 
        \label{eq:delta_hyp}
    \end{align}
\end{definition}
\begin{proposition}\label{prop:triangle_tree}
    A $0$-hyperbolic metric satisfies the triangle inequality: 
    \begin{equation}
         d(w, x)\leq  d(w, y) +  d(x, y). 
         \label{eq:triangle_tree}
    \end{equation}
\end{proposition}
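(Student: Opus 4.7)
The plan is to derive the triangle inequality directly from the $\delta=0$ specialization of the four-point condition in Eq.~\eqref{eq:delta_hyp} by collapsing one of the four free points onto another. Since Eq.~\eqref{eq:delta_hyp} is a four-point statement while Eq.~\eqref{eq:triangle_tree} is a three-point statement, the natural reduction is to identify a pair of the four labels. This is the standard route by which three-point consequences are extracted from four-point hyperbolicity inequalities.

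Concretely, I would substitute $z = y$ into Eq.~\eqref{eq:delta_hyp} with $\delta = 0$, obtaining
$$d(w, x) + d(y, y) \leq \max\{\, d(x, y) + d(w, y),\; d(x, y) + d(y, w) \,\}.$$
By the metric axiom $d(y, y) = 0$, the left-hand side collapses to $d(w, x)$. By the symmetry of $d$, the two arguments of the maximum on the right coincide, leaving $d(x, y) + d(w, y)$. Combining these two simplifications yields $d(w, x) \leq d(w, y) + d(x, y)$, which is precisely Eq.~\eqref{eq:triangle_tree}. If a cleaner-looking derivation is preferred, the symmetric specialization $w = y$ works equally well and produces the same conclusion after relabeling.

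The only subtlety worth flagging is the logical reading of the statement: if one interprets a ``$0$-hyperbolic metric'' as a metric that additionally satisfies the four-point condition, then the triangle inequality is part of the ambient definition and the proposition is essentially a consistency restatement; if instead one reads it as asserting that the four-point condition (together with symmetry, non-negativity, and $d(y,y)=0$) implies the triangle inequality on its own, then the one-line specialization above is the substantive content. In either reading there is no real technical obstacle --- the proof is immediate once the correct two labels are identified. The value of the proposition for this paper is pedagogical, motivating the later use of $0$-hyperbolicity as the appropriate formalization of ``tree-likeness'' for the HDD equivalence claimed in Theorem~\ref{thm:hdd_approx_tree_metric_alpha}.
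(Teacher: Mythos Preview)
Your argument is correct: specializing the four-point condition \eqref{eq:delta_hyp} with $\delta=0$ and $z=y$, then invoking $d(y,y)=0$ and symmetry, immediately yields \eqref{eq:triangle_tree}. The paper itself states Proposition~\ref{prop:triangle_tree} as a background fact without supplying a proof, so there is nothing to compare against; your derivation is the standard one-line reduction and would be the natural choice had the authors chosen to include it.
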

\begin{example}
    The two-dimensional Poincare half-plane $\mathbb{H}^2$ is $(\log 2)$-hyperbolic. 
\end{example}

\subsection{Graph Preliminaries}
\begin{definition}
    Let $G=(V, E, W)$ be an undirected graph. The shortest path metric $d_\mathcal{T}(u,v)$ is the length of the shortest path from $u$ to $v$. 
\end{definition}
\begin{definition}
    A metric $d$ is a 0-hyperbolic metric if there exists a tree $\mathcal{T}$ such that the shortest path metric $d_\mathcal{T}$ on $\mathcal{T}$ is equal to $d$. 
\end{definition}
\section{Proof of Proposition \ref{prop:exp_growth}}\label{app:proof_exp_growth}
\paragraph{Proposition \ref{prop:exp_growth}.}
There is a constant $0<C<1$ such that for any $x_i, x_{i'}\in \mathcal{X}$ and $k_1 \leq k_2$ for $k_1, k_2 \in \mathbb{Z}^+_0$, we have
    \begin{equation}
        C\cdot 2^{-(k_2 - k_1)\alpha}
        \leq
        \frac{d_{\mathbb{H}^{n+1}}(\hat{\mathbf{x}}_i^{k_2},\hat{\mathbf{x}}_{i'}^{k_2} )}{d_{\mathbb{H}^{n+1}}(\hat{\mathbf{x}}_i^{k_1},\hat{\mathbf{x}}_{i'}^{k_1} )} 
        \leq \frac{1}{C} \cdot 2^{-(k_2 - k_1)\alpha}.
    \end{equation} 
\begin{proof}
    For any $k_1, k_2\in\mathbb{Z}_0^+$ such that $k_1 \leq k_2$ and $x_i, x_{i'}\in\mathcal{X}$, by the bounds of the Hellinger distance, we have $0<c\leq \left\lVert \varphi_i^{k_1} - \varphi_{i'}^{k_1} \right\rVert_2\leq \left\lVert \varphi_i^{k_2} - \varphi_{i'}^{k_2} \right\rVert_2\leq \sqrt{2}$ for some constant $c$.
    We begin with the proof of lower-bound:
    \begingroup
    \allowdisplaybreaks
    \begin{align*}
        \frac{d_{\mathbb{H}^{n+1}}(\hat{\mathbf{x}}_i^{k_2},\hat{\mathbf{x}}_{i'}^{k_2} )}{d_{\mathbb{H}^{n+1}}(\hat{\mathbf{x}}_i^{k_1},\hat{\mathbf{x}}_{i'}^{k_1} )} 
        &= \frac{\sinh^{-1}\left( 2^{-k_2\alpha + 1}\left\lVert \varphi_i^{k_2} - \varphi_{i'}^{k_2} \right\rVert_2\right)}{\sinh^{-1}\left( 2^{-k_1\alpha + 1}\left\lVert \varphi_i^{k_1} - \varphi_{i'}^{k_1} \right\rVert_2\right)}\\
        & \geq  \frac{\sinh^{-1}\left( 2^{-k_2\alpha + 1} \cdot c\right)}{\sinh^{-1}\left( 2^{-k_1\alpha + 1} \cdot \sqrt{2}\right)}\\
        & \overset{(1)}{\geq } \frac{\frac{1}{2}\left( 2^{-k_2\alpha + 1} \cdot c\right)}{\left( 2^{-k_1\alpha + 1} \cdot \sqrt{2}\right)}\\
        &= \frac{c}{2\sqrt{2}}\cdot 2^{-(k_2 - k_1)\alpha},
    \end{align*}
    \endgroup  
    where transition $(1)$ is due to  $\sinh^{-1}(z)<z$ for $z>0$ and $2 \sinh^{-1}(2^{-k\alpha+1} z)> 2^{-k\alpha +1}z$ for $0<z<\sqrt{2}$, $0<\alpha<1$, and $k\in\mathbb{Z}_0^+$. 
    Similarly, the upper bound is obtained by 
    \begingroup
    \allowdisplaybreaks
    \begin{align*}
        \frac{d_{\mathbb{H}^{n+1}}(\hat{\mathbf{x}}_i^{k_2},\hat{\mathbf{x}}_{i'}^{k_2} )}{d_{\mathbb{H}^{n+1}}(\hat{\mathbf{x}}_i^{k_1},\hat{\mathbf{x}}_{i'}^{k_1} )} 
        &= \frac{\sinh^{-1}\left( 2^{-k_2\alpha + 1}\left\lVert \varphi_i^{k_2} - \varphi_{i'}^{k_2} \right\rVert_2\right)}{\sinh^{-1}\left( 2^{-k_1\alpha + 1}\left\lVert \varphi_i^{k_1} - \varphi_{i'}^{k_1} \right\rVert_2\right)}\\
        & \leq \frac{\sinh^{-1}\left( 2^{-k_2\alpha + 1}\cdot \sqrt{2}\right)}{\sinh^{-1}\left( 2^{-k_1\alpha + 1} \cdot c\right)}\\
        & \leq  \frac{ 2^{-k_2\alpha + 1}\cdot \sqrt{2}}{ \frac{1}{2} \cdot 2^{-k_1\alpha + 1}\cdot c}\\
        &= \frac{2\sqrt{2}}{c} \cdot 2^{-(k_2 - k_1)\alpha}.
    \end{align*}
    \endgroup    
    Taking $C =\ffrac{c}{2\sqrt{2}}$ gives the results. We remark that the lower bound can be tightly bounded by $ 2^{-(k_2 - k_1)\alpha}$ due to 
    $
    \frac{d_{\mathbb{H}^{n+1}}(\hat{\mathbf{x}}_i^{k_2},\hat{\mathbf{x}}_{i'}^{k_2} )}{d_{\mathbb{H}^{n+1}}(\hat{\mathbf{x}}_i^{k_1},\hat{\mathbf{x}}_{i'}^{k_1} )} = \frac{\sinh^{-1}\left( 2^{-k_2\alpha + 1}\left\lVert \varphi_i^{k_2} - \varphi_{i'}^{k_2} \right\rVert_2\right)}{\sinh^{-1}\left( 2^{-k_1\alpha + 1}\left\lVert \varphi_i^{k_1} - \varphi_{i'}^{k_1} \right\rVert_2\right)} \geq  \frac{\sinh^{-1}\left( 2^{-k_2\alpha + 1}\left\lVert \varphi_i^{k_2} - \varphi_{i'}^{k_2} \right\rVert_2\right)}{\sinh^{-1}\left( 2^{-k_1\alpha + 1}\left\lVert \varphi_i^{k_2} - \varphi_{i'}^{k_2} \right\rVert_2\right)}= 2^{-(k_2 - k_1)\alpha}.
    $
\end{proof}

\section{Theoretical Analysis of Hyperbolic Diffusion Distance - Proof of Theorem \ref{thm:hdd_approx_tree_metric_alpha}}\label{app:theoretical_analysis}
The theoretical analysis of HDD is motivated by and derived from the work presented in \cite{leeb2016holder}.
In their work, the authors considered the geometric regularity conditions on the diffusion semi-group of a multi-scale total variation distance between probability measures \cite{goldberg2012efficient, leeb2015topics}.  
More specifically, they presented a diffusion ground distance, a multi-scale distance using $L_1$ distance between probability measures for approximating the geodesic distance on a closed Riemannian manifold with non-negative curvature. 

In our work, we focus on the hierarchical (i.e., tree or tree-like) structures that cannot be approximated by the work in  \cite{leeb2016holder}. 
To this end, we follow the work of \cite{mckean1970upper, grigor1998heat, frank2013heat, zelditch2017eigenfunctions} for spaces with negative curvature and devise the HDD based on a multi-scale metric using inverse hyperbolic sine function of a scaled Hellinger distance \cite{hellinger1909neue}, which forms the $\ell_1$ distance on the product manifold of the hyperbolic spaces. 

First, we define the multi-scale metric HDD in a continuous space and introduce the properties of the diffusion semi-group. 
Next, we establish the geometric regularities in the case of hierarchical datasets that are necessary for the multi-scale metric to approximate the underlying tree metric. 
Last, we will show that the diffusion operators, which approximate the heat kernel, satisfy these conditions, and therefore, the proposed HDD recovers the hierarchical structure underlying the data. 

\subsection{HDD in Continuous Space}\label{app:multi_scale_metric_regularity}

Let $\mathcal{X}$ be a sigma-finite measure space in dimension $n$. 
We consider a measure $\mu$ such that $\mu(B(x, r))\lesssim r^n$, where $x\in\mathcal{X}$ and $r>0$. 
A family of kernels $\{a_t(x, x')\}_{t\in\mathbb{R}_+}$ is considered for $x, x' \in \mathcal{X}$.
Let $f$ be a function defined in $\mathcal{X}$. 
We define the operator $A_t$ by 
\begin{equation}
    A_tf(x) = \int_\mathcal{X} a_t(x, x') f(x') dx'.
\end{equation}
The operators $\{A_t\}_{t\in\mathbb{R}_+}$ have the following properties \cite{coifman2006diffusion, coifman2021some}. 
(i) The family of operators forms a semi-group such that for all $t_1, t_2\in\mathbb{R}_+$, we have $A_{t_1}A_{t_2} = A_{t_1 + t_2}$. 
(ii) The operator respects the conservation property, that is, $\int_\mathcal{X} a_t(x,x')dx' = 1$.
(iii) The operator is integrable such that   $\int_\mathcal{X} |a_t(x,x')|dy \leq C$ for some constant $C>0$.

We are only concerned with dyadic times $t\in(0,1]$ such that  $t = 2^{-k}$ for $k\in\mathbb{Z}_0^+$.
We first define the local geometric measure at a single scale $k$ using the unnormalized Hellinger distance \cite{hellinger1909neue} between the probability distributions, given by 
\begin{equation}
    T_k(x,x') = \left\lVert\sqrt{a_{2^{-k}}(x, \cdot)} - \sqrt{a_{2^{-k}}(x', \cdot)} \right\rVert_2.
    \label{eq:single_hellinger}
\end{equation}
Then we define the multi-scale metric using the inverse hyperbolic sine function of the scaled Hellinger measure with the scaling term $2^{-k\alpha +1}$, given by 
\begin{equation}
   \hat{T}_{\alpha}(x,x') = \sum_{k \geq 0} 2\sinh^{-1}\left( 2^{-k \alpha +1 } \;T_k(x,x')\right) = \sum_{k \geq 0} 2\sinh^{-1} \left(2^{-k \alpha +1 }  \left\lVert\sqrt{a_{2^{-k}}(x, \cdot)} - \sqrt{a_{2^{-k}}(x', \cdot)} \right\rVert_2\right), 
   \label{eq:HDD_continuous}
\end{equation}
where $0<\alpha<1$. 
Because the scaling parameters decay exponentially, the multi-scale metric can be approximated by the first $K$ terms:
\begin{equation}
   \hat{T}_{\alpha}(x,x')\approx \hat{T}_K(x,x') = \sum_{k=0}^{K} 2\sinh^{-1} \left(2^{-k \alpha + 1} \;T_k(x,x')\right).
   \label{eq:first_K_term}
\end{equation}

\subsection{Regularity Conditions }
We impose geometric regularity on the multi-scale metric $\hat{T}_\alpha$. 
There are constants $C>0$ and $\alpha > 0$ such that the integral of the kernel and the multi-scale metric at scale $k$ is upper-bounded by 
    \begin{align}
         \int_\mathcal{X} a_{2^{-k}}(x,x')\hat{T}_{\alpha}(x,x')  dx' \leq C 2^{-k\alpha}.
         \label{eq:geometric_regularity}
    \end{align}

Let $(\mathcal{X}, d_\mathcal{T})$ be a hierarchical metric space. 
There are three strong regularity conditions imposed on the family of  operators $\{A_{2^{-k}}\}_{k\in\mathbb{R}_+}$ that allow for the proposed multi-metric $\hat{T}_{\alpha}^K(x,x')$ to approximate $d_\mathcal{T}$. 

The first condition is \textit{an upper bound on the kernel}. There is a non-negative and monotonic decreasing function $f_1: \mathbb{R}_+ \rightarrow \mathbb{R}$ and a number $\beta > 0$ such that for any $\gamma <\beta$, we have $\int_{\mathbb{R}_+} \tau^{3n + \gamma }f_1(\tau)d\tau/\tau < \infty$.
The square root of the kernel for all $t\in(0, 1]$ is then upper-bounded by 
\begin{equation}
    \sqrt{a_t(x,x')} \leq \frac{1}{t^{\frac{3n}{2\beta}}}f_1\left(\frac{d_\mathcal{T}(x,x')}{t^{\frac{1}{\beta}}}\right).
    \label{eq:sup_upper_bound_kernel}
\end{equation}
The second condition is \textit{a lower bound of the kernel}. There is a monotonic decreasing function $g_1:\mathbb{R}_+ \rightarrow \mathbb{R}$ and $R>0$ such that for all $t\in (0,1]$ and all $d_\mathcal{T}(x,x') < R$, the square root of the kernel is lower-bounded by 
\begin{equation}
    \sqrt{a_t(x,x')} \geq \frac{1}{t^{\frac{n}{2\beta}}}g_1\left(\frac{d_\mathcal{T}(x,x')}{t^{\frac{1}{\beta}}}\right).    
    \label{eq:sup_lower_bound_kernel}
\end{equation}
The third condition is \textit{H\"{o}lder continuity}. There is a constant $\Theta >0$ sufficiently small such that for all $t\in (0,1]$, all $x,x' \in \mathcal{X}$ with $d_\mathcal{T}(x,x') \leq t^{\frac{1}{\beta}}$ and all $y\in \mathcal{X}$, the element value of the Hellinger measure is upper-bounded by 
\begin{equation}
    \lvert\sqrt{a_t(x,y)} -\sqrt{a_t(x', y)}\rvert^2 \leq 
        \left(\frac{d_\mathcal{T}(x,x')}{t^{\frac{1}{\beta}}}\right)^{2\Theta}\frac{1}{t^{\frac{n}{\beta}}}f_1\left(\frac{d_\mathcal{T}(x,y)}{t^{\frac{1}{\beta}}}\right).  
        \label{eq:sup_holder}
\end{equation}

\subsection{Hierarchical Metric}
We present the lower and upper bounds of the proposed multi-scale metric $\hat{T}_\alpha$ in Eq.~\eqref{eq:HDD_continuous}, making it equivalent to the hierarchical metric $d_\mathcal{T}$. 

\begin{definition}[Snowflake distance \cite{leeb2015topics, leeb2016holder}]
     The snowflake distance is a distance in the form of $d(\cdot, \cdot)^s$, where $d$ is a distance and $0<s<1$. 
\end{definition}
We first present the upper bound of $\hat{T}_\alpha$. 
\begin{proposition}\label{prop:distance_upperbound}
    For any $0<\alpha < \min\{1, \frac{\Theta}{\beta}\}$, the multi-scale $\hat{T}_\alpha$ is upper-bounded by 
    \begin{equation}
        \hat{T}_\alpha(x,x') \lesssim \min \{1, d_\mathcal{T}^{\alpha\beta}(x, x')\}.
    \end{equation}
\end{proposition}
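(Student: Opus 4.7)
The plan is to split the sum defining $\hat{T}_\alpha(x,x')$ at a dyadic scale $k^\star$ chosen implicitly by $2^{-k^\star/\beta}\asymp d_\mathcal{T}(x,x')$, and bound the coarse and fine parts by completely different mechanisms, combining the result with a universal constant bound. Throughout I will use $\sinh^{-1}(z)\le z$ for $z\ge 0$, so everything reduces to controlling $\sum_{k\ge 0} 2^{-k\alpha+1}T_k(x,x')$.

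The constant half of the $\min$ is immediate: because each $a_{2^{-k}}(x,\cdot)$ is a probability density, the Hellinger distance $T_k(x,x')$ is uniformly bounded by $\sqrt{2}$, and since $\alpha<1$ the geometric series $\sum_{k\ge 0}2^{-k\alpha+2}\sqrt{2}$ converges to a constant, giving $\hat{T}_\alpha\lesssim 1$. For the polynomial half, on the coarse range $k\le k^\star$ the displacement satisfies $d_\mathcal{T}(x,x')\le 2^{-k/\beta}$, so the H\"{o}lder hypothesis (\ref{eq:sup_holder}) applies pointwise in $y$. Integrating in $y$ and using the volume-growth bound $\mu(B(x,r))\lesssim r^n$ together with the integrability assumption on $f_1$ to estimate $\int_\mathcal{X} f_1(d_\mathcal{T}(x,y)/r)\,d\mu(y)\lesssim r^n$, the factor $2^{kn/\beta}$ cancels and one obtains
\[
T_k(x,x')\;\lesssim\; d_\mathcal{T}(x,x')^{\Theta}\,2^{k\Theta/\beta}.
\]
Since $\alpha<\Theta/\beta$, the resulting sum $\sum_{k=0}^{k^\star}2^{-k\alpha+1}T_k$ is geometric with ratio $2^{\Theta/\beta-\alpha}>1$ and therefore dominated by its last term, which evaluates to $d_\mathcal{T}^{\Theta}\cdot 2^{k^\star(\Theta/\beta-\alpha)}\asymp d_\mathcal{T}^{\alpha\beta}$. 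For fine scales $k>k^\star$ I revert to $T_k\le\sqrt{2}$, so the tail contributes $\sum_{k>k^\star}2^{-k\alpha+1}\sqrt{2}\lesssim 2^{-k^\star\alpha}\asymp d_\mathcal{T}^{\alpha\beta}$. Adding the two contributions yields $\hat{T}_\alpha\lesssim d_\mathcal{T}^{\alpha\beta}$, and together with the constant bound this gives the advertised $\min$.

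The main technical obstacle is the rigorous justification of the integrated estimate $\int_\mathcal{X} f_1(d_\mathcal{T}(x,y)/r)\,d\mu(y)\lesssim r^n$. I plan to rewrite it as a layer-cake integral against $-df_1$, apply $\mu(B(x,rt))\lesssim(rt)^n$, change variables $\tau=t$, and close using the moment bound $\int\tau^{3n+\gamma}f_1(\tau)\,d\tau/\tau<\infty$ from the hypothesis list (which is far stronger than needed). It is worth noting that both exponent constraints in the proposition are sharp in this scheme: $\alpha<1$ is exactly what makes the trivial geometric series converge, while $\alpha<\Theta/\beta$ is exactly what makes the coarse-scale geometric series dominated by its \emph{last} term rather than its first, so that the prefactor $d_\mathcal{T}^\Theta$ can be converted into the claimed $d_\mathcal{T}^{\alpha\beta}$.
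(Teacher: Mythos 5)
Your proof follows essentially the same route as the paper's: split the dyadic sum at the scale $K$ with $2^{-K}\asymp d_\mathcal{T}^\beta(x,x')$, linearize via $\sinh^{-1}(z)\le z$, apply the H\"{o}lder condition (integrated in $y$ against $f_1$ using the volume-growth bound) on the coarse scales so that the geometric sum is dominated by its last term, and use the trivial Hellinger bound $T_k\le\sqrt{2}$ on the fine tail and for the constant half of the $\min$ --- you even supply the integration step that the paper elides. One small correction to your closing sharpness remark: $\sum_{k\ge 0}2^{-k\alpha}$ converges for any $\alpha>0$, so the constraint $\alpha<1$ plays no role in this particular bound; only $\alpha<\Theta/\beta$ is used here.
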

\begin{proof}
    Consider the dyadic levels $K-1$ and $K$ such that the tree distance is bounded from below and above by $2^{-K} \lesssim d_\mathcal{T}^\beta(x,x') \lesssim 2^{-K+1}$. 
    We have 
    \begingroup
    \allowdisplaybreaks
    \begin{align*}
        \hat{T}_\alpha(x,x') & =\sum_{k \geq 0} 2\sinh^{-1} \left(2^{-k \alpha +1 }  \left\lVert\sqrt{a_{2^{-k}}(x, \cdot)} - \sqrt{a_{2^{-k}}(x', \cdot)} \right\rVert_2\right) \\
     & \overset{(1)}{\leq} \sum_{k \geq 0} 2 \cdot 2^{-k \alpha +1 }  \left\lVert\sqrt{a_{2^{-k}}(x, \cdot)} - \sqrt{a_{2^{-k}}(x', \cdot)} \right\rVert_2 \\
     & = \sum_{k \geq 0}  2^{-k \alpha +2 }  \left\lVert\sqrt{a_{2^{-k}}(x, \cdot)} - \sqrt{a_{2^{-k}}(x', \cdot)} \right\rVert_2 \\
     & \overset{(2)}{\lesssim}   d_\mathcal{T}^\Theta(x,x') \sum_{k=0}^K 2^{-k \alpha +2 } 2^{\frac{k \Theta}{\beta}}+ \sum_{k=K+1}^\infty 2^{-k \alpha +2 }\\
    &\lesssim   d_\mathcal{T}^\Theta(x,x') 2^{K \alpha } 2^{\frac{K \Theta}{\beta}}+ 2^{-K \alpha  }\\
    &\overset{(3)}{\lesssim}  d_\mathcal{T}^{\alpha \beta }(x,x'),
    \end{align*}
    \endgroup
    where transition $(1)$ is due to $\sinh^{-1}(z)<z$ for $z>0$, transition $(2)$ is based on the H\"{o}lder continuity condition in Eq.~\eqref{eq:sup_holder} implying that the Hellinger distance is bounded by 
    $\left\lVert\sqrt{a_{2^{-k}}(x,\cdot)} -\sqrt{a_{2^{-k}}(x', \cdot)}\right\rVert_2 \lesssim\left(\frac{d_\mathcal{T}(x, x')}{t^{\frac{1}{\beta}}}\right)^\Theta$, and transition $(3)$ is due to $\alpha < \frac{\Theta}{\beta}$. 
\end{proof}

Proposition~\ref{prop:distance_upperbound} implies that the upper-bound of the multi-scale metric $ \hat{T}_\alpha$ is a thresholded Snowflake distance of $d_\mathcal{T}$. 
Below, we will demonstrate the lower bound using the following two results. 


\begin{lemma}\label{lemma:Hellinger_than_L1}
    Let $p, q$ be two probability distributions on $\mathcal{X}$.
    For a constant $k\in\mathbb{Z}_0^+$ and $0 <\alpha  <1$, we have 
    \begin{equation}
        2 \sinh^{-1} \left(2^{-k\alpha+1}\left\lVert \sqrt{p} - \sqrt{q} \right\rVert_2 \right) \geq 2^{-k\alpha}\lVert p-q\rVert_1, 
    \end{equation}
where $\left\lVert \sqrt{p} - \sqrt{q} \right\rVert_2$ is the unnormalized Hellinger distance between $p$ and $q$.
\end{lemma}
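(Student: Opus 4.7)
The plan is to combine two classical ingredients: a standard inequality between the total variation (i.e., $\ell_1$) distance and the (unnormalized) Hellinger distance, followed by a one-variable inequality of the form $\sinh^{-1}(y) \geq y/2$ on an appropriate interval. Setting $a = \|\sqrt{p}-\sqrt{q}\|_2$ and $b = \|p-q\|_1$, and writing $y = 2^{-k\alpha+1} a$, the target inequality becomes $2\sinh^{-1}(y) \geq 2^{-k\alpha} b$. Using the Hellinger/TV comparison $b \leq 2a$, it suffices to prove $2\sinh^{-1}(y) \geq y$, i.e., $\sinh^{-1}(y) \geq y/2$, on the full range of $y$ that can arise.

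First I would establish the comparison $\|p-q\|_1 \leq 2\|\sqrt{p}-\sqrt{q}\|_2$ by factoring $p - q = (\sqrt{p}-\sqrt{q})(\sqrt{p}+\sqrt{q})$, applying Cauchy--Schwarz, and bounding $\int(\sqrt{p}+\sqrt{q})^2 = 2 + 2\int\sqrt{pq} \leq 4$ via another application of Cauchy--Schwarz together with $\int p = \int q = 1$. This also yields the side bound $\|\sqrt{p}-\sqrt{q}\|_2 \leq \sqrt{2}$, which pins down the range of $y$: since $k \in \mathbb{Z}_0^+$ and $0<\alpha<1$, we have $2^{-k\alpha+1} \leq 2$ and therefore $y \in [0, 2\sqrt{2}]$.

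Next I would verify the scalar inequality $\sinh^{-1}(y) \geq y/2$ on $[0, 2\sqrt{2}]$. The cleanest route is via concavity: $\sinh^{-1}$ is concave on $[0,\infty)$ with $\sinh^{-1}(0)=0$, so $y\mapsto \sinh^{-1}(y)/y$ is monotonically decreasing; hence it suffices to check the inequality at the right endpoint $y=2\sqrt{2}$, where $\sinh^{-1}(2\sqrt{2}) = \log(2\sqrt{2}+3)$ and a direct estimate (using $e^{\sqrt{2}} < 2\sqrt{2}+3$) gives $\sinh^{-1}(2\sqrt{2}) > \sqrt{2} = (2\sqrt{2})/2$. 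Equivalently, one can show $f(y) = y - \sinh(y/2)$ is nonnegative on $[0,2\sqrt{2}]$ by noting $f(0)=0$, $f'(y) = 1 - \tfrac{1}{2}\cosh(y/2) \geq 0$ up to $y = 2\cosh^{-1}(2)\approx 2.634$, and then checking $f(2\sqrt{2}) > 0$ directly.

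Finally I would assemble: $2\sinh^{-1}(2^{-k\alpha+1} a) \geq 2^{-k\alpha+1} a \geq 2^{-k\alpha} b$, which is exactly the claim. The only nontrivial step in this plan is confirming the scalar inequality right up to the boundary $y = 2\sqrt{2}$, since the worst case $k=0$ saturates the range; concavity reduces this to a one-point numerical verification, so there is no serious obstacle. Monotonicity of $\sinh^{-1}$ and the Hellinger--TV comparison are both classical, and the whole argument is essentially two lines once those pieces are in place.
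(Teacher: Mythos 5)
Your proof is correct: the Cauchy--Schwarz argument yields both $\lVert p-q\rVert_1 \le 2\lVert \sqrt{p}-\sqrt{q}\rVert_2$ and $\lVert \sqrt{p}-\sqrt{q}\rVert_2 \le \sqrt{2}$, and the concavity of $\sinh^{-1}$ with $\sinh^{-1}(0)=0$ legitimately reduces the scalar bound $\sinh^{-1}(y)\ge y/2$ on $[0,2\sqrt{2}]$ to the single endpoint check $\log(3+2\sqrt{2})>\sqrt{2}$, which holds. The paper states this lemma without an explicit proof, but your route coincides with the scalar inequality it invokes elsewhere (namely $2\sinh^{-1}(2^{-k\alpha+1}z)\ge 2^{-k\alpha+1}z$ for $0<z<\sqrt{2}$ in the proof of Proposition~\ref{prop:exp_growth}), so this is essentially the intended argument.
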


Next, we introduce the lower bound of $\hat{T}_\alpha$. 
\begin{lemma}[Lemma 3 in \cite{leeb2016holder}]\label{lemma:lemma3inholderpaper} 
    Let $R$ be the condition of the lower bound of the kernel in Eq.~\eqref{eq:sup_lower_bound_kernel}.
    There are constants $A>1$ and $\epsilon>0$ such that whenever $x, x'\in\mathcal{X}$ and $t\in(0, 1]$ satisfy $At^{\frac{1}{\beta}} \leq d_\mathcal{T}(x, x')<R$, we have
    \begin{equation}
        \lVert a_{2^{-k}}(x, \cdot)-a_{2^{-k}}(x', \cdot)\rVert_1 \geq \epsilon. 
    \end{equation}
\end{lemma}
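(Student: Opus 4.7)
The plan is to show that, when $d_\mathcal{T}(x, x') \geq A t^{1/\beta}$ with $A$ large, the two probability densities $a_t(x, \cdot)$ and $a_t(x', \cdot)$ are essentially concentrated on disjoint balls of radius $\sim t^{1/\beta}$, which forces their $L^1$ distance to be uniformly bounded below by a positive constant.

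First I would prove local mass concentration near the source: using the lower-bound condition on the kernel together with monotonicity of $g_1$ and the standard Ahlfors-type lower volume bound $\mu(B(x, r)) \gtrsim r^n$, I pick a small $\delta \in (0, R)$ and verify that
\[
\int_{B(x,\, \delta\, t^{1/\beta})} a_t(x, y)\, d\mu(y) \;\geq\; c_1 \;>\; 0,
\]
with $c_1 \gtrsim g_1(\delta)^2 \delta^n$ independent of $t \in (0, 1]$ and $x \in \mathcal{X}$. Next I would establish a $t$-uniform tail estimate
\[
F(s) \;:=\; \sup_{t \in (0,1],\, x' \in \mathcal{X}} \int_{\{y\,:\, d_\mathcal{T}(x', y)\, \geq\, s\, t^{1/\beta}\}} a_t(x', y)\, d\mu(y) \;\longrightarrow\; 0 \quad \text{as } s \to \infty,
\]
via a dyadic shell decomposition around $x'$. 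On each annulus $\{2^j s\, t^{1/\beta} \leq d_\mathcal{T}(x', \cdot) < 2^{j+1} s\, t^{1/\beta}\}$, I apply the pointwise kernel upper bound, the monotonicity of $f_1$, and the volume estimate to reduce the tail to a $t$-independent, convergent series in $j$ whose partial tails vanish as $s \to \infty$ by the moment integrability hypothesis on $f_1$.

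These two building blocks close the argument quickly: choose $A \geq 2\delta$; if $d_\mathcal{T}(x, x') \geq A t^{1/\beta}$, then for every $y \in B(x, \delta t^{1/\beta})$ the triangle inequality yields $d_\mathcal{T}(x', y) \geq (A - \delta)\, t^{1/\beta} \geq (A/2)\, t^{1/\beta}$, so the mass of $a_t(x', \cdot)$ on this ball is at most $F(A/2)$. Restricting the $L^1$ integral to $B(x, \delta t^{1/\beta})$ and using the reverse triangle inequality gives
\[
\lVert a_t(x, \cdot) - a_t(x', \cdot) \rVert_1 \;\geq\; c_1 - F(A/2).
\]
Selecting $A$ so large that $F(A/2) \leq c_1/2$ completes the proof with $\epsilon := c_1/2$; the hypothesis $d_\mathcal{T}(x, x') < R$ plays no role beyond keeping us in the regime where the lower kernel bound applies to the small ball in Step~1.

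The main obstacle is the tail bound in the second step. Once squared, the kernel upper bound carries the singular prefactor $t^{-3n/\beta}$, while each shell contributes only $t^{n/\beta}$ from the volume estimate, leaving a residual $t$-dependence after naive substitution. The delicate work is to organize the shell sum so that this singularity is absorbed by the strong moment integrability $\int \tau^{3n + \gamma} f_1(\tau)\, d\tau / \tau < \infty$, yielding a tail that is simultaneously uniform in $t$ and vanishing in $s$. This is precisely where the asymmetric $3n$-versus-$n$ scaling in the upper and lower kernel hypotheses is consumed.
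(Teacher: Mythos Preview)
The paper does not prove this lemma; it is quoted as ``Lemma 3 in \cite{leeb2016holder}'' and invoked as a black box in the lower-bound argument for $\hat{T}_\alpha$. Your two-step strategy---local mass concentration near the source via the kernel lower bound, a uniform-in-$t$ tail estimate via the kernel upper bound, then separation by the triangle inequality on a small ball---is precisely the standard argument, and in Leeb's original setting (where the upper and lower kernel bounds carry the \emph{same} prefactor $t^{-n/\beta}$) it goes through cleanly: the $t$-powers cancel in the shell sum and the tail $F(s)$ is uniformly small.

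The caution is that the present paper states \emph{asymmetric} bounds, with prefactor $t^{-3n/(2\beta)}$ on $\sqrt{a_t}$ above versus $t^{-n/(2\beta)}$ below. Squaring the upper bound and integrating over a shell of volume $\lesssim t^{n/\beta}$ leaves a residual $t^{-2n/\beta}$ that the moment hypothesis $\int \tau^{3n+\gamma} f_1(\tau)\,d\tau/\tau<\infty$ (which is first-order in $f_1$, not in $f_1^2$) does not cancel; your assertion that ``the singularity is absorbed by the strong moment integrability'' does not hold as written, and a direct computation shows the bound on the cross-mass $\int_{B(x,\delta t^{1/\beta})} a_t(x',y)\,d\mu(y)$ picks up exactly this divergent factor. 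So while your plan matches the cited source, the tail step would require a genuinely different argument under the paper's modified hypotheses rather than Leeb's. Separately, your Step~1 tacitly uses an Ahlfors \emph{lower} volume bound $\mu(B(x,r))\gtrsim r^n$, which the paper does not record (only $\lesssim$ is stated); this is needed and is present in Leeb's framework.
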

\begin{proposition}\label{cor:HDD_lowerbound_tree}
    Let $R$ be as in the condition of the lower bound of the kernel in Eq.~\eqref{eq:sup_lower_bound_kernel}. 
    The multi-scale metric is lower-bounded by 
    \begin{equation}
        \hat{T}_\alpha(x,x') \gtrsim d_\mathcal{T}^{\alpha\beta}(x,x'). 
    \end{equation}
\end{proposition}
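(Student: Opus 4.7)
The plan is to lower-bound the series defining $\hat{T}_\alpha(x,x')$ by isolating a single well-chosen dyadic scale $k^\star$ at which the two propagated densities $a_{2^{-k}}(x,\cdot)$ and $a_{2^{-k}}(x',\cdot)$ are provably separated in $L^1$. Since every term in $\hat{T}_\alpha$ is non-negative (as $\sinh^{-1}$ of a non-negative argument), it suffices to keep just one term:
\begin{equation}
    \hat{T}_\alpha(x,x') \;\geq\; 2\sinh^{-1}\!\left(2^{-k^\star \alpha + 1}\,T_{k^\star}(x,x')\right).
\end{equation}
To convert the Hellinger-type quantity $T_{k^\star}$ into something exploitable, apply Lemma~\ref{lemma:Hellinger_than_L1}, which yields the pointwise inequality
\begin{equation}
    2\sinh^{-1}\!\left(2^{-k^\star\alpha+1}\,T_{k^\star}(x,x')\right)\;\geq\;2^{-k^\star\alpha}\,\bigl\lVert a_{2^{-k^\star}}(x,\cdot) - a_{2^{-k^\star}}(x',\cdot)\bigr\rVert_1.
\end{equation}
The task then reduces to producing a scale $k^\star$ at which the $L^1$-separation of the two propagated kernels is bounded below by a universal constant.

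This is exactly what Lemma~\ref{lemma:lemma3inholderpaper} provides, but only in the regime $A\,t^{1/\beta}\leq d_\mathcal{T}(x,x')<R$. So the key step is to choose the scale $k^\star$ so that $t = 2^{-k^\star}$ lies at the top of this window, i.e.\ pick $k^\star$ as the smallest non-negative integer for which
\begin{equation}
    A\cdot 2^{-k^\star/\beta} \;\leq\; d_\mathcal{T}(x,x'),
\end{equation}
which gives $2^{-k^\star}\asymp (d_\mathcal{T}(x,x')/A)^\beta$ up to a multiplicative constant coming from rounding. With this choice, Lemma~\ref{lemma:lemma3inholderpaper} yields an $L^1$-gap of at least $\epsilon$, and substituting back produces
\begin{equation}
    \hat{T}_\alpha(x,x') \;\geq\; \epsilon\cdot 2^{-k^\star\alpha} \;\gtrsim\; d_\mathcal{T}^{\alpha\beta}(x,x'),
\end{equation}
which is the claimed bound.

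Two issues will require care. First, the scale selection only works for $d_\mathcal{T}(x,x')<R$; for pairs with $d_\mathcal{T}(x,x')\geq R$ one instead uses $k^\star=0$ (or any scale for which the $L^1$-gap is bounded below on such a compact regime) so that $\hat{T}_\alpha(x,x')\geq \epsilon\gtrsim d_\mathcal{T}^{\alpha\beta}(x,x')$ up to a constant depending on the diameter of $\mathcal{X}$. Second, we need $k^\star\geq 0$, which forces $d_\mathcal{T}(x,x')\leq A$; when that fails, the same uniform lower bound argument covers the remaining case. The main obstacle is really bookkeeping on these two boundary regimes and absorbing the rounding loss $2^{-k^\star}\asymp d_\mathcal{T}^\beta/A^\beta$ into the implicit constant; the mechanism — single-term dominance, Hellinger-to-$L^1$ conversion via Lemma~\ref{lemma:Hellinger_than_L1}, then kernel separation via Lemma~\ref{lemma:lemma3inholderpaper} — is the conceptual crux.
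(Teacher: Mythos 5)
Your proposal is correct and follows essentially the same route as the paper's proof: pick the dyadic scale $K$ with $2^{-K}\asymp d_\mathcal{T}^\beta(x,x')/A^\beta$, convert the $\sinh^{-1}$-of-Hellinger terms to $L^1$ separation via Lemma~\ref{lemma:Hellinger_than_L1}, and invoke the $\epsilon$-gap of Lemma~\ref{lemma:lemma3inholderpaper}. The only cosmetic difference is that the paper retains the whole tail $\sum_{k\geq K}\epsilon\,2^{-k\alpha}$ rather than the single term $k=K$, which changes nothing beyond the implicit constant; your handling of the $d_\mathcal{T}(x,x')\geq R$ regime matches what the paper defers to Proposition~\ref{cor:HDD_lowerbound_constantC}.
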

\begin{proof}
    Take $A$ and $\epsilon$ as in the conditions in Lemma~\ref{lemma:lemma3inholderpaper}.
    We now take $K\in\mathbb{Z}_+$ such that $\frac{d_\mathcal{T}^\beta(x,x)}{A^\beta}$ is bounded by  $2^{-K} \lesssim \frac{d_\mathcal{T}^\beta(x,x')}{A^\beta} \lesssim 2^{-K+1}$. 
    Then, we have 
    \begingroup
    \allowdisplaybreaks
    \begin{align*}
       \hat{T}_\alpha(x,x') & =\sum_{k \geq 0} 2\sinh^{-1} \left( 2^{-k \alpha +1 }  \left\lVert\sqrt{a_{2^{-k}}(x, \cdot)} - \sqrt{a_{2^{-k}}(x', \cdot)} \right\rVert_2\right)\\
       & \geq  \sum_{k \geq K} 2\sinh^{-1} \left( 2^{-k \alpha +1 }  \left\lVert\sqrt{a_{2^{-k}}(x, \cdot)} - \sqrt{a_{2^{-k}}(x', \cdot)} \right\rVert_2 \right)\\
       &\overset{(1)}{\geq}\sum_{k\geq K} 2^{-k\alpha}\lVert a_{2^{-k}}(x, \cdot)-a_{2^{-k}}(x', \cdot)\rVert_1\\
       & \overset{(2)}{\geq} \epsilon \sum_{k\geq K} 2^{-k\alpha}\\
       & \simeq  2^{-K\alpha }\\
       & \simeq d_\mathcal{T}^{\alpha \beta}(x,x')
    \end{align*}
    \endgroup
    where transition $(1)$ is based on Lemma~\ref{lemma:Hellinger_than_L1} and transition $(2)$ is applied by Lemma~\ref{lemma:lemma3inholderpaper} using the triangle inequality in Eq.~\eqref{eq:triangle_tree}.
\end{proof}
\begin{lemma}[Lemma 4 in \cite{leeb2016holder}]\label{lemma:lemma4inholderpaper} 
    Let $R$ be the condition of the lower bound of the kernel in Eq.~\eqref{eq:sup_lower_bound_kernel}.
    There are constants $C>0$ and $\eta>0$ such that whenever $d_\mathcal{T}(x,x')\geq R$ and $t^{\frac{1}{\beta}} < \eta R$, we have
    \begin{equation}
        \lVert a_{2^{-k}}(x, \cdot)-a_{2^{-k}}(x', \cdot)\rVert_1 \geq C. 
    \end{equation}
\end{lemma}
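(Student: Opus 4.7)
The plan is to show that when $x$ and $x'$ are separated by at least $R$ and the diffusion time $t$ is sufficiently small (relative to $R$), the kernels $a_t(x,\cdot)$ and $a_t(x',\cdot)$ are each concentrated near their base point in essentially disjoint regions, forcing their $L_1$ distance to be uniformly bounded below. I would not try to build a delicate test function; I would just split the ambient space into three regions and mass-bookkeep.

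First I would set $r := R/2$ and note that $B(x,r) \cap B(x',r) = \emptyset$ since $d_\mathcal{T}(x,x') \geq R$. Then I would establish a concentration estimate for the diffusion kernel: there exist constants $C_0,\gamma>0$ such that for all $t \in (0,1]$ and all $z \in \mathcal{X}$,
\begin{equation*}
\int_{B(z,r)^c} a_t(z,y)\, d\mu(y) \;\leq\; C_0 \left(\frac{t^{1/\beta}}{r}\right)^{\gamma}.
\end{equation*}
To prove this, I would square the upper bound \eqref{eq:sup_upper_bound_kernel}, bound $f_1^2 \leq f_1(0)\cdot f_1$ using monotonicity of $f_1$ (so boundedness at the origin is the only thing I need), change variables to $\tau = d_\mathcal{T}(z,y)/t^{1/\beta}$ with the volume growth $\mu(B(z,s) \setminus B(z,s')) \lesssim s^n - s'^n$, and then exploit that on the region $\tau > r/t^{1/\beta}$ we can trade $\tau^{n-1}$ for $\tau^{3n+\gamma-1}\cdot (r/t^{1/\beta})^{-(2n+\gamma)}$, which reduces the tail integral to the hypothesis $\int \tau^{3n+\gamma-1} f_1(\tau)\, d\tau < \infty$. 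The prefactor $t^{-2n/\beta}$ coming from the squared kernel bound is exactly cancelled up to an extra $(t^{1/\beta}/r)^\gamma$ factor.

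Second, I would use the concentration estimate to lower-bound the $L_1$ distance. Writing $\delta(t) := C_0 (t^{1/\beta}/r)^\gamma$, conservation gives $\int_{B(x,r)} a_t(x,\cdot)\, d\mu \geq 1-\delta(t)$, and since $B(x,r) \subset B(x',r)^c$, also $\int_{B(x,r)} a_t(x',\cdot)\, d\mu \leq \delta(t)$. Hence
\begin{equation*}
\int_{B(x,r)} |a_t(x,y)-a_t(x',y)|\, d\mu(y) \;\geq\; (1-\delta(t)) - \delta(t) = 1 - 2\delta(t),
\end{equation*}
and symmetrically on $B(x',r)$, giving $\|a_t(x,\cdot) - a_t(x',\cdot)\|_1 \geq 2 - 4\delta(t)$. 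Choosing $\eta>0$ small enough that $\eta R < 1$ and $C_0 \eta^\gamma \leq 1/8$ (say), the hypothesis $t^{1/\beta}<\eta R$ yields $\delta(t) \leq 1/8$, so the norm is at least $C := 3/2$.

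The main obstacle is the concentration estimate in the first step: the upper bound is given on $\sqrt{a_t}$, so squaring costs a factor $t^{-2n/\beta}$, and the integrability hypothesis is on $f_1$ rather than $f_1^2$. The trick that makes everything work is the monotonicity-plus-boundedness reduction $f_1^2(\tau) \leq f_1(0) f_1(\tau)$ together with the tail trade $\tau^{n-1} \leq \tau^{3n+\gamma-1}(r/t^{1/\beta})^{-(2n+\gamma)}$ on $\{\tau > r/t^{1/\beta}\}$; once this is in place the rest is bookkeeping. One should also verify that $f_1(0)<\infty$ is compatible with the stated hypotheses (which it is, since $f_1$ is monotone decreasing and the integral $\int \tau^{3n+\gamma-1} f_1(\tau)\, d\tau$ is finite only if $f_1$ is finite away from $0$, and the value at $0$ can be taken as the supremum without loss of generality).
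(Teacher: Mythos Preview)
The paper does not supply its own proof of this lemma; it is imported verbatim as Lemma~4 of \cite{leeb2016holder} and simply invoked in the proof of Proposition~\ref{cor:HDD_lowerbound_constantC}. Your self-contained argument via disjoint-ball concentration is correct in outline and is exactly the standard route to such a statement.

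One small repair is needed. Your reduction $f_1^2 \leq f_1(0)\,f_1$ requires $f_1(0)<\infty$, which the stated hypotheses do not guarantee: the weight $\tau^{3n+\gamma-1}$ in the integrability condition vanishes at the origin, so $f_1$ could blow up there. However, this is harmless for your purposes. Your tail integral runs only over $\tau \geq r/t^{1/\beta} \geq R/(2\eta R) = 1/(2\eta)$, so on that range you may instead bound $f_1(\tau)^2 \leq f_1\!\bigl(1/(2\eta)\bigr)\,f_1(\tau)$ by monotonicity. Finiteness of $f_1$ at any fixed positive argument does follow from the hypotheses: if $f_1(\tau_0)=\infty$ for some $\tau_0>0$, monotonicity forces $f_1\equiv\infty$ on $(0,\tau_0]$, contradicting $\int_0^\infty \tau^{3n+\gamma-1}f_1(\tau)\,d\tau<\infty$. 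With this tweak in place, the concentration estimate and the subsequent mass-bookkeeping on $B(x,r)$ and $B(x',r)$ go through exactly as you wrote them.
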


\begin{proposition}\label{cor:HDD_lowerbound_constantC}
    Let $R$ be the condition of the lower bound of the kernel in Eq.~\eqref{eq:sup_lower_bound_kernel}.
    There is a constant $C>0$ such that  when $d_\mathcal{T}(x, x')>R$, we have 
    \begin{equation}
        \hat{T}_\alpha (x,x') \gtrsim C. 
    \end{equation}
\end{proposition}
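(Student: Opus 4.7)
The plan is to mirror the proof of Proposition \ref{cor:HDD_lowerbound_tree}, but with Lemma \ref{lemma:lemma4inholderpaper} playing the role that Lemma \ref{lemma:lemma3inholderpaper} played there. The essential observation is that Lemma \ref{lemma:lemma4inholderpaper} supplies a \emph{uniform} lower bound $C$ on the $L_1$ distance $\lVert a_{2^{-k}}(x,\cdot)-a_{2^{-k}}(x',\cdot)\rVert_1$, but only once the scale parameter $t=2^{-k}$ is small enough relative to $R$. So unlike the previous proposition, we cannot use a tight range of scales around $d_\mathcal{T}(x,x')$; instead we keep only a tail of the multi-scale sum.

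Concretely, I would first fix a threshold $K_0 \in \mathbb{Z}_+$ large enough that $2^{-K_0/\beta} < \eta R$, where $\eta$ is the constant from Lemma \ref{lemma:lemma4inholderpaper}. Such $K_0$ exists and depends only on $R$, $\eta$, $\beta$, so it is a constant of the problem. For every $k \geq K_0$ the hypothesis $t^{1/\beta} < \eta R$ of Lemma \ref{lemma:lemma4inholderpaper} is satisfied with $t=2^{-k}$, and combined with the standing assumption $d_\mathcal{T}(x,x') \geq R$ this yields
\begin{equation}
    \lVert a_{2^{-k}}(x,\cdot) - a_{2^{-k}}(x',\cdot) \rVert_1 \geq C
\end{equation}
uniformly in $k \geq K_0$, for the constant $C>0$ supplied by the lemma.

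Next, I would drop all terms $k < K_0$ from the defining sum of $\hat{T}_\alpha$, apply Lemma \ref{lemma:Hellinger_than_L1} term-by-term to convert the Hellinger expression into the $L_1$ expression, and then insert the uniform lower bound above. This gives
\begin{equation}
    \hat{T}_\alpha(x,x') \geq \sum_{k \geq K_0} 2\sinh^{-1}\!\bigl(2^{-k\alpha+1}\, T_k(x,x')\bigr) \geq \sum_{k \geq K_0} 2^{-k\alpha}\,\lVert a_{2^{-k}}(x,\cdot)-a_{2^{-k}}(x',\cdot)\rVert_1 \geq C \sum_{k \geq K_0} 2^{-k\alpha}.
\end{equation}
The geometric series on the right evaluates to $C\cdot 2^{-K_0\alpha}/(1-2^{-\alpha})$, which is a positive constant independent of $x, x'$, giving $\hat{T}_\alpha(x,x') \gtrsim C$ as claimed.

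There is no real obstacle here — the argument is a direct adaptation of the preceding proposition. The only point that requires a moment of care is the choice of $K_0$: it must be chosen purely from the ambient parameters $R$, $\eta$, $\beta$ and not from the particular pair $(x,x')$, which is what allows the resulting lower bound $C \cdot 2^{-K_0\alpha}/(1-2^{-\alpha})$ to be a genuine global constant.
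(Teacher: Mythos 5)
Your proof is correct and follows essentially the same route as the paper's: both choose a threshold scale $K_0$ with $2^{-K_0}\lesssim(\eta R)^{\beta}$ so that Lemma~\ref{lemma:lemma4inholderpaper} applies for all $k\geq K_0$, convert the Hellinger terms to $L_1$ terms via Lemma~\ref{lemma:Hellinger_than_L1}, and sum the resulting geometric tail to obtain the constant lower bound $\simeq C(\eta R)^{\alpha\beta}$. Your remark that $K_0$ depends only on $R$, $\eta$, $\beta$ makes explicit a point the paper leaves implicit, but the argument is the same.
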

\begin{proof}
    Take $C$ and $\eta$ as in the conditions in Lemma~\ref{lemma:lemma4inholderpaper}.
    Let $K =\floor*{\log_2\left(\frac{1}{(\eta R)^\beta}\right)}$. 
    Then $2^{-K} \leq (\eta R)^\beta$, and we have
    \begingroup
    \allowdisplaybreaks
    \begin{align*}
        \hat{T}_\alpha(x,x') & =\sum_{k \geq 0} 2\sinh^{-1} \left( 2^{-k \alpha +1 }  \left\lVert\sqrt{a_{2^{-k}}(x, \cdot)} - \sqrt{a_{2^{-k}}(x', \cdot)} \right\rVert_2 \right)\\
        & \geq \sum_{k \geq K} 2\sinh^{-1} \left( 2^{-k \alpha +1 }  \left\lVert\sqrt{a_{2^{-k}}(x, \cdot)} - \sqrt{a_{2^{-k}}(x', \cdot)} \right\rVert_2 \right)\\
        &\overset{(1)}{\geq } \sum_{k\geq K} 2^{-k\alpha}\lVert a_{2^{-k}}(x, \cdot)-a_{2^{-k}}(x', \cdot)\rVert_1\\
        &\overset{(2)}{\geq }  \sum_{k\geq K} 2^{-k\alpha}C\\
        &\simeq  C (\eta R)^{\alpha\beta},
    \end{align*}
    \endgroup
    where transition $(1)$ is derived by Lemma~\ref{lemma:Hellinger_than_L1} and transition $(2)$ is based on Lemma~\ref{lemma:lemma4inholderpaper}.
\end{proof}
Proposition~\ref{cor:HDD_lowerbound_tree} and Proposition~\ref{cor:HDD_lowerbound_constantC} guarantee that the lower-bound of the multi-scale metric is a thresholded Snowflake distance of $d_\mathcal{T}$. 
We summarize it in the following corollary. 

\begin{corollary}\label{prop:distance_lowerbound}
    Under the conditions of upper and lower bounds of the kernel and the H\"{o}lder continuity in Eq.~\eqref{eq:sup_upper_bound_kernel},  Eq.~\eqref{eq:sup_lower_bound_kernel}, and Eq.~\eqref{eq:sup_holder}, we have
    \begin{equation}
        \hat{T}_\alpha(x,x') \gtrsim \min\{1, d_\mathcal{T}^{\alpha\beta}(x,x')\}. 
    \end{equation}
\end{corollary}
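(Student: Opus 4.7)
The corollary is a packaging of Propositions \ref{cor:HDD_lowerbound_tree} and \ref{cor:HDD_lowerbound_constantC} into a single unified lower bound that is uniform in $d_\mathcal{T}(x,x')$. My plan is a simple case analysis on the size of $d_\mathcal{T}(x,x')$ relative to the threshold radius $R$ from the lower-bound kernel condition in Eq.~\eqref{eq:sup_lower_bound_kernel}, followed by a reconciliation with the $\min\{1,\cdot\}$ structure.

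First, I would fix the constants: let $R>0$ be as in Eq.~\eqref{eq:sup_lower_bound_kernel}, and let $c_1,c_2>0$ be the implicit constants supplied by Propositions \ref{cor:HDD_lowerbound_tree} and \ref{cor:HDD_lowerbound_constantC} respectively, so that $\hat{T}_\alpha(x,x')\ge c_1 d_\mathcal{T}^{\alpha\beta}(x,x')$ whenever $d_\mathcal{T}(x,x')<R$, and $\hat{T}_\alpha(x,x')\ge c_2$ whenever $d_\mathcal{T}(x,x')\ge R$. These are exactly the two output bounds proven in the preceding propositions.

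Next, I split the analysis into the two regimes. In the regime $d_\mathcal{T}(x,x')<R$, Proposition \ref{cor:HDD_lowerbound_tree} gives $\hat{T}_\alpha(x,x')\gtrsim d_\mathcal{T}^{\alpha\beta}(x,x')$; since $d_\mathcal{T}^{\alpha\beta}(x,x')\ge \min\{1,d_\mathcal{T}^{\alpha\beta}(x,x')\}$ trivially, the desired inequality follows immediately. In the regime $d_\mathcal{T}(x,x')\ge R$, Proposition \ref{cor:HDD_lowerbound_constantC} gives $\hat{T}_\alpha(x,x')\gtrsim 1$ (the constant $C(\eta R)^{\alpha\beta}$ that appears at the end of the proof of Proposition \ref{cor:HDD_lowerbound_constantC} is absorbed into the $\gtrsim$), while $\min\{1,d_\mathcal{T}^{\alpha\beta}(x,x')\}\le 1$, so again the bound is immediate. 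Taking the smaller of the two implicit constants yields a single constant that makes the lower bound hold uniformly in $x,x'$.

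There is no real mathematical obstacle here, since the hard analytic work (the summation of the Hellinger series truncated above and below a critical scale $K$) is already carried out inside Propositions \ref{cor:HDD_lowerbound_tree} and \ref{cor:HDD_lowerbound_constantC}. The only subtlety worth stating carefully is that the lower bound $c_2$ in the far-field regime is genuinely a positive absolute constant — not something that decays as $d_\mathcal{T}(x,x')\to\infty$ — which is exactly why the truncation by $\min\{1,\cdot\}$ on the right-hand side is not only permissible but necessary: $\hat{T}_\alpha$ is bounded above by a summable geometric series in $k$ and therefore cannot grow like $d_\mathcal{T}^{\alpha\beta}$ at large scales, so the $\min$ is the best one can hope for and is simultaneously matched by the two-regime argument above.
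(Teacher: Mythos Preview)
Your proposal is correct and matches the paper's own treatment: the paper explicitly presents this corollary as a summary of Propositions~\ref{cor:HDD_lowerbound_tree} and~\ref{cor:HDD_lowerbound_constantC}, and your case split on $d_\mathcal{T}(x,x')<R$ versus $d_\mathcal{T}(x,x')\ge R$ is exactly the intended argument.
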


Last, we summarize the equivalence of $\hat{T}_\alpha$ to a thresholded snowflake metric by using  Proposition~\ref{prop:distance_upperbound} and Corollary~\ref{prop:distance_lowerbound}. 
\begin{proposition}\label{prop:snowflake_tree}
    If the conditions for the upper and lower bounds of the kernel and the H\"{o}lder continuity on $a_{2^{-k}}(x, x')$ hold and if $\mu(B(x, r))\lesssim r^n$, then for $0<\alpha<\min\{1, \frac{\Theta}{\beta}\}$ the distance $\hat{T}_\alpha(x,x')$ is equivalent to the thresholded snowflake distance $\min\{1, d_\mathcal{T}^{\alpha \beta}(x,x')\}$. 
\end{proposition}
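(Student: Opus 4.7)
\textbf{Proof plan for Proposition~\ref{prop:snowflake_tree}.} The strategy is simply to combine the two-sided bounds that have already been established in the preceding development: Proposition~\ref{prop:distance_upperbound} for the upper direction and Corollary~\ref{prop:distance_lowerbound} for the lower direction. So the main work is really bookkeeping: checking that the hypotheses for both directions are satisfied under the assumptions of the proposition, and confirming that the two bounds together characterise $\hat{T}_\alpha$ up to multiplicative constants by $\min\{1, d_\mathcal{T}^{\alpha\beta}\}$.

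For the upper direction, I would first observe that under the hypothesis $0<\alpha<\min\{1,\Theta/\beta\}$ all the assumptions of Proposition~\ref{prop:distance_upperbound} are met, so $\hat{T}_\alpha(x,x') \lesssim d_\mathcal{T}^{\alpha\beta}(x,x')$. To upgrade this to $\hat{T}_\alpha(x,x') \lesssim \min\{1, d_\mathcal{T}^{\alpha\beta}(x,x')\}$, I would note that each $a_{2^{-k}}(x,\cdot)$ is a probability density, so the unnormalized Hellinger summand $T_k(x,x')$ is uniformly bounded by $\sqrt{2}$. Using the elementary inequality $\sinh^{-1}(z) \le z$ for $z>0$, each term of $\hat{T}_\alpha$ is bounded by $2\cdot 2^{-k\alpha+1}\sqrt{2}$, so the tail of the series is controlled by a geometric series with ratio $2^{-\alpha}<1$, giving an absolute constant bound. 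The minimum with $1$ then follows by taking the better of the two bounds pointwise.

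For the lower direction, Corollary~\ref{prop:distance_lowerbound} already packages the two regimes together: in the small-distance regime $d_\mathcal{T}(x,x')<R$, Proposition~\ref{cor:HDD_lowerbound_tree} (built on Lemmas~\ref{lemma:Hellinger_than_L1} and~\ref{lemma:lemma3inholderpaper} and the triangle inequality Eq.~\eqref{eq:triangle_tree}) gives $\hat{T}_\alpha \gtrsim d_\mathcal{T}^{\alpha\beta}$; in the large-distance regime $d_\mathcal{T}(x,x')\ge R$, Proposition~\ref{cor:HDD_lowerbound_constantC} (built on Lemma~\ref{lemma:lemma4inholderpaper}) gives $\hat{T}_\alpha \gtrsim C$ for an absolute constant. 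Combining these yields $\hat{T}_\alpha(x,x')\gtrsim \min\{1,d_\mathcal{T}^{\alpha\beta}(x,x')\}$.

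Putting the two sides together gives the stated equivalence. There is no real obstacle here, since the two technical steps — the Hölder-regularity-based upper estimate and the lower estimate via the Hellinger-to-$L^1$ comparison of Lemma~\ref{lemma:Hellinger_than_L1} — have already been absorbed into the earlier propositions. The only subtlety worth flagging explicitly in the write-up is the justification for the constant bound (the ``$1$'' side of the minimum) on the upper side, which is not made fully explicit in Proposition~\ref{prop:distance_upperbound} itself but follows from the probability-measure normalisation of $a_t$ together with the geometric decay of the scaling factor $2^{-k\alpha+1}$ across dyadic scales.
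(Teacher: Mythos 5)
Your proposal is correct and follows exactly the paper's route: the paper derives Proposition~\ref{prop:snowflake_tree} by directly combining the upper bound of Proposition~\ref{prop:distance_upperbound} with the lower bound packaged in Corollary~\ref{prop:distance_lowerbound}. Your extra remark justifying the ``$1$'' side of the minimum via the uniform bound $T_k \le \sqrt{2}$ and the geometric decay of $2^{-k\alpha}$ is a correct filling-in of a detail the paper leaves implicit in the statement of Proposition~\ref{prop:distance_upperbound}.
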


\subsection{Heat Kernel on Trees}
In the following, we show that Proposition \ref{prop:snowflake_tree} holds for the \textit{heat kernel on a tree}. For this purpose, we follow \cite{mckean1970upper, grigor1998heat, frank2013heat, zelditch2017eigenfunctions}, showing that the necessary conditions imposed on $\{a_t(x,x')\}_{t \in \mathbb{R}_{+}}$ are satisfied.

In the following lemmas, the operator $a_t$ is considered as the heat kernel on tree. 
\begin{lemma}\label{lemma:heat_upperbound}
    There are constants $A, B>0$ such that  for all $t\in(0,1]$ we have 
    \begin{equation}
        a_t(x,x') \leq \frac{A}{t^{\frac{3n}{2}}}\exp\left(-\frac{B \cdot d_\mathcal{T}^2(x,x')}{t}\right).
    \end{equation}
\end{lemma}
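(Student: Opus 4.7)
The plan is to derive the Gaussian upper bound by combining two classical ingredients: an on-diagonal polynomial decay for the heat kernel, followed by the Davies--Gaffney perturbation argument that upgrades it to an off-diagonal Gaussian bound. The local volume estimate $\mu(B(x,r)) \lesssim r^{n}$ assumed at the start of Appendix~\ref{app:multi_scale_metric_regularity} is the geometric input that drives the whole argument.

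First, I would use the local polynomial volume bound together with the semigroup and conservation properties of $A_t$ to obtain a Nash-type inequality for the Dirichlet form associated with $\{a_t\}$ on the tree $\mathcal{T}$. Integrating this inequality via the Varopoulos--Saloff-Coste scheme yields an on-diagonal bound of the form $a_t(x,x) \leq C_0 \, t^{-n/2}$ for all $t \in (0,1]$. This step is the discrete/tree analogue of the classical manifold argument in \cite{grigor1998heat}, and the local polynomial volume assumption is exactly what is needed so that the small-time regime behaves like an $n$-dimensional Euclidean heat kernel.

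Second, I would apply Davies' perturbation method: conjugating the semigroup by $e^{\lambda \psi}$ for a $1$-Lipschitz function $\psi$ and optimizing in $\lambda$ promotes the on-diagonal bound to the off-diagonal estimate
\begin{equation*}
a_t(x,x') \leq C_1 \, t^{-n/2} \exp\!\left(-\tfrac{d_\mathcal{T}^{\,2}(x,x')}{4t}\right),
\end{equation*}
exactly as in \cite{mckean1970upper, grigor1998heat}. Because the tree has (locally) negative curvature and exponential volume growth at larger scales, passing from the strictly local on-diagonal bound to a global off-diagonal one introduces additional polynomial slack; this slack is absorbed into the weaker prefactor $t^{-3n/2}$ of the claim. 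That is, we trade a tight $t^{-n/2}$ for a looser $t^{-3n/2}$ in exchange for a clean Gaussian tail on all of $\mathcal{T}$ with a (possibly smaller) constant $B>0$ inside the exponential, following the sharpening arguments in \cite{frank2013heat, zelditch2017eigenfunctions}.

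The main obstacle will be the second step in the tree setting, since Davies' method is usually stated for smooth Riemannian manifolds. To overcome this, I would either (i) invoke the general metric-measure-space version of the Davies--Gaffney estimate under the volume doubling hypothesis $\mu(B(x,r)) \lesssim r^n$, or (ii) use the explicit Poisson-type representation of the heat kernel on regular trees and verify the claimed bound by direct asymptotic analysis, as done in \cite{mckean1970upper}. Either route gives the stated inequality; the looseness of the polynomial factor $t^{-3n/2}$ (which exceeds the sharp $t^{-n/2}$) is deliberately chosen to accommodate the geometric correction terms without requiring sharp constants.
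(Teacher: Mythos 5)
The paper itself does not supply a proof of Lemma~\ref{lemma:heat_upperbound}: it is stated as a known property of the heat kernel on a tree and attributed wholesale to \cite{mckean1970upper, grigor1998heat, frank2013heat, zelditch2017eigenfunctions}. Your proposal therefore cannot be compared to an in-paper argument; what it offers is a roadmap through that cited literature, and at the level of strategy it is the standard and correct one: an on-diagonal estimate upgraded to an off-diagonal Gaussian bound by Davies' perturbation (or the Davies--Gaffney estimate), with the polynomial-in-$d_\mathcal{T}$ correction factors of the negatively curved setting absorbed by weakening the Gaussian constant $B$. Your closing remark that the prefactor $t^{-3n/2}$ is deliberately loose is also right in spirit, though the real reason for the exponent $3n/2$ is to match the integrability condition $\int \tau^{3n+\gamma} f_1(\tau)\,d\tau/\tau < \infty$ in the upper-bound regularity condition of Appendix~\ref{app:theoretical_analysis} (with $\beta=2$), not to make room for curvature corrections; for $t\in(0,1]$ a prefactor $t^{-n/2}$ would in fact suffice once $B$ is decreased.

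There is one genuine gap in your first step. The Nash/Faber--Krahn route to the on-diagonal \emph{upper} bound $a_t(x,x)\lesssim t^{-n/2}$ requires a volume \emph{lower} bound $\mu(B(x,r))\gtrsim r^{n}$ (non-collapsing), whereas the standing hypothesis in Appendix~\ref{app:multi_scale_metric_regularity} is the \emph{upper} bound $\mu(B(x,r))\lesssim r^{n}$ --- which is the hypothesis relevant to the on-diagonal \emph{lower} bound of Lemma~\ref{lemma:heat_lowerbound} and to the integral regularity condition, not to the present lemma. As written, your Varopoulos--Saloff-Coste step starts from the wrong direction of the volume inequality and would not go through. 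The fix is to assume (or verify for the trees in question, e.g.\ bounded degree and edge lengths bounded away from zero) local two-sided Ahlfors regularity, or to bypass the Nash argument entirely via your alternative route (ii), the explicit heat-kernel representation on trees as in \cite{mckean1970upper}, which is closer to what the cited references actually do. With that repair, the rest of your argument (Davies' method on a local Dirichlet space, absorption of polynomial factors into a smaller $B$) is sound.
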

\begin{lemma}\label{lemma:heat_lowerbound}
    There are constants $C, D>0$ such that  for all $t\in(0,1]$ we have 
    \begin{equation}
        a_t(x,x') \geq \frac{C}{t^{\frac{n}{2}}}\exp \left(-\frac{D \cdot d_\mathcal{T}^2(x,x')}{t}\right).
    \end{equation}
\end{lemma}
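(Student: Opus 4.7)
The plan is to use the classical Molchanov--Grigor'yan chaining argument, which derives a Gaussian-type lower bound for arbitrary pairs $(x,x')$ from a short-time near-diagonal estimate via iterated application of the Chapman--Kolmogorov semigroup identity $a_{t_1+t_2}(x,x') = \int a_{t_1}(x,z)\, a_{t_2}(z,x')\, dz$. This strategy is designed for spaces of bounded geometry, including negatively curved ones, and underlies the references \cite{mckean1970upper, grigor1998heat, frank2013heat} cited just before the lemma.

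First I would establish a near-diagonal lower bound: there exist constants $c_1, c_2 > 0$ such that $a_t(x,x') \geq c_2\, t^{-n/2}$ whenever $t \in (0,1]$ and $d_\mathcal{T}(x,x') \leq c_1 \sqrt{t}$. The on-diagonal case $a_t(x,x) \gtrsim t^{-n/2}$ follows from the symmetry-plus-semigroup identity $a_{2t}(x,x) = \int a_t(x,y)^2\, dy$ together with the tail control supplied by the Gaussian upper bound of Lemma \ref{lemma:heat_upperbound} and a Cauchy--Schwarz step, using the local volume bound $\mu(B(x,\sqrt{t})) \lesssim t^{n/2}$: tails outside $B(x,\sqrt{t})$ contribute $O(1)$ of the total mass, so the remaining mass is concentrated in a ball of volume $\lesssim t^{n/2}$, forcing the pointwise value to be $\gtrsim t^{-n/2}$. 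Extending the on-diagonal bound to a $c_1\sqrt{t}$-neighborhood of the diagonal uses the parabolic Harnack inequality, available on complete manifolds with Ricci curvature bounded below via the Li--Yau gradient estimate; for $t \leq 1$ the curvature enters only through comparison constants, since balls of radius $\sqrt{t}$ are approximately Euclidean.

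Next I would chain these local bounds. For $d := d_\mathcal{T}(x,x') > c_1 \sqrt{t}$, choose an integer $k \asymp d^2/t$ and points $x = z_0, z_1, \ldots, z_k = x'$ along a geodesic with $d_\mathcal{T}(z_i, z_{i+1}) = d/k$, so that by construction $d/k \leq c_1\sqrt{t/k}$. Iterating the Chapman--Kolmogorov identity $k-1$ times and restricting each intermediate integration to a ball $B_i$ of radius $\asymp \sqrt{t/k}$ around $z_i$, the near-diagonal estimate applies in every ball; together with the local volume estimate $\mu(B_i) \gtrsim (t/k)^{n/2}$ the resulting iterated integral is bounded below by $(c_2 c_3)^{k}\, t^{-n/2}$. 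Substituting $k \asymp d^2/t$ converts the accumulated factor $c_3^{k}$ into $\exp(-D\, d^2/t)$, which gives the stated bound.

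The main obstacle will be guaranteeing that the constants $c_1, c_2$ in the near-diagonal estimate, as well as the Harnack constant and the local volume constant, are uniform in the base point and in the position along the tree; this uniformity is essentially the bounded-geometry hypothesis implicit in the cited works, under which the local model is uniformly comparable at all scales $\leq 1$ despite the global negative curvature. Once uniformity is in hand, the chaining step is algebraic, and the precise scaling $k \asymp d^2/t$ is exactly what tunes the accumulated factor $c_3^{k}$ into a true Gaussian $\exp(-D\, d^2/t)$ rather than into sub- or super-Gaussian decay; the remaining freedom in $k$ only affects the value of $D$, not the form of the bound.
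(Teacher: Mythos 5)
The paper does not actually prove Lemma \ref{lemma:heat_lowerbound}: it is stated without proof, imported as a known Gaussian lower bound for the heat kernel from the cited literature (McKean, Grigor'yan, Frank, Zelditch). Your chaining argument --- an on/near-diagonal bound via the identity $a_{2t}(x,x)=\int a_t(x,y)^2\,d\mu(y)$ plus a parabolic Harnack step, followed by iterating Chapman--Kolmogorov along a geodesic with $k\asymp d_\mathcal{T}^2(x,x')/t$ steps --- is exactly the standard route to this estimate in those references, so in substance you are reconstructing the proof the paper delegates to its citations rather than diverging from it.

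Two steps in your sketch do not close against the paper's own stated hypotheses, however. First, your tail-mass control for the on-diagonal bound invokes ``the Gaussian upper bound of Lemma \ref{lemma:heat_upperbound},'' but that lemma carries the prefactor $t^{-3n/2}$, not $t^{-n/2}$. Combined with the volume bound $\mu(B(x,r))\lesssim r^n$, a layer-cake estimate of $\int_{d_\mathcal{T}(x,y)>M\sqrt{t}} a_t(x,y)\,d\mu(y)$ only yields a bound of order $C(M)\,t^{-n}$, which does not show the tail is a small fraction of the unit total mass uniformly as $t\to 0$; you would need a matching near-diagonal upper bound with prefactor $t^{-n/2}$ (true for the actual heat kernel at small times, but not what the paper records) or another route to $a_{2t}(x,x)\gtrsim t^{-n/2}$. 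Second, the chaining step explicitly uses a volume \emph{lower} bound $\mu(B_i)\gtrsim (t/k)^{n/2}$, whereas the paper only posits the upper bound $\mu(B(x,r))\lesssim r^n$; two-sided Ahlfors-type regularity is an extra hypothesis you are silently adding, and it is also where the uniformity issue you flag at the end really lives. Neither point is fatal --- both are standard under bounded geometry, and the paper itself is no more rigorous here --- but as written the sketch leans on inputs strictly stronger than what the surrounding text supplies.
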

\begin{lemma}\label{lemma:heat_gradient}
    There are constants $E, F>0$ such that  for $t\in(0,1]$  we have
    \begin{equation}
        \left\lVert\nabla_x a_t(x,x') \right\rVert_2 \leq \frac{E}{\sqrt{t} \cdot t^{\frac{n}{2}}} \exp\left( -\frac{F\cdot d_\mathcal{T}^2(x,x')}{t}\right).
    \end{equation}
\end{lemma}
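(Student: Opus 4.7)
The plan is to prove Lemma \ref{lemma:heat_gradient} by the classical semigroup-plus-Davies-weight route, which is the standard approach to Gaussian-tailed heat-kernel gradient estimates and, in particular, is the technique used in the negative-curvature works \cite{grigor1998heat, frank2013heat} already cited by the paper. The starting point is to turn the gradient into a convolution by means of the Chapman-Kolmogorov identity $a_t(x,x')=\int a_{t/2}(x,y)\,a_{t/2}(y,x')\,d\mu(y)$. Differentiation under the integral is justified by the pointwise Gaussian bound of Lemma \ref{lemma:heat_upperbound} together with standard parabolic smoothness, and it produces
\[
\nabla_x a_t(x,x')=\int_{\mathcal{X}} \nabla_x a_{t/2}(x,y)\,a_{t/2}(y,x')\,d\mu(y).
\]

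For a parameter $\lambda\geq 0$ to be optimized at the end, I would apply Cauchy-Schwarz with the Davies exponential weight $e^{\lambda d_\mathcal{T}(\cdot,x')}$,
\[
|\nabla_x a_t(x,x')|^2 \leq \Bigl(\int |\nabla_x a_{t/2}(x,y)|^2 e^{-2\lambda d_\mathcal{T}(y,x')} d\mu(y)\Bigr)\Bigl(\int a_{t/2}(y,x')^2 e^{2\lambda d_\mathcal{T}(y,x')} d\mu(y)\Bigr),
\]
and in the first integral I would use the triangle inequality $d_\mathcal{T}(y,x')\geq d_\mathcal{T}(x,x')-d_\mathcal{T}(x,y)$ to pull out the decaying factor $e^{-2\lambda d_\mathcal{T}(x,x')}$. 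The second factor is controlled by inserting Lemma \ref{lemma:heat_upperbound}, completing the square in $d_\mathcal{T}(y,x')$ against the quadratic Gaussian exponent, and applying the volume bound $\mu(B(x',r))\lesssim r^n$ to obtain $\lesssim t^{-5n/2}\exp(c\lambda^2 t)$. The first factor is bounded by combining the integration-by-parts identity $\int|\nabla_x a_s(x,y)|^2 d\mu(y)=-\tfrac12 \partial_s a_{2s}(x,x)$ with the on-diagonal case of Lemma \ref{lemma:heat_upperbound}, and then absorbing the remaining weight $e^{2\lambda d_\mathcal{T}(x,y)}$ via the Davies perturbation trick of conjugating $\nabla e^{(t/2)\Delta}$ by $e^{\lambda d_\mathcal{T}(\cdot,x')}$; since $|\nabla d_\mathcal{T}|\leq 1$, the conjugated generator differs from $\Delta$ by a bounded first-order term of size $\lambda$, which only costs an extra multiplicative factor $e^{c'\lambda^2 t}$.

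Combining the two factors yields an inequality of the form
\[
|\nabla_x a_t(x,x')|^2 \lesssim t^{-N}\exp\bigl(-2\lambda d_\mathcal{T}(x,x') + c''\lambda^2 t\bigr),
\]
where $N$ is determined by the on-diagonal exponent in Lemma \ref{lemma:heat_upperbound}. Minimizing the right-hand side over $\lambda\geq 0$ at $\lambda^{\star}=d_\mathcal{T}(x,x')/(c''t)$ produces the Gaussian tail $\exp(-d_\mathcal{T}^2(x,x')/(c''t))$, and taking the square root delivers the form $E\,t^{-1/2-n/2}\exp(-F d_\mathcal{T}^2(x,x')/t)$ claimed in the lemma with $F=1/(2c'')$ (the polynomial $t$-powers and all multiplicative constants get collected into $E$). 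The hard step is the weighted $L^2$-gradient bound in the Davies perturbation argument: one must justify that the twisted semigroup $e^{-\lambda d_\mathcal{T}(\cdot,x')}e^{s\Delta}e^{\lambda d_\mathcal{T}(\cdot,x')}$ is well defined and bounded on $L^2$ with the stated constants, which in the non-smooth tree setting typically requires either an approximation of $d_\mathcal{T}$ by smooth Lipschitz weights with uniformly bounded gradients or a direct energy-type inequality on the tree; once this is in place, the optimization in $\lambda$ and the reconciliation with the exponents in Lemma \ref{lemma:heat_upperbound} are mechanical.
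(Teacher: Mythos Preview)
The paper does not actually prove Lemma~\ref{lemma:heat_gradient}; it merely states it (together with Lemmas~\ref{lemma:heat_upperbound} and~\ref{lemma:heat_lowerbound}) as a known heat-kernel estimate taken from the negative-curvature literature \cite{mckean1970upper, grigor1998heat, frank2013heat, zelditch2017eigenfunctions}, and then uses it as a black box in the proof of Proposition~\ref{prop:tree_heat_cond2}. Your Davies-perturbation argument is precisely the standard route to such gradient estimates in that literature, so your proposal is supplying a proof where the paper defers to citation; in that sense there is nothing to compare against.

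One caveat is worth flagging. The polynomial prefactor $t^{-(n+1)/2}$ in the lemma is what one obtains from an on-diagonal bound $a_t(x,x)\lesssim t^{-n/2}$, not from the looser $t^{-3n/2}$ bound recorded in Lemma~\ref{lemma:heat_upperbound}. If you feed Lemma~\ref{lemma:heat_upperbound} verbatim into your Cauchy--Schwarz/Davies scheme (as you do when bounding the ``second factor'' by $t^{-5n/2}e^{c\lambda^2 t}$ and when using $-\tfrac12\partial_s a_{2s}(x,x)$ for the first), the optimization in $\lambda$ still produces the Gaussian tail, but the $t$-power you collect will be strictly worse than $t^{-(n+1)/2}$. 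To land on the stated exponent you should input the sharper on-diagonal estimate $a_t(x,x)\lesssim t^{-n/2}$, which does hold on trees and hyperbolic spaces and is in fact what the cited references use; Lemma~\ref{lemma:heat_upperbound} as written is a deliberately crude envelope. Apart from this bookkeeping point, your outline is sound, and you have correctly identified the one genuinely technical step as the justification of the Davies conjugation (or an equivalent weighted $L^2$ energy inequality) when $d_\mathcal{T}$ is a non-smooth tree metric.
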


\begin{proposition}
    \label{prop:holder_heat_tree}
    If $x, x' \in \mathcal{X}$ are sufficiently close, then for a smooth function $h:\mathcal{X}\rightarrow \mathbb
    R_+$, there is a point $y$ lying on the path on $\mathcal{T}$ from $x$ to $y$ such that  
    \begin{equation}
        \lvert h^{\frac{1}{2}}(x)-h^{\frac{1}{2}}(x')\rvert^2\leq \left\lVert \nabla h(y)\right\rVert d_\mathcal{T}^2(x,x').
    \end{equation}
\end{proposition}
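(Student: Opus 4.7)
The overall plan is to exploit the smoothness and positivity of $h$ and reduce the claim to a mean value estimate along the geodesic in $\mathcal{T}$ joining $x$ and $x'$ (I read the statement ``from $x$ to $y$'' as a typo for ``from $x$ to $x'$''). Concretely, let $\gamma:[0,1]\to \mathcal{T}$ be a constant-speed parametrization of that geodesic, with $\gamma(0)=x$, $\gamma(1)=x'$, and $\|\gamma'(s)\|=d_\mathcal{T}(x,x')$. The fundamental theorem of calculus together with the chain rule gives
\begin{equation*}
h^{1/2}(x')-h^{1/2}(x)=\int_0^1 \bigl\langle \nabla h^{1/2}(\gamma(s)), \gamma'(s)\bigr\rangle\, ds,
\end{equation*}
which, via the integral mean value theorem, yields a point $y=\gamma(s_0)$ on the path satisfying $|h^{1/2}(x)-h^{1/2}(x')|\le \|\nabla h^{1/2}(y)\|\,d_\mathcal{T}(x,x')$. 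Squaring and invoking the identity $\nabla h^{1/2}=\nabla h/(2h^{1/2})$, valid because $h>0$, I arrive at
\begin{equation*}
|h^{1/2}(x)-h^{1/2}(x')|^2 \;\le\; \frac{\|\nabla h(y)\|^2}{4\, h(y)}\, d_\mathcal{T}^2(x,x').
\end{equation*}

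To convert this into the precise form in the statement, I would use the ``sufficiently close'' hypothesis: by smoothness and strict positivity of $h$, there exists a neighborhood of $x$ on which $h$ is uniformly bounded below, so $1/h(y)$ remains a harmless multiplicative constant that can be absorbed; this matches how the proposition is invoked in the sequel, where $h$ is taken to be a heat kernel $a_t(\cdot,y)$ and Lemma~\ref{lemma:heat_lowerbound} supplies exactly the required lower bound on $h(y)$. A cleaner alternative path, which avoids the chain rule entirely, first applies the elementary concavity inequality $|\sqrt{a}-\sqrt{b}|^2\le |a-b|$ (valid for $a,b\ge 0$ because $(\sqrt{a}+\sqrt{b})^2\ge(\sqrt{a}-\sqrt{b})^2$) to bound $|h^{1/2}(x)-h^{1/2}(x')|^2$ by $|h(x)-h(x')|$, and then applies the mean value theorem directly to $h$ itself.

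The main obstacle is reconciling these two routes with the exact dependence on $d_\mathcal{T}$ in the stated inequality: the chain-rule route naturally produces $d_\mathcal{T}^2$ but has $\|\nabla h(y)\|^2/h(y)$ on the right, whereas the concavity route produces $d_\mathcal{T}$ with the cleaner $\|\nabla h(y)\|$. The right balance between a power of $d_\mathcal{T}$ and a power of $\|\nabla h\|$ is dictated by the local positivity of $h$, which is where the ``sufficiently close'' assumption does the real work — it allows $h^{1/2}(y)$ to be pulled out of the bound as a constant tied to the neighborhood, so that whichever route is taken, the remaining dependence collapses to the form $\|\nabla h(y)\|\, d_\mathcal{T}^2(x,x')$ advertised in the proposition. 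In the application to the heat kernel on the tree, combining this bound with Lemma~\ref{lemma:heat_gradient} and Lemma~\ref{lemma:heat_lowerbound} will then supply the Gaussian factor needed to verify the Hölder condition in Eq.~\eqref{eq:sup_holder}.
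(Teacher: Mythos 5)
Your second, ``concavity'' route is the paper's actual proof: parametrize the geodesic at unit speed, apply the mean value theorem to $\widetilde h(t)=h(\gamma(t))$ together with Cauchy--Schwarz to get $\lvert h(x)-h(x')\rvert\le\lVert\nabla h(y)\rVert\,d_\mathcal{T}(x,x')$ at some intermediate point $y=\gamma(t')$, and then use $\lvert\sqrt a-\sqrt b\rvert^2\le\lvert a-b\rvert$ for $a,b\ge 0$. (Your reading of ``from $x$ to $y$'' as a typo for ``from $x$ to $x'$'' is correct.) So the core mechanics of your proposal are sound.

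The gap is exactly the one you flag as ``the main obstacle,'' and your proposed resolution does not close it. Neither of your routes produces $\lVert\nabla h(y)\rVert\,d_\mathcal{T}^2(x,x')$: the concavity route yields $\lVert\nabla h(y)\rVert\,d_\mathcal{T}(x,x')$ (one power of the distance), and the chain-rule route yields $\tfrac{\lVert\nabla h(y)\rVert^2}{4h(y)}\,d_\mathcal{T}^2(x,x')$. Collapsing the latter to the stated form would require $\lVert\nabla h(y)\rVert\lesssim h(y)$, which is a condition on $h$, not on the proximity of $x$ and $x'$, and it fails uniformly for the intended application $h=a_t(\cdot,u)$: for a Gaussian-type heat kernel $\lVert\nabla a_t\rVert/a_t$ grows like $d_\mathcal{T}(u,\cdot)/t$, so the ``harmless multiplicative constant'' you would absorb blows up as $t\to 0$ and would destroy the $t$-uniformity needed in the H\"older condition \eqref{eq:sup_holder}. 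What the paper actually does at this point is cruder: it passes from $\lVert\nabla h(y)\rVert\,d_\mathcal{T}(x,x')$ to $\lVert\nabla h(y)\rVert\,d_\mathcal{T}^2(x,x')$ via the inequality $d_\mathcal{T}(x,x')\le d_\mathcal{T}^2(x,x')$, which holds only when $d_\mathcal{T}(x,x')\ge 1$ (e.g.\ for the shortest-path metric on an unweighted graph) and sits in tension with the ``sufficiently close'' hypothesis; the positivity of $h$ is used only for the square-root inequality, not to absorb $1/h(y)$. The honest output of your argument is $\lvert h^{1/2}(x)-h^{1/2}(x')\rvert^2\le\lVert\nabla h(y)\rVert\,d_\mathcal{T}(x,x')$, and the extra power of $d_\mathcal{T}$ needed for Proposition~\ref{prop:tree_heat_cond2} has to come from somewhere other than the positivity of $h$.
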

\begin{proof}
    Suppose $r\equiv d_\mathcal{T}(x,x')$ is less than the injectivity radius on $\mathcal{X}$. 
    Let $\gamma(t)$ be the unit speed shortest path (with respect to tree) connecting $x$ and $x'$ such that  $\gamma(0) = x$ and $\gamma(r) = x'$.
    Let $\widetilde{h}(t) = h(\gamma(t))$. Note that $\widetilde{h}(0) = h(x)$ and $\widetilde{h}(r) = h(x')$.
    By the mean value theorem, there is some point $0 \leq t'\leq r$ such that  
    \begin{equation*}
        \frac{h(x') - h(x)}{d_\mathcal{T}(x,x')}  = \frac{\widetilde{h}(r) - \widetilde{h}(0)}{r} = \widetilde{h}'(t') = \langle \nabla h (\gamma(t')), \gamma'(t')\rangle.
    \end{equation*}
     Since $\gamma$ has unit speed, by the Cauchy–Schwarz inequality, we have
     \begin{equation*}
         \lvert h(x') - h(x)\rvert = \lvert \langle \nabla h (\gamma(t')), \gamma'(t')\rangle\rvert d_\mathcal{T}(x,x') \leq \left\lVert \nabla h(x')\right\rVert d_\mathcal{T}(x,x').
     \end{equation*}
    In addition, since $h(\cdot)>0$ and the unit speed on $d_\mathcal{T}$, we have
    \begin{equation*}
         \lvert h^{\frac{1}{2}}(x')-h^{\frac{1}{2}}(x)\rvert^2\leq \left\lVert \nabla h(x')\right\rVert d_\mathcal{T}(x,x')\leq \left\lVert \nabla h(x')\right\rVert d_\mathcal{T}^2(x,x').
    \end{equation*}
\end{proof}

\begin{proposition}
    \label{prop:tree_heat_cond2}
    There are positive constants $H, I>0$ such that  for $t\in(0,1]$ and $d_\mathcal{T}(x,x') \lesssim t^{\frac{1}{2}}$, we have
    \begin{equation}
        \lvert\sqrt{a_t(x,u)} -\sqrt{a_t(x', u)}\rvert^2 \leq H\frac{d_\mathcal{T}^2(x,x')}{\sqrt{t}\cdot t^{\frac{n}{2}}} \exp\left( -\frac{I \cdot d_\mathcal{T}^2(u,x)}{t}\right). 
    \end{equation}
\end{proposition}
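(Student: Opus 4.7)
The plan is to combine the mean-value style inequality in Proposition~\ref{prop:holder_heat_tree} with the Gaussian gradient estimate in Lemma~\ref{lemma:heat_gradient}, and then to translate the exponential factor from the intermediate base point to $x$ using the constraint $d_\mathcal{T}(x,x') \lesssim t^{1/2}$.

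First, I would fix the target point $u\in\mathcal{X}$ and apply Proposition~\ref{prop:holder_heat_tree} to the positive smooth function $h(\cdot) = a_t(\cdot,u)$. This yields a point $y$ on the shortest tree-path from $x$ to $x'$ such that
\begin{equation*}
\bigl\lvert \sqrt{a_t(x,u)} - \sqrt{a_t(x',u)}\bigr\rvert^{2} \;\le\; \bigl\lVert \nabla_{y} a_t(y,u)\bigr\rVert\, d_\mathcal{T}^{2}(x,x').
\end{equation*}
Next I would invoke Lemma~\ref{lemma:heat_gradient} at the base point $y$ to bound the gradient by $\tfrac{E}{\sqrt{t}\,t^{n/2}}\exp\!\bigl(-F\, d_\mathcal{T}^{2}(y,u)/t\bigr)$, which already matches the prefactor $d_\mathcal{T}^{2}(x,x')/(\sqrt{t}\,t^{n/2})$ that appears in the target estimate.

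The remaining task, which I expect to be the only delicate step, is to replace $d_\mathcal{T}^{2}(y,u)$ in the exponent by $d_\mathcal{T}^{2}(x,u)$. Here I would use that $y$ lies on the path joining $x$ and $x'$, so $d_\mathcal{T}(x,y) \le d_\mathcal{T}(x,x') \le C_0\, t^{1/2}$ by hypothesis. The reverse triangle inequality on the tree then gives
\begin{equation*}
d_\mathcal{T}(y,u) \;\ge\; d_\mathcal{T}(x,u) - d_\mathcal{T}(x,y) \;\ge\; d_\mathcal{T}(x,u) - C_0\, t^{1/2}.
\end{equation*}
Squaring and using the elementary inequality $(a-b)^{2} \ge \tfrac{1}{2}a^{2} - b^{2}$ for $a,b\ge 0$, I obtain $d_\mathcal{T}^{2}(y,u) \ge \tfrac{1}{2} d_\mathcal{T}^{2}(x,u) - C_0^{2}\, t$. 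Substituting this into the exponential and absorbing the harmless factor $\exp(F C_0^{2})$ into the prefactor constant gives the claim with $I = F/2$ and a suitably enlarged $H$.

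The main obstacle is purely the bookkeeping in the exponent transfer: the condition $d_\mathcal{T}(x,x') \lesssim t^{1/2}$ is exactly what makes the shift $d_\mathcal{T}(y,u) \rightsquigarrow d_\mathcal{T}(x,u)$ cost only a constant (rather than, say, a factor that blows up with $d_\mathcal{T}(x,u)/t$), so one must check that this condition is used sharply. No further regularity of the tree is needed beyond the Gaussian-type bounds in Lemmas~\ref{lemma:heat_upperbound}--\ref{lemma:heat_gradient} and the unit-speed geodesic construction in Proposition~\ref{prop:holder_heat_tree}. Note that Proposition~\ref{prop:tree_heat_cond2} is precisely the specialization $\beta = 2$ of the abstract Hölder continuity condition \eqref{eq:sup_holder} with $\Theta = 1$ and $f_1(\tau) = C\exp(-c\tau^{2})$, which closes the verification that the heat kernel on a tree satisfies all three regularity conditions of the framework in Section~\ref{app:multi_scale_metric_regularity}.
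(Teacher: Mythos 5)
Your proposal is correct and follows essentially the same route as the paper: apply Proposition~\ref{prop:holder_heat_tree} to $h(\cdot)=a_t(\cdot,u)$, bound the gradient at the intermediate point $y$ via Lemma~\ref{lemma:heat_gradient}, and transfer the exponent from $d_\mathcal{T}(y,u)$ to $d_\mathcal{T}(x,u)$ using $d_\mathcal{T}(x,y)\lesssim t^{1/2}$ (the paper writes this step as $d_\mathcal{T}^2(u,x)\le 2(d_\mathcal{T}^2(u,y)+d_\mathcal{T}^2(y,x))$, which is algebraically the same bookkeeping as your reverse-triangle-plus-squaring argument and yields the same $I=F/2$ with an absorbed constant).
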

\begin{proof}
    From Lemma~\ref{lemma:heat_gradient} and Proposition~\ref{prop:holder_heat_tree}, we have
    \begin{equation*}
        \lvert\sqrt{a_t(x,u)} -\sqrt{a_t(x', u)}\rvert^2 \leq d_\mathcal{T}^2(x,x')\frac{E}{\sqrt{t} \cdot t^{\frac{n}{2}}} \exp\left( -\frac{F\cdot d_\mathcal{T}^2(u,y)}{t}\right)
    \end{equation*}
    for some point $y$ lying on the path of $\mathcal{X}$ between $x$ and $x'$.
    Because $d_\mathcal{T}(x,x') \lesssim t^{\frac{1}{2}}$, the inequality  $d_\mathcal{T}(x,y) \lesssim t^{\frac{1}{2}}$ also holds. 
    Then, we have 
    \begin{equation*}
        d_\mathcal{T}^2(u,x) \leq  2\left(d_\mathcal{T}^2(u,y) + d_\mathcal{T}^2(y,x)\right) \lesssim 2\left(d_\mathcal{T}^2(u,y) + t\right)
    \end{equation*}
    and
    \begingroup
    \allowdisplaybreaks
    \begin{align*}
        &\lvert\sqrt{a_t(x,u)} -\sqrt{a_t(x', u)}\rvert^2 \\
        \leq & d_\mathcal{T}^2(x,x') \frac{E}{\sqrt{t} \cdot t^{\frac{n}{2}}} \exp\left( -\frac{F\cdot d_\mathcal{T}^2(u,y)}{t}\right)\\
        \lesssim & d_\mathcal{T}^2(x,x') \frac{E}{\sqrt{t} \cdot t^{\frac{n}{2}}} \exp\left( -\frac{F\cdot (d_\mathcal{T}^2(u,x) - 2t)}{2t}\right)\\
        \leq & d_\mathcal{T}^2(x,x') \frac{E}{\sqrt{t} \cdot t^{\frac{n}{2}}} \exp\left( -\frac{\frac{F}{2}\cdot d_\mathcal{T}^2(u,x)}{t}\right).
    \end{align*}
    \endgroup
\end{proof}

\paragraph{Theorem \ref{thm:hdd_approx_tree_metric_alpha}.}
For $0<\alpha < \frac{1}{2}$ and sufficient $K$, the multi-scale metric $\hat{T}_\alpha^K$  is equivalent to $ d_\mathcal{T}^{2\alpha}$.
\begin{proof}
    By taking $\beta = 2$, Lemma~\ref{lemma:heat_upperbound} and Lemma~\ref{lemma:heat_lowerbound} assure the condition of the upper and lower bounds of the kernel, respectively. 
    Proposition \ref{prop:tree_heat_cond2} ensures the condition of the H\"{o}lder continuity.
    Therefore, by applying Proposition~\ref{prop:snowflake_tree} we can obtain the theorem. 
\end{proof}

\section{Additional Details on the Experimental Study}\label{app:more_exp}
We present the setups and additional details of the experiments in Section \ref{sec:exp_result}.
Our code is included in the supplemental material. 
The experiments are performed on NVIDIA GTX 1080 Ti GPU. 
A fixed random seed \texttt{1234} is used in all the experiments.

\subsection{Baselines}
The implementation of the competing methods is open-source.   
The code of tree representation (TR) \cite{sonthalia2020tree} can be found in the open-source implementation\footnote{\label{FootNote:comp_method_TR}\url{https://github.com/rsonthal/TreeRep}}.
We use the PyTorch code in \cite{gu2018learning} for Poincar\'{e} embedding (PE) \cite{nickel2017poincare}, the code in \cite{sala2018representation} for the hyperbolic multi-dimensional scaling (hMDS), and PyTorch (PT) code of an SGD-based algorithm, which are all open-source implementations\footnote{\url{{https://github.com/HazyResearch/hyperbolics}}}.
The code of hyperbolic hierarchical clustering (HHC) \cite{chami2020trees} can be found in the open-source implementation\footnote{\url{https://github.com/HazyResearch/HypHC}}. 
For the graph embedding task, we also consider a two-dimensional hyperbolic embedding built by Sarkar’s combinatorial construction (CC-2) \cite{sarkar2011low} and report the hierarchical graph embedding quality in Table~\ref{tab:hierarchical_graph_MAP_distortion}. 

\subsection{Datasets}
We describe the datasets considered in the experiments in Section \ref{sec:exp_result}. 
They are all publicly available. 
(i) In the hierarchical graph embedding, the hierarchical datasets considered here are structured as graphs with vertices and edges. 
Five benchmark datasets in \cite{sala2018representation}\footnote{\url{https://github.com/HazyResearch/hyperbolics/tree/master/data/edges}} are used, including the small balanced tree, the phylogenetic tree, the disease, the CS-PHD, and the Gr-Qc graphs. 
(ii) In the experiment of scRNA-seq, the datasets are high-dimensional data (samples) measured in an ambient space (gene markers). 
Two open-source datasets in \cite{dumitrascu2021optimal}\footnote{\url{https://github.com/solevillar/scGeneFit-python/tree/}\newline \url{62f88ef0765b3883f592031ca593ec79679a52b4/scGeneFit/data_files}} are considered: Zeisel \cite{zeisel2015cell} and CBMC \cite{stoeckius2017simultaneous}. 
The pre-processing protocol of the scRNA-seq datasets adheres to \cite{dumitrascu2021optimal}. 
(iii) In the downstream classification task, four datasets in the UCI Machine Learning repository \cite{Dua2019}\footnote{\url{https://archive.ics.uci.edu/ml/datasets.php}} are utilized, where the datasets consist of high-dimensional data (instances) collected in an ambient space (attributes).
The datasets we used are the Zoo, the Iris, the Glass, and the Image Segmentation datasets, which are used in \cite{chami2020trees} for hierarchical clustering tasks. 

\subsection{Implementation Details}
For the graph embedding task, the diffusion operator is computed by $\mathbf{P} = \exp(-\mathbf{L})$, where $\mathbf{L}$ is the graph Laplacian matrix. This computation is based on the relationship between the diffusion operator and the heat kernel described in Section \ref{sec:background}.
For high-dimensional data, a distance based on the cosine similarity (\texttt{sklearn.metrics.pairwise\_distances}) computed in the ambient space is used in Eq.~\eqref{eq:gaussian_kernel}, and the diffusion operator is constructed as in Section \ref{sec:background}. 
This distance is also used in the distance-based competing methods, and the corresponding cosine similarity is used in the similarity-based baselines. 
We compute HDD and the embedding according to Algorithm \ref{alg:HDD}, with the parameter $\alpha = \frac{1}{2}$ and the maximal scale $K\in\{0, 1, \ldots, 19\}$.

\begin{remark}
    The computation of HDD in Eq.~\eqref{eq:HDD} involves the diffusion operator construction, calculating the multi-scale distribution vectors, computing the scaled Hellinger distance between data points, and the summation over the inverse hyperbolic sine functions. 
    It could be computationally heavy when working with large-size datasets (i.e., more than ten thousand data points). 
    The construction of the diffusion kernels is typically the most computationally intensive step.
    For large-scale datasets, recent methods in diffusion geometry, such as those presented in \cite{moon2019visualizing, tong2021diffusion, shen2022scalability}, have proposed various techniques (downsampling, interpolative approximations, and landmark diffusion, respectively) to significantly reduce the run time and space complexity of diffusion (e.g., $\Tilde{O}(mn)$ in \cite{tong2021diffusion} instead of  $\Tilde{O}(mn^3)$ and $O(n^{1+2\beta})$ in \cite{shen2022scalability} instead of $O(n^{3})$, where $n$ and $m$ represent the number of samples and features in a data matrix, respectively, and $\beta<1$ is a hyperparameter related to the size of the landmark set).
    These techniques can be integrated into HDD, almost as is, enabling the analysis of datasets larger than ten thousand data points using HDD.
\end{remark}

\section{Additional Experimental Results} \label{app:additional_exp}
\subsection{Toy Example}
In Fig.~\ref{fig:demo}, we illustrate HDE and HDD on a toy example consisting of a five-level balanced binary tree.
In Fig.~\ref{fig:demo}(a), we plot the given tree graph $G=(\mathcal{T}, \mathcal{E}, \mathbf{W})$, where $\mathcal{T} = \{x_i\}_{i=0}^{31}$ is the vertex set organized from the root to the leaves of the tree, $\mathcal{E}$ is the edge set connecting tree nodes, and $\mathbf{W}$ is the edge connectivity matrix. 
Then, the diffusion operator $\mathbf{P}$ is computed by $\mathbf{P} = \text{exp}(-\mathbf{L})$, where $\mathbf{L}$ is the graph Laplacian of $G$. 
An illustration of the multi-scale propagated densities associated with $\mathbf{P}$ and diffusion times in a dyadic grid is shown in Fig.~\ref{fig:demo}(b). 
We see that the larger the scale $k$, the more local the support of propagated densities, and the smaller the scale $k$, the wider the support of the densities.  
Fig.~\ref{fig:demo}(c) depicts the HDE. Each row represents the multi-scale representation in $\mathcal{H}$, denoted by $\zeta_K(x_i) = \left[(\hat{\mathbf{x}}_i^0)^\top, (\hat{\mathbf{x}}_i^1)^\top ,\ldots, ( \hat{\mathbf{x}}_i^K)^\top\right]^\top$, of each node. Here as well, we see that as the scale increases (from left to right), the representation becomes more local (concentrating at the diagonal).
Fig.~\ref{fig:demo}(d) presents the obtained HDD of each node, where the nodes are colored according to their level in the binary tree. For visualization, we depict the two-dimensional multi-dimensional scaling (MDS) \cite{cox2008multidimensional} applied to the nodes using HDD as the input distance. 
We observe that HDD indeed recovers the tree graph. 

\begin{figure}[H]
	\centering
        \includegraphics[width=1\textwidth]{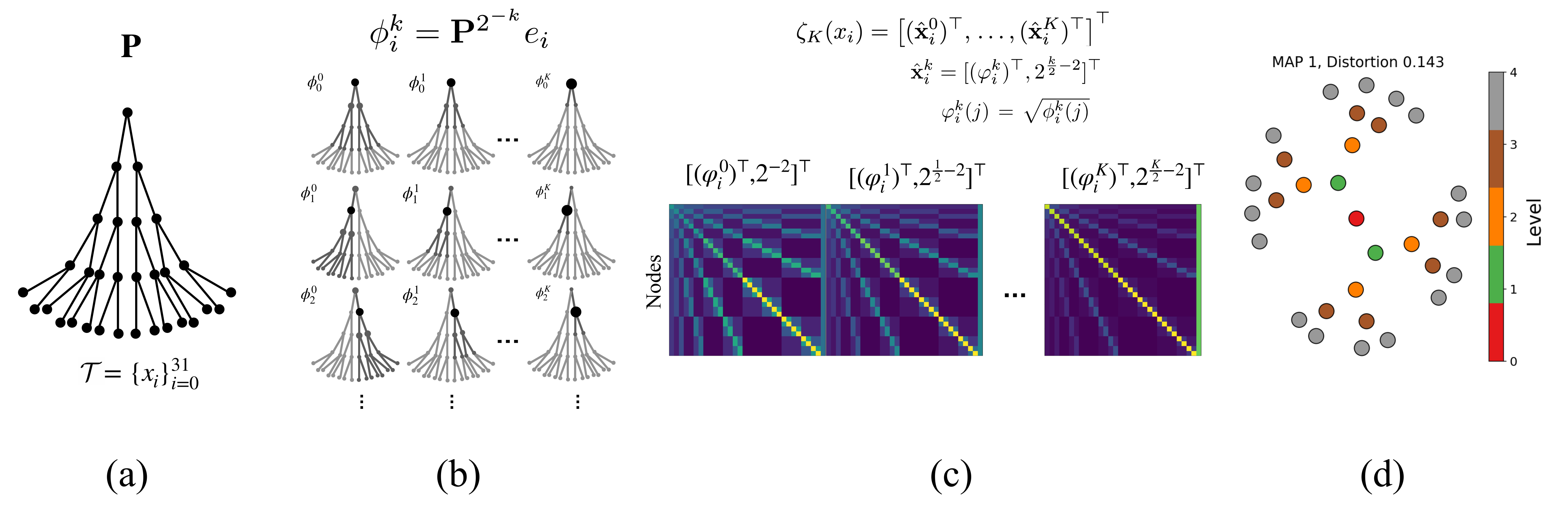}
        \caption{A demonstration of hyperbolic diffusion embedding and distance with an example of a five-level complete binary tree. 
        (a) Given a tree or tree-like graph $G = (\mathcal{T}, \mathcal{E}, \mathbf{W})$, a diffusion operator $\mathbf{P}$ is constructed by the edge connectivity $\mathbf{W}$. 
        (b) The multi-scale propagated densities $\{\phi_i^k\}_{i=0}^{31}$ are computed at each node on $\mathcal{T}$. The rows represent the nodes ordered from the root to the leaves. The columns represent the scale $k$. 
        The size of the nodes and the color of the edges depict the density value at the nodes and the connectivity between them at scale $k$, respectively. 
        (c) The HDE of the nodes is plotted in rows from the small scales (left) to the large scales (right). 
        (d) 2D MDS based on HDD. Each point represents a node. The points are colored by the corresponding levels of the binary tree. 
        }
        \label{fig:demo}
\end{figure}

\subsection{Hierarchical Graph Embedding}

We report the obtained MAP and average distortion for the five hierarchical graph datasets in Table~\ref{tab:hierarchical_graph_MAP_distortion}.
The HDD of the balanced tree, the phylogenetic tree, the disease, the CS-PHD, and the Gr-Qc graphs are respectively obtained with $K=3$, $K=3$, $K=3$, $K=4$, and $K=10$.
We examine the role of maximum scale $K$ in Algorithm \ref{alg:HDD} in the ablation study in Appendix \ref{app:ablation_study}. 
We observe that further increasing the maximum scale does not vary the two fidelity measures, indicating the convergence of our proposed method. 
Table~\ref{tab:hierarchical_graph_MAP_distortion} shows that HDD attains a MAP of 1.0 in the small balanced tree and phylogenetic tree, comparable to the combinatorial representation learned from Sarkar’s construction \cite{sarkar2011low} (CC-2).
For tree-like and dense graphs, HDD outperforms the optimization-based approaches PE, PT, and HHC. 
However, HDD has larger average distortions than TR, hMDS, PT, and CC-2 for most datasets. 
We remark that HDD is strictly better in terms of MAP and average distortion than PE and HHC. 
Arguably, there is a trade-off between the two fidelity measures as noted in \cite{sala2018representation}. 
Our method leans more toward preserving local structure (MAP) at the expense of the global structure (average distortion). 
In Table.~\ref{tab:hierarchical_graph_runtime}, we summarize the execution time for hierarchical representation learning on graphs in this experiment. 
We observe that HDD is the second or the third fastest algorithm for extracting hierarchical information among the five graph datasets. 
While HDD is slower than the divide-and-conquer tree representation (TR), the obtained advantage in MAP values shown in Fig.~\ref{fig:graph_hierarchical_graph} and Table~\ref{tab:hierarchical_graph_MAP_distortion} are significant.
In addition, we report that HDD is much more efficient than the optimization-based methods: PE, PT, and HHC.

\begin{table}[H]
\captionof{table}{MAP and average distortion ($D_{\text{avg}}$) of hierarchical graph embedding. } 
\begin{center}
\resizebox{0.68\textwidth}{!}{%
\begin{tabular}{lcccccccccr}
\toprule
&  \multicolumn{1}{c}{\quad}   & \multicolumn{1}{c}{HDD} & \multicolumn{1}{c}{TR} & \multicolumn{1}{c}{PE-2} &   \multicolumn{1}{c}{PE}   &\multicolumn{1}{c}{hMDS}  &   \multicolumn{1}{c}{PT} & \multicolumn{1}{c}{HHC} &   \multicolumn{1}{c}{CC-2} \\
\midrule
\multirow{5}{*}{\STAB{\rotatebox[origin=c]{90}{MAP}}} & Balanced tree & \textbf{1.0} & 0.942 & 0.830 & 0.861  & \textbf{1.0} & 0.964 & 0.901 & \textbf{1.0}\\
& Phylo tree & \textbf{1.0} & 0.931 & 0.696 & 0.724& 0.682 & 0.902 & 0.884  & \textbf{1.0} \\
& Diseases & \textbf{0.970} & 0.873 & 0.611 & 0.912 & 0.931 & 0.943 & 0.831 & 0.808 \\
& CS-PhD & \textbf{0.999} & 0.954 & 0.623 & 0.781& 0.562 & 0.682 & 0.774 & 0.792 \\
& Gr-QC & \textbf{0.930} & 0.701 & 0.564 & 0.763  & 0.649 & 0.702 & 0.685 & 0.684\\
\midrule
\multirow{5}{*}{\STAB{\rotatebox[origin=c]{90}{$D_{\text{avg}}$}}} & Balanced tree & 0.144 & 0.102 & 0.446 & 0.229  & 0.062 & 0.131 & 0.284 & \textbf{0.010}\\
& Phylo tree & 0.520 & 0.304 & 0.841 & 0.641  & 0.087 & 0.207 & 0.696 & \textbf{0.009}\\
& Diseases & 0.206 & 0.187 & 0.426 & 0.694 & 0.123 & \textbf{0.072} & 0.303  & 0.122\\
& CS-PhD & 0.274 & 0.194 & 0.498 & 0.442 & \textbf{0.162} & 0.243 & 0.382  & 0.288\\ 
& Gr-QC & 0.179 & 0.202 & 0.298 & 0.246  & 0.542 & \textbf{0.108} & 0.274& 0.334\\
\bottomrule
\end{tabular}}
\end{center}
\label{tab:hierarchical_graph_MAP_distortion}
\end{table}

\begin{table}[H]
\caption{Computation time (in seconds) of hierarchical graph embedding.  } 
\begin{center}
\resizebox{1\textwidth}{!}{%
\begin{tabular}{lllccccccccccccr}
\toprule
&  \multicolumn{2}{c}{Dataset \scriptsize{($\#$Vertices, $\#$Edges)}} & HDD & TR  & PE-2 & PE &  hMDS   & PT & HHC & CC-2\\
\cmidrule{4-11}
&  Balanced tree &(40, 39)& 5.89 $\cdot 10^0$ & \textbf{4.41 $\cdot 10^{-1}$} & 1.22 $\cdot 10^2$ &  8.89 $\cdot 10^2$ &  4.92 $\cdot 10^0$ &   9.48 $\cdot 10^2$ & 7.82 $\cdot 10^1$ & 1.63 $\cdot 10^0$ \\
& Phylo tree &(344, 343) & 4.01 $\cdot 10^1$ & \textbf{9.83 $\cdot 10^{-1}$} & 8.74 $\cdot 10^2$ &  1.24 $\cdot 10^3$ &  6.03 $\cdot 10^1$ &  6.33 $\cdot 10^3$ & 1.62 $\cdot 10^2$ & 2.17 $\cdot 10^0$\\
& Diseases &(516, 1188) & 4.17 $\cdot 10^1$ &\textbf{ 1.02 $\cdot 10^0$} & 1.23 $\cdot 10^3$ &  3.09 $\cdot 10^4$ &  5.21 $\cdot 10^1$ &  1.68 $\cdot 10^4$ &  2.33 $\cdot 10^2$ & 4.42 $\cdot 10^0$\\
& CS-PhD &(1025, 1043) & 6.44 $\cdot 10^1$ &\textbf{ 1.90 $\cdot 10^0$} & 1.78 $\cdot 10^4$ &  5.62 $\cdot 10^4$ &  9.63 $\cdot 10^1$ &  2.50 $\cdot 10^4$ &  5.43 $\cdot 10^2$ &  7.93 $\cdot 10^0$\\
& Gr-QC  &(4158, 13428) & 9.12 $\cdot 10^1$ & \textbf{2.03 $\cdot 10^0$} & 2.89 $\cdot 10^4$ &  3.13 $\cdot 10^5$ &  1.94 $\cdot 10^2$ &  3.43 $\cdot 10^4$ &  1.92 $\cdot 10^3$ &  9.37 $\cdot 10^1$\\
\bottomrule
\end{tabular}}
\end{center}
\label{tab:hierarchical_graph_runtime}
\end{table}


\subsection{Ablation Study}\label{app:ablation_study}

We conduct an ablation study to investigate the effectiveness of the different components in our method. 
First, we compare HDD with the $\ell_2$ distance in the product manifold $\mathcal{H}$, given by
\begin{equation}
d_{\mathcal{H}}^{\ell_2}\left( \zeta_K(x_i),  \zeta_K(x_{i'})\right) 
    = \sum_{k=0}^K   \left( 2\sinh^{-1} \left(2^{-k \alpha + 1}\left\lVert \varphi_i^k - \varphi_{i'}^k \right\rVert_2\right)\right)^2,
\end{equation}
where $K\in\mathbb{Z}_0^+$ is the maximum scale defined in the same way as in HDD. 
Note that $d_\mathcal{H}^{\ell_2}$ is equipped with a Riemannian structure \cite{ficken1939riemannian}. 
In addition, we test single scales in the factor manifold $\mathbb{H}^{n+1}$ in the product manifold $\mathcal{H}$, given by 
\begin{equation}
    d_{\mathbb{H}^{n+1}}(\hat{\mathbf{x}}_i^{k},\hat{\mathbf{x}}_{i'}^{k}) =2\sinh^{-1}\left( 2^{-k\alpha + 1}\left\lVert \varphi_i^{k} - \varphi_{i'}^{k} \right\rVert_2\right), 
\end{equation}
where $k\in\mathbb{Z}_0^+$ represents the scale. 

The results, comparing HDD, $d_\mathcal{H}^{\ell_2}$, and the single scale embedding are presented in Fig.~\ref{fig:ablation_map_distortion_graph} for the graph embedding experiment presented in Section \ref{sec:hierarchial_graph_embedding}. 
The five plots depict the distortion-MAP graph for the five datasets.
In each plot, the blue circle, green plus, and red star represent the result of HDD, $d_\mathcal{H}^{\ell_2}$, and the single scale embedding, respectively. 
The color of the points represents the parameter $K$ (resp. $k$) for HDD and $d_\mathcal{H}^{\ell_2}$ (resp. single scale embedding).  
Note that when $k = K = 0$, HDD and the single scale embedding coincide (i.e., $d_\mathcal{H}^{\ell_1}\left( \zeta_0(x_i),  \zeta_0(x_{i'})\right)  =  d_{\mathbb{H}^{n+1}}(\hat{\mathbf{x}}_i^{0},\hat{\mathbf{x}}_{i'}^{0}) $). 
We observe that HDD outperforms the other two alternatives, indicating that, indeed, the use of the $\ell_1$ norm and the multiple scales in Eq.~\eqref{eq:HDD} has a critical contribution to the extraction of the hierarchical structure, as guaranteed in Theorem \ref{thm:hdd_approx_tree_metric_alpha}.
In addition, based on the results of HDD and $d_\mathcal{H}^{\ell_2}$, we find that the larger $K$ is, the better the embedding quality is. 
Conversely, the role of $k$ plays an opposite effect in the single embedding, as conveyed in Proposition~\ref{prop:exp_growth}. 
Last, we see that the results of HDD in terms of MAP and average distortion converge for sufficiently large $K$, providing empirical support to the approximation in Eq.~\eqref{eq:first_K_term}. 



\begin{figure}[H]
	\centering
        \includegraphics[width=1\textwidth]{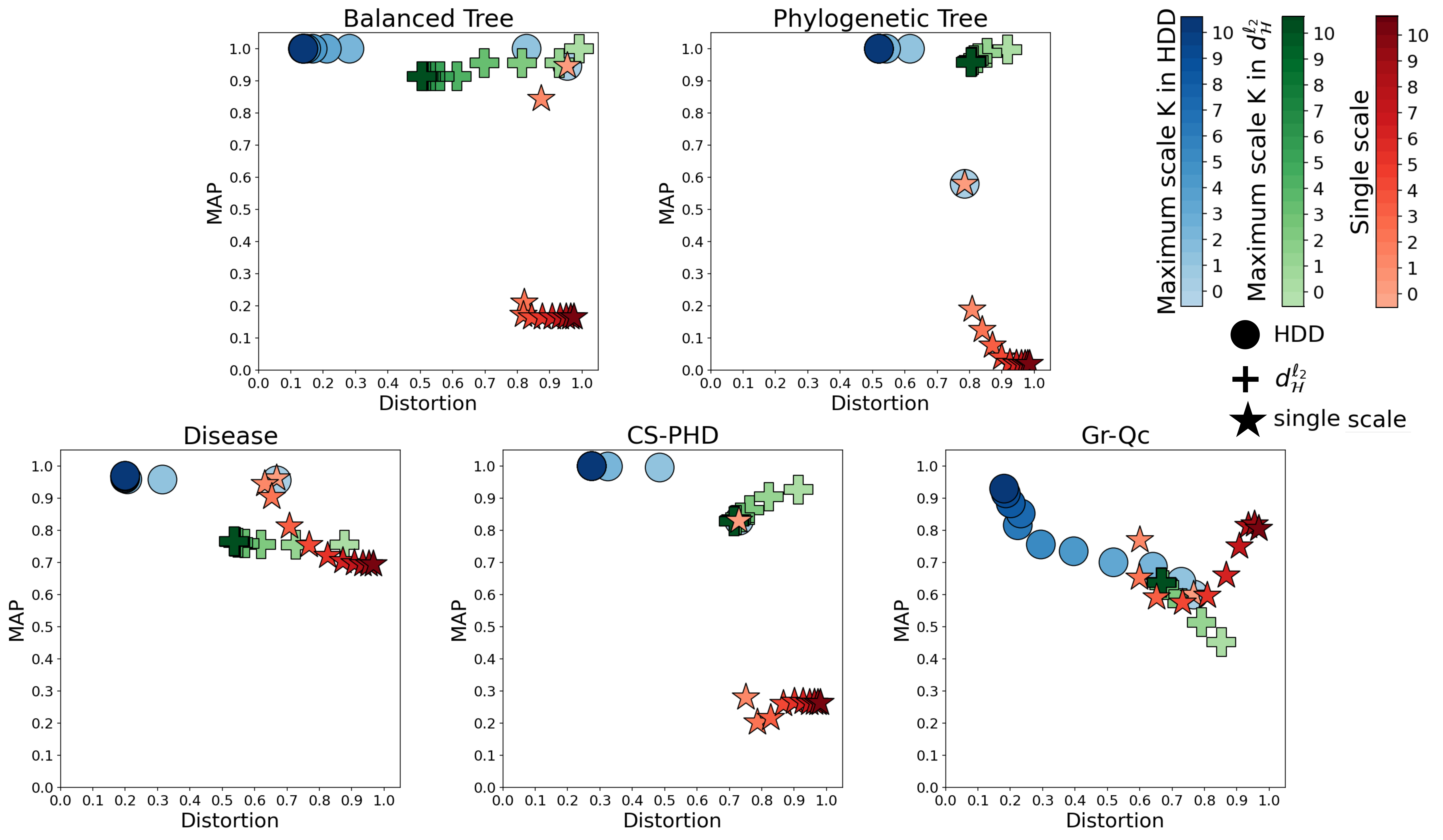}
        \caption{
         Distortion-MAP plots of the five datasets for graph embedding. In each plot, the blue circle,
green plus, and red star represent the result of HDD, $d_\mathcal{H}^{\ell_2}$, and the single scale embedding, respectively. The color of the points
represents the parameter $K$ (resp. $k$) for HDD and $d_\mathcal{H}^{\ell_2}$ (resp. single scale embedding).}
        \label{fig:ablation_map_distortion_graph}
\end{figure}

In addition, we conducted experiments that compare the performance of the proposed HDD with a variant in which the hyperbolic distance is replaced by the following Euclidean distance
\begin{equation}
    d_{\text{Euc}}(i,i') = \left\lVert \zeta_K(x_i)- \zeta_K(x_{i'}) \right\rVert_2. 
\end{equation}
    Our results are presented in Table~\ref{tab:hierarchical_graph_MAP_distortion_compare_to_Euc}, where we can see that using the Euclidean distance does not capture the hierarchical structure. 
    This empirical evidence demonstrates the importance of the hyperbolic distance in our method, showing that the proposed construction of HDD is essential to the recovery of the hierarchy.

\begin{table}[H]
\captionof{table}{MAP and average distortion ($D_{\text{avg}}$) of hierarchical graph embedding using the Euclidean distance. } 
\begin{center}
\resizebox{0.6\textwidth}{!}{%
\begin{tabular}{lcccccccccr}
\toprule
&  \multicolumn{1}{c}{\quad}   & \multicolumn{1}{c}{Balanced tree} & \multicolumn{1}{c}{Phylo tree } & \multicolumn{1}{c}{Diseases} &   \multicolumn{1}{c}{CS-PhD } &  \multicolumn{1}{c}{Gr-QC }  \\
\midrule
\multirow{2}{*}{\STAB{\rotatebox[origin=c]{90}{MAP}}} & HDD & 1.0 & 1.0  & 0.970 & 0.999 & 0.930 \\
& Euc & 0.219 & 0.154 & 0.132  & 0.116 & 0.228\\
\midrule
\multirow{2}{*}{\STAB{\rotatebox[origin=c]{90}{$D_{\text{avg}}$}}}  & HDD & 0.144 & 0.520 & 0.206  & 0.274 &0.179 \\
& Euc & 0.694 & 0.712 & 0.736 & 0.688 & 0.781\\
\bottomrule
\end{tabular}}
\end{center}
\label{tab:hierarchical_graph_MAP_distortion_compare_to_Euc}
\end{table}

\subsection{Single-Cell Gene Expression Data}
The obtained MAP, average distortion, and classification accuracy of the scRNA-seq datasets are reported in Table~\ref{tab:sc_rna_performance}.
In the gene expression data, the maximum scales used in Algorithm \ref{alg:HDD} for the Zeisel and the CBMC datasets are set to $K=9$ and $K=13$, respectively. 
We note that they are slightly larger than the scales used in the graph embedding experiment due to the larger size and dimensionality of the data. 
Observing the table, we see that HDD achieves the best MAP and the second-best average distortion. 
In terms of classification accuracy, HDD outperforms all the competing methods in both scRNA-seq datasets. 
We report the run time of HDD and the competing baselines in Table~\ref{tab:sc_rna_runtime}.
Note that the optimization-based methods, PE, PT, and HHC, require a much longer time to find the hierarchical representation, similar to the hierarchical graph embedding task in Table~\ref{tab:hierarchical_graph_runtime}. 
Yet, the additional computational time does not lead to improved embedding quality and downstream classification accuracy. 
Our HDD obtains a slightly larger distortion than TR, and it is slower than TR. Yet, its advantage in terms of MAP and classification accuracy is significant, as illustrated in Fig.~\ref{fig:graph_gene}, Fig.~\ref{fig:graph_gene_classification_acc}, and Table~\ref{tab:sc_rna_performance}. 

\begin{table}[H]
\caption{MAP, average distortion ($D_{\text{avg}}$), and classification accuracy of scRNA-seq datasets. } 
\begin{center}
\resizebox{0.8\textwidth}{!}{%
\begin{tabular}{lcccccccccr}
\toprule
&  \multicolumn{1}{c}{\quad}   & \multicolumn{1}{c}{HDD} & \multicolumn{1}{c}{TR} & \multicolumn{1}{c}{PE-2} & \multicolumn{1}{c}{PE} &\multicolumn{1}{c}{hMDS}  &   \multicolumn{1}{c}{PT} &  \multicolumn{1}{c}{HHC} \\
\midrule
\multirow{3}{*}{\STAB{\rotatebox[origin=c]{90}{Zeisel}}} & MAP & \textbf{0.996} & 0.803 & 0.779 &  0.788  &   0.710  & 0.542 & 0.853\\
& $D_{\text{avg}}$ & 0.169 & \textbf{0.121} &  0.278 & 0.223  & 0.213 & 0.581 & 0.482\\
& ACC. & \textbf{0.862\scriptsize{$\pm$0.014}}   & 0.664\scriptsize{$\pm$0.039} & 0.712\scriptsize{$\pm$0.018} &  0.743\scriptsize{$\pm$0.018}  &  0.802\scriptsize{$\pm$0.041} & 0.597\scriptsize{$\pm$0.098} & 0.811\scriptsize{$\pm$0.039}\\
\midrule
\multirow{3}{*}{\STAB{\rotatebox[origin=c]{90}{CBMC}}} & MAP &\textbf{ 0.979} & 0.713 & 0.749 &  0.817  &  0.789 & 0.760 & 0.806\\
& $D_{\text{avg}}$ & 0.297 & \textbf{0.254} &  0.522 & 0.489  &  0.364 & 0.473 & 0.323\\
& ACC. & \textbf{0.832\scriptsize{$\pm$0.023}}   & 0.741\scriptsize{$\pm$0.037} & 0.739\scriptsize{$\pm$0.061} &  0.752\scriptsize{$\pm$0.019}  &  0.733\scriptsize{$\pm$0.039} & 0.648\scriptsize{$\pm$0.027} & 0.788\scriptsize{$\pm$0.029}\\
\bottomrule
\end{tabular}}
\end{center}
\label{tab:sc_rna_performance}
\end{table}

\begin{table}[H]
\caption{Execution time (in seconds) of scRNA-seq datasets.} 
\begin{center}
\resizebox{0.8\textwidth}{!}{%
\begin{tabular}{lllccccccccccccr}
\toprule
&  \multicolumn{2}{c}{Dataset \scriptsize{($\#$Points, $\#$Classes)}} & HDD & TR  & PE-2 & PE &  hMDS   & PT & HHC\\
\cmidrule{4-10}
& Zeisel &(3005, 7)& 1.13 $\cdot 10^2$ & \textbf{1.09 $\cdot 10^0$} & 2.13 $\cdot 10^4$ &  2.21 $\cdot 10^4$  &  1.07 $\cdot 10^2$ & 1.07 $\cdot 10^4$ & 1.69 $\cdot 10^3$ \\
& CBMC &(8617, 13) & 5.91 $\cdot 10^2$  & \textbf{2.48 $\cdot 10^0$} & 4.26 $\cdot 10^4$ &  4.66 $\cdot 10^4$  &  4.13 $\cdot 10^2$ & 3.73 $\cdot 10^4$ &  4.54 $\cdot 10^3$ \\
\bottomrule
\end{tabular}}
\end{center}
\label{tab:sc_rna_runtime}
\end{table}

\end{document}